\newtheorem{lemma}{Lemma}
\newtheorem{theorem}{Theorem}
\newtheorem{definition}{Definition}
\newtheorem{question}{Question}
\title{Near-Optimal Regret Bounds for Multi-batch Reinforcement Learning}
\author{
Zihan Zhang\\
Tsinghua University \\
\texttt{zihan-zh17@mails.tsinghua.edu.cn} \\
\And
Yuhang Jiang\\
Tsinghua University \\
\texttt{jiangyh19@mails.tsinghua.edu.cn} \\
\AND
Yuan Zhou\\ 
Tsinghua University \\
\texttt{yuan-zhou@tsinghua.edu.cn}
\And
Xiangyang Ji \\
Tsinghua University \\
\texttt{xyji@tsinghua.edu.cn}
}
\begin{document}

\maketitle

\begin{abstract}
	In this paper, 
		we study the episodic reinforcement learning (RL) problem modeled by finite-horizon Markov Decision Processes (MDPs) with constraint on the number of batches. The multi-batch reinforcement learning framework, where the agent is required to provide a time schedule to update policy before everything, which is particularly suitable for the scenarios where the agent suffers extensively from changing the policy adaptively. Given a finite-horizon MDP with $S$ states, $A$ actions and planning horizon $H$, we design a computational efficient algorithm to achieve near-optimal regret of $\tilde{O}(\sqrt{SAH^3K\ln(1/\delta)})$\footnote{$\tilde{O}(\cdot)$ hides logarithmic terms of $(S,A,H,K)$} in $K$ episodes using $O\left(H+\log_2\log_2(K) \right)$ batches with confidence parameter $\delta$. 
		To our best of knowledge, it is the first $\tilde{O}(\sqrt{SAH^3K})$ regret bound with $O(H+\log_2\log_2(K))$ batch complexity. Meanwhile, we show that to achieve $\tilde{O}(\mathrm{poly}(S,A,H)\sqrt{K})$ regret, the number of batches is at least $\Omega\left(H/\log_A(K)+ \log_2\log_2(K) \right)$, which matches our upper bound up to logarithmic terms.
		
			Our technical contribution are two-fold: 1) a near-optimal design scheme to explore over the unlearned states; 2) an computational efficient algorithm to explore certain directions with an approximated transition model.
\end{abstract}

\section{Introduction}

	In reinforcement learning (RL), the learning agent interacts with the environment to maximize the total reward by making sequential decisions. The agent typically has to achieve two seemingly very different goals: to try as many actions and reach as many states as possible so as to learn more information about the environment (a.k.a.~\emph{exploration}) and to follow the policy that collects the high rewards according to the learned information (a.k.a.~\emph{exploitation}). To address this exploration-exploitation dilemma and achieve the near-optimal regret bounds, the agent usually needs to adjust his/her strategies \emph{adaptively} based on the historical trajectories and make frequent policy changes \citep{azar2017minimax,zanette2019tighter,zhang2020almost}.
	
	On the other hand, however, too much adaptivity requirement usually leads to lower level of parallelism, impeding the large-scale deployment of the RL algorithms (which is often in a distributed manner). Frequent policy updates also suffer the cost of re-deploying policies in many practical applications.  
	For example, in medical domains, it often requires complete discussion among many experts to change the treatment plans, which is not affordable in terms of both time and monetary cost \citep{lei2012smart,almirall2012designing,almirall2014introduction}; in RL for hardware placement \citep{mirhoseini2017device}, rewriting the program into the hardware for too many times is strongly discouraged.
	Similar challenges also arise in applying RL to personalized recommendation system \citep{yu2019convergent} and database optimization \citep{krishnan2018learning}.
	
	In such cases, the learning agent should minimize the number of policy switches while keeping the regret affordable.  \cite{bai2019provably} first proposed the provably efficient RL algorithms with low switching costs under the $Q$-learning algorithmic framework together with the lazy update techniques. However, their method needs to actively monitor the data in real time to determine whether a policy change is to be initiated. In other words, although the number of policy switches by \citep{bai2019provably} is low, the (usually long) time periods when the same policy is used still cannot be parallelized due to the policy-change trigger in their algorithms which is intrinsically sequential.
	
	In order to address this problem, we propose and study under the framework of \emph{multi-batch RL}, where the learning agent has to determine the number of batches and length of each batch before the learning process starts,\footnote{In contrast, \cite{bai2019provably} can update the policy at any time.} and uses as few batches as possible to achieve a low regret. Multi-batch RL algorithms can be easily deployed in a distributed fashion as the episodes during the same batch can be easily and fully parallelized. The idea of batch learning is also being widely practiced.  For example, in medical trials, the medical center usually collects the data during a fixed time period among a batch of patients and then designs the experiment for the next phase based on the learned information in previous phases \citep{lei2012smart,almirall2012designing,almirall2014introduction}.

	Formally, we define multi-batch RL and \emph{batch complexity} as below.
	\begin{definition}[Multi-Batch RL with complexity $M$]\label{def_batch_RL}
		The agent determines a group of lengths $\{t_m\}_{m=1}^M$ such that $\sum_{m=1}^M t_m = K$ before the learning process starts. For $m=1,2,\ldots,M$, the agent sets a policy $\pi^m$
		and then follows $\pi^m$ for $t_m$ episodes.
	\end{definition}

	We highlight that an upper bound for batch complexity implies the same upper bound for global switching cost, since each policy switch means a new batch.  It is also worth noting that the proposed batch RL framework is fully parallelizable during each batch for the applications where dataset comes in batch (e.g., clinical trial). Like other RL settings,  we have the natural and interesting question:
	\begin{question}\label{question:main}
		Is it possible to achieve near optimal batch complexity, while keeping the  regret $\tilde{O}(\sqrt{SAH^3K})$.
	\end{question}
	We provide a positive answer for Question~\ref{question:main}, which we state as below.
	\begin{theorem}\label{thm:main} Let\footnote{Throughout the paper we use $\iota$ to denote $\ln(2/\delta)$.} $\iota = \ln(2/\delta)$.
		For any episodic MDP,  with probability $1-\delta$, under Algorithm~\ref{alg:main} the regret in $T$ episodes is bounded by $$\mathrm{Regret}(T)  \leq \tilde{O}\left(     \sqrt{SAH^3K\iota^2} +  S^{\frac{15}{4}}A^{\frac{9}{8}}H^{\frac{17}{8}}\iota^{\frac{5}{8}}K^{\frac{3}{8}}+   S^{\frac{19}{4}}A^{\frac{13}{4}}H^{\frac{33}{4}}\iota K^{\frac{1}{4}}+ S^{\frac{11}{2}}A^{\frac{9}{2}}H^{\frac{17}{2}}\iota \right),$$ and the batch complexity is bounded by $O(H +\log_2\log_2(K))$. Moreover, the computational cost of Algorithm~\ref{alg:main} is $\tilde{O}(S^4AHK^3+S^3A^2H^2K^3)$.
	\end{theorem}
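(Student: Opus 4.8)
The plan is to prove the three assertions—regret, batch complexity, and computational cost—separately, with the regret bound being the core. I would begin with a per-batch regret decomposition. Since the policy $\pi^m$ is held fixed throughout batch $m$, that batch contributes exactly $t_m\left(V_1^*(s_1) - V_1^{\pi^m}(s_1)\right)$ to the regret in expectation, plus a martingale fluctuation term controlled by Azuma--Hoeffding. Summing over the $M = O(H+\log_2\log_2 K)$ batches reduces the problem to bounding the per-batch suboptimality gap $V_1^* - V_1^{\pi^m}$ together with the concentration error, the latter of which will aggregate into the $\sqrt{SAH^3K\iota^2}$ scale.

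Next I would establish optimism. By construction $\pi^m$ is greedy with respect to an optimistic value $\overline{V}^m$ built from empirical transitions plus exploration bonuses, so that on the good event $\overline{V}_1^m(s_1)\geq V_1^*(s_1)$ and the gap is dominated by the expected sum of bonuses along the trajectory distribution induced by $\pi^m$. To obtain the optimal $H^3$ (rather than $H^4$) dependence in the leading term, the bonus must be a \emph{Bernstein}-type, variance-aware radius scaling like $\sqrt{\mathrm{Var}^m(s,a)\,\iota / n^m(s,a)}$, so that the law of total variance collapses $\sum \mathrm{Var}$ to order $H^2$. The subtlety specific to the batch setting is that $n^m(s,a)$ is \emph{frozen} within batch $m$; the confidence radii must therefore be inflated by a logarithmic factor, which is precisely the source of the extra $\iota$ (the $\iota^2$ under the root). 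I would then relate the frozen counts accumulated through batch $m-1$ to the visitation measure that $\pi^m$ actually realizes during batch $m$.

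The heart of the argument—and the expected main obstacle—is the treatment of insufficiently-visited pairs via the two stated technical contributions. I would split each $(s,a)$ into \emph{learned} and \emph{unlearned} relative to batch $m$; for learned pairs a pigeonhole/telescoping argument over the doubly-exponential schedule of batch lengths yields the $\sqrt{SAH^3K\iota^2}$ leading term. For unlearned pairs I would invoke the near-optimal design scheme to argue that the total mass $\pi^m$ places on under-visited pairs is small, so their contribution folds into the lower-order terms $S^{15/4}A^{9/8}H^{17/8}\iota^{5/8}K^{3/8}$ and the rest. This forces me to confront the genuinely hard step: because the algorithm cannot re-plan mid-batch, it must commit to an exploration policy computed from an \emph{approximate} transition model $\widehat{P}$. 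I would bound the gap between the visitation measure predicted by $\widehat{P}$ and the true one through a simulation-lemma argument, showing the model error propagates at most linearly in $H$ and is controlled by the per-layer accuracy already guaranteed for the previously learned layers. Ensuring this error enters only the polynomial lower-order terms, rather than corrupting the $\sqrt{K}$ leading term, is where the doubly-exponential batch schedule and the per-layer accuracy targets must be tightly coordinated, and is the most delicate part of the analysis.

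Finally, the batch complexity follows by counting: $H$ batches are spent learning the layers one at a time in an initial exploration phase, and the subsequent exploitation phase uses a schedule whose lengths grow doubly-exponentially, so batch $m$ covers on the order of $K^{1-2^{-m}}$ episodes and only $O(\log_2\log_2 K)$ additional batches suffice, giving total complexity $O(H+\log_2\log_2 K)$. The computational bound follows by accounting for the exploration-design optimization (a convex/linear program over the reachable state-action polytope) and the planning step executed in each batch, each polynomial in $S,A,H,K$; tallying these across all batches and episodes yields the stated $\tilde{O}(S^4AHK^3 + S^3A^2H^2K^3)$.
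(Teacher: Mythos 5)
Your proposal has a structural gap at its core: you execute, in each exploitation batch, the policy that is greedy with respect to an optimistic value function, and you bound that batch's regret by the expected bonus sum along this policy's own trajectory distribution, measured with the counts frozen at the start of the batch. In the pre-committed batch setting this argument breaks down. The bonus-sum bound is only meaningful if the data already collected covers the visitation distribution of the newly computed greedy policy, and nothing in your scheme guarantees this. Worse, the greedy optimistic policy is systematically drawn \emph{toward} under-visited state-actions (where bonuses are largest), and since it is frozen for an entire batch of length $K_m\approx K^{1-2^{-m}}$, it can spend the whole batch in a region that turns out to be bad, incurring $\Omega(K_m)$ regret in that single batch with no mechanism to self-correct mid-batch. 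Adaptive or lazily-triggered optimism avoids this precisely because counts, and hence the policy, update as soon as the region is visited; with lengths fixed in advance and only $O(\log_2\log_2 K)$ exploitation batches, optimism alone can leave the regret linear in $K$. This is why the paper does not run an optimistic greedy policy at all: the policy executed in batch $m+1$ is itself the designed exploration mixture $\pi^{m+1}=\mathtt{Design}(\mathcal{P}^m)$, a mixture of up to $K^3$ policies from the surviving set $\Pi(r,\mathcal{P}^m)$, chosen so that the data it collects covers the occupancy measure of \emph{every} surviving policy with ratio $O(SAH\log K)$ (Lemma~\ref{lemma:design} and \eqref{eq:wxxx2}). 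Regret control then rests on two facts your route cannot reproduce: (a) the executed mixture is itself a surviving policy, hence its value is within $\mathrm{gap}^m$ of optimal; and (b) the coverage property shrinks the confidence intervals of all surviving policies uniformly (Lemma~\ref{lemma:gapbound}), giving $\mathrm{gap}^{m+1}\approx\sqrt{SAH^3\iota^2/K_{m-2H}}$ and hence per-batch regret $K_{m+1-2H}\,\mathrm{gap}^{m+1}\approx\sqrt{SAH^3K\iota^2}$. You do invoke the ``near-optimal design scheme,'' but only as a side argument that the greedy policy puts little mass on under-visited pairs; the scheme provides no such guarantee about the greedy policy --- it is a prescription for \emph{which policy to execute}.

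A secondary but nontrivial gap concerns computation: you describe the exploration-design step as ``a convex/linear program over the reachable state-action polytope,'' but the min-max coverage objective $\min_{\tilde{\pi}}\max_{\pi\in\Pi(r,\mathcal{P})}\sum_{h,s,a} d_h^{\pi}(s,a)/d_h^{\tilde{\pi}}(s,a)$ is constrained to the surviving set, which is cut out by confidence-interval inequalities over an exponentially large policy class and is not an occupancy-measure LP. The paper needs the two raw-exploration stages ($2H$ batches, not the $H$ you budget) to build a \emph{tight} clipped confidence region (so that occupancy measures are known up to constant factors), and then the iterative $\mathtt{Policy\,Search}$/$\mathtt{Sum}$ machinery (Lemmas~\ref{lemma:cei11} and~\ref{lemma:stp}) to approximate the design; this is also where the stated $\tilde{O}(S^4AHK^3+S^3A^2H^2K^3)$ computational cost actually comes from.
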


On the other hand, we show a lower bound of batch complexity as below.
	\begin{theorem}\label{thm:lb}
		For any algorithm with $O(\mathrm{poly}(S,A,H)\sqrt{K})$ regret bound, the batch complexity is at least $\Omega(H/\log_A(K)+\log_2\log_2(K))$.
	\end{theorem}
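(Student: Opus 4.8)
The plan is to prove the two terms $\Omega(\log_2\log_2(K))$ and $\Omega(H/\log_A(K))$ by two separate families of hard instances and then combine them. The combination is immediate: in multi-batch RL the schedule $\{t_m\}_{m=1}^M$, and in particular the number of batches $M$, must be committed to before the learning process and therefore before the instance is revealed; hence $M$ must be large enough to handle the worst instance over \emph{all} MDPs. If one family forces $M\ge c_1\log_2\log_2(K)$ and another forces $M\ge c_2 H/\log_A(K)$, then any algorithm with $O(\mathrm{poly}(S,A,H)\sqrt{K})$ regret on all instances satisfies $M\ge \max(c_1\log_2\log_2(K),\,c_2 H/\log_A(K)) = \Omega(\log_2\log_2(K)+H/\log_A(K))$.

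For the $\Omega(\log_2\log_2(K))$ term I would reduce to the batched multi-armed bandit lower bound (Gao, Han, Ren and Zhou). I restrict attention to MDP instances with $S,A,H=O(1)$ that merely simulate a two-armed bandit, so that $\mathrm{poly}(S,A,H)$ is a constant and the regret target is $O(\sqrt{K})$. Writing the fixed grid as cumulative counts $0=q_0<q_1<\cdots<q_M=K$, the key fact is that a gap of size $\Delta$ can only be resolved after $\Theta(1/\Delta^2)$ pulls; planting, for each cell, an instance with gap $\Delta_j\approx 1/\sqrt{q_j}$ forces the algorithm either to commit to the wrong arm for an entire cell---incurring regret $\omega(\sqrt{K})$---or to have consecutive grid points grow no slower than $q_{j+1}\lesssim q_j^2$. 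A doubly-exponential grid reaching $q_M=K$ requires $M=\Omega(\log_2\log_2(K))$, which is exactly the existing batched-bandit argument that I would invoke.

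For the $\Omega(H/\log_A(K))$ term I would use a \emph{combination-lock} MDP. States are arranged in $H$ layers, each with one good state and absorbing bad states; at the good state of layer $h$ a unique correct action $a_h^\star$ advances to the good state of layer $h+1$, while the other $A-1$ actions fall into bad states, and reward $\Theta(1)$ is collected only upon reaching the good state of the last layer. The crux is an information-theoretic claim about a single batch: since the policy $\pi^m$ is \emph{fixed} throughout batch $m$ and depends only on data from batches $1,\dots,m-1$, if the correct actions are known only up to depth $d$ then reaching the good state of layer $d+j$ has probability at most $A^{-j}$ per episode (each not-yet-learned layer contributes a factor $\le 1/A$ on average over a uniformly random placement of $a_h^\star$), so its expected number of visits in the batch is at most $t_m A^{-j}\le K A^{-j}$. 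As $\Omega(A)$ visits are required to identify one new correct action, the learned frontier advances by at most $j=O(\log_A(K))$ layers per batch. Hence after $m-1$ batches at most $(m-1)\,O(\log_A(K))$ layers are known, and whenever $M<c_2 H/\log_A(K)$ every batch runs a policy that reaches the terminal reward with probability at most $A^{-\Omega(H)}$; the total regret is then $\Omega(K)$, contradicting the $O(\mathrm{poly}(S,A,H)\sqrt{K})$ bound for large $K$.

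The step I expect to be the main obstacle is making the ``frontier advances by at most $\log_A(K)$ layers per batch'' claim rigorous against \emph{arbitrary} adaptive and stochastic policies rather than the greedy exploration strategy sketched above. This calls for a coupling/Fano argument: compare the true instance against a perturbation in which the correct action at some not-yet-visited layer is re-randomized, and bound the total-variation distance between the induced trajectory distributions by the expected number of informative visits to that layer, which is at most $K A^{-j}$. Propagating this bound simultaneously across all layers and all batches, and---crucially---converting ``insufficient information to identify $a_h^\star$'' into a genuine regret lower bound (not merely a failure-to-identify statement), is the delicate part; by contrast the $\log_2\log_2(K)$ term follows fairly directly from the existing batched-bandit machinery.
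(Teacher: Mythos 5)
Your proposal follows essentially the same route as the paper: the $\Omega(\log_2\log_2 K)$ term is obtained by embedding a constant-size bandit instance and invoking the batched-bandit lower bound of Gao et al.\ (exactly what the paper does, citing their Theorem~3), and the $\Omega(H/\log_A K)$ term comes from a combination-lock MDP in which a fixed within-batch policy can only push the ``learned frontier'' forward by $O(\log_A K)$ layers per batch. The one place where you diverge is the step you yourself flag as delicate: you propose a coupling/Fano argument bounding total-variation distance by the expected number of informative visits to unrevealed layers. The paper avoids information theory entirely and resolves this step more elementarily. Its Lemma~\ref{lemma:lb1} shows that for any \emph{fixed} stochastic policy $\pi$, the adversary can choose the next block of $d=\lfloor 2\log_A(K)\rfloor+2$ lock actions layer by layer, each time picking the action with smallest conditional probability of being played at the surviving state, so that $\pi$ traverses the block with probability at most $A^{-(d-1)}\le 1/K^2$ per episode, hence never in $K$ episodes with probability $1-1/K$. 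Its Lemma~\ref{lemma:lb2} then does an induction over blocks: the policy in force when block boundary $i$ is first crossed is (by the first-visit definition) a stochastic function of only the already-revealed lock components, so the adversary can plant the next block against that policy's distribution, forcing at least one new batch per block of $d$ layers; with $M\le c-2$ batches the terminal rewarding state is never reached with probability at least $1/2$, giving $\Omega(K)$ regret. So your construction and high-level claims are right, and the conversion to regret is handled the same way, but the ``rigorous against arbitrary adaptive and stochastic policies'' obstacle does not require Fano: fixing the within-batch policy and choosing the lock adversarially (rather than uniformly at random with an averaging argument) is what makes the paper's proof go through cleanly, and is the ingredient you would want to adopt to complete yours.
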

	Compared to the lower bound of $\Omega(\log_2\log_2(K))  $ in \citep{gao2019batched}  for multi-armed bandit problem,  additional $\Omega(H/\log_A(K))$ batches are required to explore the structure of the MDP.

Due to space limitation, we defer the full proofs of Theorem~\ref{thm:main} and Theorem~\ref{thm:lb} to Appendix~\ref{sec:proof} and  Appendix~\ref{app:lb} respectively.

	\paragraph{Our contribution.} We propose the framework of multi-batch RL, and first achieve $O(H+\log_2\log_2(K))$  sample complexity bound with the near-optimal $\tilde{O}(\sqrt{SAH^3K\iota})$ regret bound with an efficient algorithm. We also prove that for any algorithm with $O(\mathrm{poly}(S,A,H)\sqrt{K})$ regret, the global switching cost is at least $\Omega(H/\log_A(K)+\log_2\log_2(K))$, which implies a nearly matching  lower bound of $\Omega(H/\log_2(K)+\log_2\log_2(K))$ for the batch complexity.
	We also note that the $O(H+\log_2\log_2(K))$  batch complexity implies an $O(H+\log_2\log_2(K))$ bound for the global switching cost, which is also a near optimal upper bound.

\section{Related Works}

	\paragraph{Bandit algorithms with limited adaptivity.} 
	Bandit problem with low switching cost is widely studied in past decades \citep{cesa2013online,perchet2016batched,gao2019batched, simchi2019phase}. \cite{cesa2013online}  showed an $\tilde{\Theta}(K^{\frac{2}{3}})$ regret bound under adaptive adversaries and bounded memories. \cite{perchet2016batched} proved a regret bound of $\tilde{\Theta}(K^{\frac{1}{1-2^{1-M}}})$ for the two-armed bandit problem within $M$ batches, and later
 \cite{gao2019batched} extended their result to the general $A$-armed case.
	Besides the setting of classical multi-armed bandit problem, other settings has also been studied, 
	e.g., multinomial bandit problem \citep{dong2020multinomial} and linear bandit problem \citep{ruan2020linear}.

	\paragraph{Episodic reinforcement learning with low switching cost.}
	For model-based algorithms, by doubling updates, the global switching cost is $O(SAH\log_2(K))$ while keeping the regret $\tilde{O}(\sqrt{SAKH^3})$\cite{azar2017minimax}. 
	For model-free algorithms,
	\cite{bai2019provably} first studied RL with low switching cost. They proposed a  $Q$-learning algorithm with lazy update to achieve $\tilde{O}(\sqrt{SAKH^4})$ regret bound and $O(SAH^3\log(K/A))$ local switching cost.  Recently \cite{zhang2020almost} established a better regret bound of $\tilde{O}(\sqrt{SAKH^3})$ and $O(SAH^2\log(K/A))$ local switching cost. Besides, \cite{gao2021provably} generalized the problem to Linear RL, and established a regret bound of $\tilde{O}(\sqrt{d^3H^4K})$ with $O(dH\log(K))$ global switching cost. Recent work \cite{qiao2022sample} achieved $O(HSA\log_2\log_2(K))$ switching cost and  $\tilde{O}(\mathrm{poly}(S,A,H)\sqrt{K})$ regret with a computational inefficient algorithm.
	
	\paragraph{Regret minimization for reinforcement learning.}
	There is a long line of works devoting to regret minimization for RL problem
	\citep{kakade2003sample,jaksch2010near,bartlett2009regal,dann2019policy,azar2017minimax,jin2018q,zanette2019tighter,zhang2019regret,zhang2020almost,li2020breaking,zhang2021reinforcement}. For tabular setting, near optimal regret bound  of $\tilde{O}(\sqrt{SAH^3T})$ has been established by \citep{azar2017minimax,zanette2019tighter,zhang2020almost} for both model-based and model-free algorithms. 
	However, fewer algorithms focused on the setting of multi-batch RL.

	\section{Preliminaries}\label{sec:pre}
	
	\paragraph{Episodic reinforcement learning.} $M =\left \langle \mathcal{S}, \mathcal{A} ,r ,P ,s_1   \right \rangle$, where $\mathcal{S} \times\mathcal{A}$ is the discrete state-action space, $r = \{r_h(s,a) \}_{(s,a)\in \mathcal{S}\times \mathcal{A},h\in [H] }$ is the known\footnote{This is a common assumption since the uncertainty of reward function is dominated by that of the transition model.} reward function, $P =\{P_{h}(s,a) \}_{(s,a)\in \mathcal{S}\times \mathcal{A},h\in [H] }$ is the unknown transition model and $s_1$ is the fixed initial state\footnote{The more general case, where the agent starts from a fixed initial distribution, could be reduced to our setting by increasing $H$ by $1$ }.  We assume that the reward function $r_{h}(s,a)\in [0,1]$ for any $(h,s,a)$.
	In each episode, the agent starts at $s_1$, then takes actions and transits to the next state step by step, and finally conducts the trajectory $\{(s_h,a_h,s_{h+1})\}_{h=1}^{H}$. The target of the  agent is to maximize the accumulative reward function $\sum_{h=1}^H r_h(s_h,a_h)$.

	A policy $\pi$ can be viewed as a series of mappings $\{ \pi_h \}_{h=1}^{H}$ where $\pi_h : \mathcal{S} \to \Delta^{\mathcal{A}} $ maps $s_h$ to a distribution over the action space at the $h$-th step, where $\pi_h(a|s)$ is the probability taking action $a$ at state $s$ of the $h$-th horizon.

	Given a policy $\pi$, the (optimal) $Q$-function and value function are given by 
	\begin{align}
	& Q_h^{\pi}(s,a)  = \mathbb{E}_{\pi}\left[  \sum_{h' = h}^H r_{h'}(s_{h'},a_{h'})\Big|(s_h,a_h) = (s,a) \right]; \quad \quad \quad  Q_h^*(s,a) = \sup_{\pi\in \Pi}Q_h^{\pi}(s,a) ;\nonumber
	\\ &V_h^{\pi}(s)  = \mathbb{E}_{\pi}\left[ \sum_{h' = h}^H r_{h'}(s_{h'},a_{h'})\Big|s_h= s\right];  \quad \quad \quad \quad \quad \quad\quad \quad V_h^*(s) = \max_a Q_h^*(s,a). \nonumber
	\end{align}

	Let $\pi^{(k)}$ denote the policy in the $k$-th episode. Then
	the regret is given by 
	\begin{align}
	\mathrm{Regret}(K):=\sum_{k=1}^{K}(V^{*}_{1}(s_1)- V^{\pi^{(k)}}_{1}(s_1)).
	\end{align}

	\paragraph{Notations}
	In this paper, 
	we use 
	$\mathbb{E}_{\pi,p}[\cdot]$ ($\mathbb{P}_{\pi,p}[\cdot]$) to denote the expectation (probability) following policy $\pi$  under transition model $p$.  In particular, $\mathbb{E}_{\pi}[\cdot]$($\mathbb{P}_{\pi}[\cdot]$) denotes the expectation (probability) following $\pi$ under the true transition model $P$. We  define the general value function
	\begin{align}
&	W^{\pi}(r',p)  = \mathbb{E}_{\pi,p}\left[  \sum_{h=1}^H r'_h(s_h,a_h) \right].\nonumber
	\end{align}
We use  $\textbf{1}$ to denote the $S$-dimensional vector $[1,1,\ldots,1]^{\top}$ and $\textbf{1}_{h,s,a}$ to denote the reward function  $r'$ such that $r'_{h'}(s',a')=\mathbb{I}[(h,s,a)=(h',s',a')]$. We also define $\{d_h^{\pi}(s,a)\}_{(s,a,h)}$ be the occupancy distribution of $\pi$. That is, $d_h^{\pi}(s,a) = \mathbb{E}_{\pi}[\mathbb{I}[(s_h,a_h)=(s,a)] ]$. $\Delta^{d}$ is used to denote the $d$-dimensional simplex. For two vector $x,y$ with the same dimension, we write $x^{\top}y$ as $xy$ for convenience. For $p\in \Delta^{S}$ and $v\in \mathbb{R}^S$, we define $\mathbb{V}(p,v)=pv^2-(pv)^2$. For $N\geq 1$, we use $[N]$ to denote the set $[1,2,\ldots,N]$.

\section{Technique Overview}\label{sec:tec}
In this section, we first introduce the policy elimination framework, which enjoys the near-optimal batch complexity. Then we summarize the technical challenges to achieve the near-optimal regret bound efficiently under this framework. At last, we introduce our major technical contributions.

\subsection{Policy Elimination Framework}
 Following the methods in multi-batch bandit learning \cite{perchet2016batched,gao2019batched}, we construct our main algorithm using policy elimination. Like most model-based reinforcement learning methods, we maintain a confidence region $\mathcal{P}$ for the transition model, where the true transition model $P\in\mathcal{P}$ with high probability. Before each batch starts, for a policy $\pi$ and a reward function $u$, by extended value iteration (See Algorithm~\ref{alg:evi} in Appendix~\ref{app:cei}), we are able to compute the confidence interval $[L^{\pi}(u,\mathcal{P}), U^{\pi}(u,\mathcal{P})]$ for the value function of $\pi$, where
\begin{align}
U^{\pi}(u,\mathcal{P}):=  \max_{p'\in \mathcal{P}} W^{\pi}(u+\textbf{1}_{z},p'); \quad \quad \quad  L^{\pi}(u,\mathcal{P}): = \min_{p'\in \mathcal{P}} W^{\pi}(u,p').\label{eqdeful}
\end{align}
Here $z$ is a virtual state for the \emph{infrequent} state-action-state triples (See Function $\mathtt{clip}$ in Algorithm~\ref{alg:raw_exploration}). The reason why we give reward $1$ for $z$ in computing the upper confidence bound is to encourage exploration to these \emph{infrequent} state-action-state triples.

By 
policy elimination we get   $
\Pi(r,\mathcal{P}) =	\left\{ \pi \big|  U^{\pi}(r,\mathcal{P}) \geq \sup_{\pi'}L^{\pi'}(r,\mathcal{P})   \right\}$ as the set of survived policies. The next step is to choose a policy  $\pi\in \Pi(r,\mathcal{P})$ and execute $\pi$ in the current batch. 
Defining $\mathcal{P}^m$ to be the confidence region for the transition model after the $m$-th batch and $\mathrm{gap}^{m+1} = \max_{\pi\in \Pi(r,\mathcal{P}^m)}(U^{\pi}(r,\mathcal{P}^m)-L^{\pi}(r,\mathcal{P}^m))$, the regret in the $m+1$-th batch could be  bounded by $t^{m+1}\mathrm{gap}^{m+1}$. 
Therefore, the main task is to design efficient exploration policy to reduce $\mathrm{gap}^m$ for each $1\leq m \leq M$.



\subsection{Technical Challenges }
Following the policy elimination framework above, we have two major challenges to achieve the near-optimal regret bound with an efficient algorithm.

\paragraph{Difficulty in exploration} 
Fix the reward function $r$ and confidence region $\mathcal{P}$. To construct tight confidence interval for every policy $\pi\in \Pi(r,\mathcal{P})$, we need to find a policy $\pi\in \Pi(r,\mathcal{P})$ to collect enough samples for each $(h,s,a)$. To address the problem, \cite{qiao2022sample} proposed an algorithm named APEVE, which learns each $(h,s,a)$ triple independently. More precisely, for each $(h,s,a)\in [H]\times \mathcal{S}\times\mathcal{A}$, the algorithm searches for a policy $\pi^{h,s,a}$  to maximize the probability of visiting $(h,s,a)$ over $\Pi(r,\mathcal{P})$, and then execute $\pi^{h,s,a}$ to collect samples for $(h,s,a)$. However, this algorithm might be inefficient in sampling, since different horizon-state-action triples may match along with the same exploration policy. As shown in \cite{qiao2022sample}, the regret bound might be sub-optimal with this algorithm. Therefore, to achieve the near-optimal regret bound, we need to design a new exploration strategy to utilize the 
correlationship among different horizon-state-action triples.

\paragraph{Difficulty in efficient implementation} Because the policy set $\Pi(r,\mathcal{P})$ might have exponential size, naive enumeration is not applicable to searching for a good exploration policy.
 As a consequence, it requires additional efforts to study the structure of $\Pi(r,\mathcal{P})$. For example, when $r=0$, $\Pi(r,\mathcal{P})$ is the set of all possible policies. In this case, we can use extended value iteration (See Algorithm~\ref{alg:evi}) to find the policy which visits $(h,s,a)$ most frequently. 

\subsection{Key Techniques}

\paragraph{Near-optimal design scheme}
Unlike RL algorithm with limited switching cost, in multi-batch reinforcement learning, the agent can not change the policy adaptively. As a result, we need to design a  policy with proper coverage ratio for all the survived policies. That is, using the data collected following this policy, the length of the confidence interval for any survived policy is bounded by a uniform threshold.

Recall that $d^{\pi}_h(s,a) = \mathbb{E}_{\pi}[\mathbb{I}[(s_h,a_h)=(s,a)]$. 
Using classical regret analysis for tabular RL \citep{azar2013minimax,zanette2019tighter}, for a fixed  policy $\pi$, the length of confidence interval for $\pi$   could be roughly bounded by 
\begin{align}
    \tilde{O}\left( \sum_{s,a,h}d^{\pi}_h(s,a)\sqrt{\frac{\mathrm{Var}_h(s,a)}{N_h(s,a)}}\right) \substack{\leq \\ \mathrm{Cauchy's \, ineq.}} \tilde{O}\left( \sqrt{\sum_{s,a,h}\frac{d_h^{\pi}(s,a)}{N_h(s,a)}}\cdot\sqrt{\sum_{s,a,h}d_h^{\pi}(s,a)\mathrm{Var}_h(s,a)}\right),\label{eq:wxx}
\end{align}
where $\mathrm{Var}_h(s,a)$ is the variance term with respect to $P_{h,s,a}$ and $V*_{h+1}(\cdot)$, 
and $N_h(s,a)\geq 1$ is the count of $(h,s,a)$.

Because $\sum_{s,a,h}d_h^{\pi}(s,a)\mathrm{Var}_h(s,a)$ could be uniformly bounded by $O(H^2)$ using classical analysis, we focus on bounding the term $\sum_{s,a,h}\frac{d_h^{\pi}(s,a)}{N_h(s,a)}$. Suppose the policy for current batch is $\tilde{\pi}$. After this batch, we roughly have that $N_h(s,a)\propto d^{\tilde{\pi}}_h(s,a)$. So it corresponds to find a policy $\tilde{\pi}\in \Pi(r,\mathcal{P})$ to minimize the \emph{worst-case coverage number} $\max_{\pi\in \Pi(r,\mathcal{P})}\sum_{h,s,a}\frac{d^{\pi}_h(s,a)}{d^{\tilde{\pi}}_h(s,a)}$. For this problem, we have the lemma below, and the proof is deferred to Appendix~\ref{app:pflemma1}.
\begin{lemma}\label{lemma:design} Let $d>0$ be an integer. 
Let $\mathcal{X}\subset (\Delta^d)^{m}$. Then there exists a distribution $\mathcal{D}$ over $\mathcal{X}$, such that 
\begin{align}
\max_{x=\{x_i\}_{i=1}^{dm}\in \mathcal{X}}\sum_{i=1}^{dm}\frac{x_i}{y_i} = md,\nonumber
\end{align}
where $y =\{y_i\}_{i=1}^{dm}= \mathbb{E}_{x\sim \mathcal{D}}[x]$. Moreover, if $\mathcal{X}$ has a boundary set $\partial \mathcal{X}$ with finite cardinality, we can find an approximation solution for $\mathcal{D}$ in $\mathrm{poly}(|\partial \mathcal{X}|)$ time.
\end{lemma}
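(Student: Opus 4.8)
The plan is to recognize this as an optimal-design statement and prove it through a single concave potential. The inequality $\max_{x\in\mathcal{X}}\sum_i x_i/y_i \ge md$ in fact holds for \emph{every} choice of $\mathcal{D}$: writing $y=\mathbb{E}_{x\sim\mathcal{D}}[x]$ and averaging the linear functional $f(x)=\sum_i x_i/y_i$ against $\mathcal{D}$ itself gives $\mathbb{E}_{x\sim\mathcal{D}}[f(x)]=\sum_i y_i/y_i = md$, so the maximum over the support, and hence over $\mathcal{X}$, is at least the average $md$. Thus the real content of the lemma is the reverse inequality: exhibiting one distribution $\mathcal{D}$ for which $f(x)\le md$ holds \emph{uniformly} over all $x\in\mathcal{X}$.

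To produce such a $\mathcal{D}$, I would (assuming $\mathcal{X}$ compact, as in our application, or else passing to its closure) take $y$ to be the maximizer of the strictly concave potential $\Phi(y)=\sum_{i=1}^{dm}\ln y_i$ over the compact convex set $\mathcal{C}=\overline{\mathrm{conv}}(\mathcal{X})$, which inherits the block-simplex structure. The first-order optimality condition at the maximizer $y^\star$ is that the directional derivative into $\mathcal{C}$ is nonpositive: for every $x\in\mathcal{X}\subseteq\mathcal{C}$, $\nabla\Phi(y^\star)\cdot(x-y^\star)\le 0$. Since $\nabla\Phi(y^\star)_i=1/y^\star_i$, this reads $\sum_i x_i/y^\star_i \le \sum_i y^\star_i/y^\star_i = md$, which is exactly the uniform bound sought. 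Because $y^\star\in\mathrm{conv}(\mathcal{X})$, Carathéodory's theorem writes it as a convex combination of at most $dm+1$ points of $\mathcal{X}$; that combination is the finitely supported distribution $\mathcal{D}$ with $\mathbb{E}_{x\sim\mathcal{D}}[x]=y^\star$, and combining the two inequalities yields the claimed equality $md$.

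The step needing care, and which I expect to be the main obstacle, is ensuring that $y^\star$ is attained with all coordinates strictly positive, so that $\Phi$ and the gradient $1/y^\star_i$ are well defined. After discarding any coordinate $i$ that vanishes identically on $\mathcal{X}$ (which never contributes to $f$), for each surviving $i$ pick some $x^{(i)}\in\mathcal{X}$ with $x^{(i)}_i>0$; their average lies in $\mathcal{C}$ with all coordinates positive, so $\sup_{\mathcal{C}}\Phi>-\infty$, while $\Phi\to-\infty$ toward the faces of $\partial\mathcal{C}$ where a coordinate vanishes. Hence the supremum of the upper-semicontinuous concave $\Phi$ over the compact set $\mathcal{C}$ is attained at an interior-feasible $y^\star$ with every $y^\star_i>0$, validating the computation above.

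For the algorithmic claim, when $\mathcal{X}=\mathrm{conv}(\partial\mathcal{X})$ for a finite vertex set $\partial\mathcal{X}$, maximizing $\Phi$ is a smooth concave program over a polytope, solvable by a first-order (conditional-gradient / entropic mirror-descent) method: each iteration evaluates the gradient $1/y_i$ and maximizes the induced linear functional by scanning the $|\partial\mathcal{X}|$ vertices, then takes a convex step, so every iterate is an explicit mixture over $\partial\mathcal{X}\subseteq\mathcal{X}$. Standard convergence guarantees then yield, in $\mathrm{poly}(|\partial\mathcal{X}|)$ time, a $y$ with $\max_{x\in\mathcal{X}}\sum_i x_i/y_i\le md+\varepsilon$, i.e.\ an approximate $\mathcal{D}$. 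The one delicate point in the error analysis is controlling the curvature of $\Phi$ near the boundary, where $\nabla\Phi$ blows up; this is handled precisely by the uniform positivity lower bound on the coordinates of the near-optimal iterates established in the previous step.
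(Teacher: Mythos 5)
Your proof is correct and takes essentially the same route as the paper's: the paper maximizes $\ln E(\lambda)=\sum_{i=1}^{dm}\ln\bigl(\sum_j \lambda_j x^j_i\bigr)$ over mixture weights $\lambda$ in the simplex and extracts the bound from the KKT conditions, which is exactly your maximization of the concave potential $\Phi(y)=\sum_i \ln y_i$ over $\mathrm{conv}(\mathcal{X})$ followed by the first-order variational inequality $\sum_i x_i/y^\star_i\le md$. If anything, your write-up is the more complete one: the averaging argument for the matching lower bound, the treatment of points of $\mathcal{X}$ receiving zero weight (handled uniformly by your variational inequality, but only implicit in the paper's KKT case analysis), and the positivity of $y^\star$ are all points the paper's proof passes over silently.
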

Plugging $\mathcal{X}=\left\{\{ d^{\pi}_h(\cdot,\cdot)\}_{h=1}^H| \pi\in \Pi(r,\mathcal{P}) \right\}$, $d=SA$ and $m=H$ into Lemma~\ref{lemma:design}, there exists a policy $\tilde{\pi}$ being a mixture of policies in $\Pi(r,\mathcal{P})$,  
such that $\max_{\pi\in \Pi(r,\mathcal{P})}\sum_{s,a,h}\frac{d^{\pi}_h(s,a)}{d^{\tilde{\pi}}_h(s,a)}=SAH$. In this way, we can find the desired exploration policy $\tilde{\pi}$ by assuming the knowledge of $\left\{d_h^{\pi}(\cdot,\cdot)\right\}_{h=1}^H$  for all $\pi\in \Pi(r,\mathcal{P})$.

Given the design scheme above, it remains two problems, for which we present solutions below: 1)\emph{ $\left\{d_h^{\pi}(\cdot,\cdot)\right\}_{h=1}^H$ is unknown;} 2) \emph {even assuming $\left\{d_h^{\pi}(\cdot,\cdot)\right\}_{h=1}^H$ is known, it is hard to find $\tilde{\pi}$ since the cardinality of $\left\{\{ d^{\pi}_h(\cdot,\cdot)\}_{h=1}^H| \pi\in \Pi(r,\mathcal{P}) \right\}$ might be exponential in $SH$.}

\paragraph{Constructing tight confidence region} To estimate  $\left\{d_h^{\pi}(\cdot,\cdot)\right\}_{h=1}^H$, 
we consider to construct a tight confidence region for the transition model to estimate the occupancy distribution up to a constant ratio. 
\begin{definition}\label{def:tight}
We say a confidence transition region $ \mathcal{P}=\otimes_{h,s,a}\mathcal{P}_{h,s,a}$  is \emph{tight} with respect to $p'$ iff (\romannumeral1)$p'\in \mathcal{P}$; (\romannumeral2)
 $e^{-\frac{1}{H}}p'_{h,s,a,s'}\leq p_{h,s,a,s'}\leq e^{\frac{1}{H}}p'_{h,s,a,s'}$
 for any $(h,s,a,s')$ and any $p_{h,s,a}\in \mathcal{P}_{h,s,a}$; (\romannumeral3) $\mathcal{P}_{h,s,a}$ has the form $\mathcal{P}_{h,s,a}=\{p\in\Delta^{S} |   a_i^{\top}p\leq b_i,i=1,2,...,m  \}$ where $m\leq \mathrm{poly}(SM)$.
 \end{definition}
 In model-based reinforcement learning, these conditions are natural and it is easy to construct a \emph{tight} confidence region with acceptable error.
 
 Once we have a confidence region which is $\emph{tight}$ w.r.t. the true transition model $P$, for any policy $\pi$ and $(h,s,a)$, we can estimate the expected visit count  $W^{\pi}(\textbf{1}_{h,s,a})$ by $W^{\pi}(\textbf{1}_{h,s,a},p)$ for any $p\in \mathcal{P}$ because
\begin{align}
  e^{-1} W^{\pi}(\textbf{1}_{h,s,a},p)\leq  W^{\pi}(\textbf{1}_{h,s,a})=d^{\pi}_h(s,a)\leq  e W^{\pi}(\textbf{1}_{h,s,a},p).\nonumber 
\end{align}
With $ W^{\pi}(\textbf{1}_{h,s,a},p)$ as approximation of $d_h^{\pi}(s,a)$, we can continue the analysis above by paying a constant factor.

To learn such a confidence region, by Bennet's inequality (Lemma~\ref{lemma:bennet}), it suffices to visit $(h,s,a,s')$\footnote{A tuple $(h,s,a,s')$ is visited means $(s_h,a_h,s_{h+1})=(s,a,s')$.} for $C_1H^2\iota$ for each $(h,s,a,s')$, where $C_1$ is an universal constant.
By this idea, we try to visit each $(h,s,a,s')$ as much as possible.
In the meantime, it is very possible that some $(h,s,a,s')$ tuples are extremely hard to visit. Fortunately, with proper exploration scheme, we can show that the maximal probability to visit such tuples is well-bounded, so that these tuples could be ignored by suffering regret $O(\sqrt{T})$.

\paragraph{Computational efficient design scheme}
Assume the confidence region $\mathcal{P}$ is \emph{tight} w.r.t. $P$. 
We invoke reward-zero exploration to learn a sub-optimal solution for the problem $\min_{\tilde{\pi}\in \Pi(r,\mathcal{P})}\max_{\pi\in \Pi(r,\mathcal{P})}\sum_{h,s,a}\frac{d^{\pi}_h(s,a)}{d^{\tilde{\pi}}_h(s,a)}$.
Let $p\in \mathcal{P}$ be fixed and define $\tilde{d}^{\pi}_h(s,a)=W^{\pi}(\textbf{1}_{h,s,a},p)$ be the approximation for
$d^{\pi}_h(s,a)$. We define $\tilde{\pi}^i = \arg\max_{\pi\in \Pi(r,\mathcal{P})}W^{\pi}(r^i,p)$ for $1\leq i \leq k = K^3$, where $r^i_{h}(s,a) = \min\left\{ \frac{1}{\sum_{j=1}^{i-1}\tilde{d}_h^{\tilde{\pi}^j}(s,a)},1\right\}$.
Let $\tilde{\pi}$ be the mixture of $\{\tilde{\pi}^i\}_{i=1}^{k}$.  For any policy $\pi$,  we have that
\begin{align}
\sum_{s,a,h}d^{\pi}_h(s,a)\cdot \min \left\{\frac{1}{d^{\tilde{\pi}}_h(s,a)},k \right\} & \leq O \left(\sum_{s,a,h}\tilde{d}^{\pi}_h(s,a)\cdot \min \left\{\frac{1}{\tilde{d}^{\tilde{\pi}}_h(s,a)},k \right\}\right)\label{eq:cr1}
\\ & \leq O\left(\sum_{i=1}^k W^{\pi}(r^{i},p)\right)\label{eq:cr2}
\\ & \leq O\left(\sum_{i=1}^k W^{\tilde{\pi}^i}(r^i,p) \right)\label{eq:cr3}
\\ &  \leq O\left(\sum_{s,a,h} \sum_{i=1}^k d^{\tilde{\pi}^i}_h(s,a)\cdot \min\left\{ \frac{1}{\sum_{j=1}^{i-1}d^{\tilde{\pi}^j}_h(s,a) },1\right\}          \right)\nonumber
\\ & \leq  O\left(\sum_{s,a,h} \sum_{i=1}^k \log\left( \frac{ \max\{ \sum_{j=1}^{i}d^{\tilde{\pi}^j}_h(s,a) ,1\}}{     \max\{ \sum_{j=1}^{i-1}d^{\tilde{\pi}^j}_h(s,a) ,1\}   } \right)        \right)\nonumber
\\ & \leq 
O(SAH\log(k)).\label{eq:wxxx2}
\end{align}
Here \eqref{eq:cr1} holds by the tightness of $\mathcal{P}$, \eqref{eq:cr2} holds by the fact that $r^i_h(s,a)\geq r^{k+1}_h(s,a) = \min\left\{ \frac{1}{\sum_{j=1}^{k}\tilde{d}^{\tilde{\pi}^j}_h(s,a)},1 \right\} = \frac{1}{k}\min\left\{ \frac{1}{\tilde{d}^{\tilde{\pi}}_h(s,a)},k \right\}$ for any $(h,s,a)$, and \eqref{eq:cr3} holds by the optimality of $\tilde{\pi}^i$ for $1\leq i \leq k$.
With \eqref{eq:wxxx2} in hand, $\max_{\pi\in \Pi(r,\mathcal{P})}\sum_{h,s,a}\frac{d^{\pi}_h(s,a)}{d^{\tilde{\pi}}_h(s,a)}$ is roughly bounded by $O(SAH\log(K))$\footnote{We remark the there is still a gap between $\max_{\pi\in \Pi(r,\mathcal{P})}\sum_{h,s,a}\frac{d^{\pi}_h(s,a)}{d^{\tilde{\pi}}_h(s,a)}$ and $\sum_{s,a,h}d^{\pi}_h(s,a)\cdot \min \left\{\frac{1}{d^{\tilde{\pi}}_h(s,a)},K^3 \right\}$.  Actually \eqref{eq:wxxx2} is sufficient for further regret analysis. 
}, which nearly matches the best \emph{worst-case coverage number} number of $SAH$.

\paragraph{Computational efficient constrained exploration} Let $u,u'$ be two reward functions and $\mathcal{P}$ be a set of transition models. As stated before, for general  $\Pi(u,\mathcal{P})$ , it might be non-trivial to solve the problem $\tilde{\pi} = \arg\max_{\pi\in \Pi(u,\mathcal{P})}W^{\pi}(u',p)$ for  fixed $p\in \mathcal{P}$. As a trade-off, we turn to find some policy $\tilde{\pi}\in \Pi(u,\mathcal{P})$ such that $W^{\tilde{\pi}}(u',p)\geq c\max_{\pi\in \Pi(u,\mathcal{P})}W^{\pi}(u',p)$, where $c>0$ is some universal constant. The problem turns out to be a RL problem with a soft constraint. For general $\Pi(u,\mathcal{P})$, the problem might be hard to solve. Fortunately, on the benefit of the \emph{tight} property of $\mathcal{P}$, we can find such $\tilde{\pi}$ efficiently.

\section{Algorithms}\label{sec:alg}
In this section we present our algorithms.
The main algorithm (Algorithm~\ref{alg:main}) consists of three stages. 

In the first two stages, we conduct naive exploration to identify the tuples which are hard to visit, which we called \emph{infrequent} tuples. In particular, the length of the second stage is slightly larger than that of the first stage, where we use the dataset in the first stage to reduce the regret in the second stage. In this way, we can bound the regret in the first two stages by $\tilde{O}(\sqrt{SAH^3K})$, while the probability of visiting the \emph{infrequent} tuples is small enough.

After ignoring the \emph{infrequent} tuples, we could obtain a \emph{tight} confidence region. Given the \emph{tight} confidence region, we compute the confidence region for each policy and conduct policy elimination in the third stage. 
The first and second stages contains $O(H)$ batches, and the third stage contains $O(\log_2\log_2(K))$ batches. So the batch complexity of Algorithm~\ref{alg:main} is $O(H+\log_2\log_2(K))$.
Below we describe $\mathtt{Raw\,Exploration}$ (Algorithm~\ref{alg:raw_exploration}) and $\mathtt{Policy\,Elimination}$ (Algorithm \ref{alg:policy_elimination}) in detail.

	\begin{algorithm}[tb]
	\caption{$\mathtt{Main \,\,Algorithm}$ }
	\begin{algorithmic}[1]\label{alg:main}
	\STATE{\textbf{Input:} state-action space $\mathcal{S}\times\mathcal{A}$, number of episodes $K$, confidence parameter $\delta$;}
	\STATE{\textbf{Initialize:} $\iota\leftarrow \ln(2/\delta)$, $k_1 \leftarrow 144\sqrt{SAKH\iota}$, $k_2 \leftarrow 288S^3A^2H^4\sqrt{K\iota} $;}
		\STATE{ $\{ \mathcal{D}_1 \}\leftarrow\mathtt{Raw \,\, Exploration}(0,\emptyset,k_1)$;}\label{line:mp}
		\STATE{$\{\mathcal{D}_2\}\leftarrow\mathtt{Raw \,\, Exploration}(r,\mathcal{D}_1,k_2)$;}
		\STATE{$\mathtt{Policy \,Elimination}( \mathcal{D}_2 , K-Hk_1-Hk_2)$.}
	\end{algorithmic}
\end{algorithm}

\begin{algorithm}[tb]
	\caption{ $\mathtt{Raw\,Exploration}(u, \mathcal{D},k)$}
	\begin{algorithmic}[1]\label{alg:raw_exploration}
	\STATE{\textbf{Input}: reward function $u$, dataset $\mathcal{D}$, length $k$;}
		\STATE{\textbf{Initialize:}  
		$C_1\leftarrow 200$;}
		\FOR{$h=1,2,\ldots,H$}
			\STATE{$\mathcal{P}\leftarrow \mathtt{CR}(\mathcal{D})$;}
		\FOR{$(s,a)\in \mathcal{S}\times\mathcal{A}$}
		\STATE{$\pi^{h,s,a}\leftarrow \mathtt{Policy\,Search}(u,\textbf{1}_{h,s,a},\mathcal{P})$;\label{line:pss}}
		\ENDFOR
		\STATE{$p\leftarrow $ arbitrary element in $\mathcal{P}$;}
		\STATE{$\{\tilde{\pi}^{h},p\}\leftarrow \mathtt{Sum}\left( \left\{\frac{1}{SA},\pi^{h,s,a}, p \right\}_{(h,s,a)} \right)$;\label{line:pih} }
		\STATE{$\pi^{h}$ be the policy which is the same as $\tilde{\pi}^{h}$ in the first $h-1$ steps, and be the uniformly random policy in the left $H-h+1$ steps;}
		\STATE{Execute $\pi^{h}$ for $k$ episodes, and collect the samples as $\mathcal{D}_h$;}
		\STATE{$\mathcal{D}\leftarrow \mathcal{D}\cup \mathcal{D}_{h}$;}
		\ENDFOR
		\STATE{\textbf{return:} $\{\mathcal{D}\}$;}
				\STATE{\vspace{1ex}
				\textbf{Function}: $\mathtt{CR}(\mathcal{D})$:
				}
			\STATE{\quad $N_h(s,a,s')\leftarrow $ count of $(h,s,a,s')$ in $\mathcal{D}$, for all $(s,a,s')$;\label{line_n}}
		\STATE{\quad $N_h(s,a)\leftarrow \max\{\sum_{s'}N_h(s,a,s'),1\}$ for all $(s,a)$;}
		\STATE{\quad $\hat{p}_{h,s,a,s'}\leftarrow \frac{N_h(s,a,s')}{N_h(s,a)}$, $\forall (h,s,a,s')$;}
		\STATE{\quad $\mathcal{W}\leftarrow \{(h,s,a,s'):\, N_h(s,a,s')\geq C_1H^2\iota\}$;}
		\STATE{\quad $\tilde{\mathcal{P}}_{h,s,a}\leftarrow \left\{p\in\Delta^{S}| \left|p_{s'}-\hat{p}_{h,s,a,s'}\right|\leq \alpha(N_{h}(s,a),N_h(s,a,s')) , \forall s'\in \mathcal{S}\right\}$, $\forall (h,s,a)$;\label{line:alpha}}
		\STATE{\quad $\mathcal{P}_{h,s,a}\leftarrow \{ \mathtt{clip}(p,\mathcal{W}): p\in \tilde{\mathcal{P}}_{h,s,a}\}$, $\forall (h,s,a)$;}
		\STATE{\quad \textbf{Return}: $\otimes_{h,s,a}\mathcal{P}_{h,s,a}$.}
		\STATE{\vspace{1ex}
		\textbf{Function}: $\mathtt{clip}(p,\mathcal{W})$}
		\STATE{\quad $p'_{h,s,a,s'}\leftarrow p_{h,s,a,s'}, \forall (h,s,a,s)\in \mathcal{W}$;}
		\STATE{\quad $p'_{h,s,a,s'} \leftarrow 0, \forall (h,s,a,s')\notin \mathcal{W}$;}
		\STATE{\quad $p'_{h,s,a,z}\leftarrow \sum_{s':(h,s,a,s')\notin \mathcal{W}}p_{h,s,a,s'}, \forall (h,s,a)\in [H]\times \mathcal{S}\times \mathcal{A}$;}
		\STATE{\quad $p'_{h,z,a}\leftarrow \textbf{1}_{z}, \forall (h,a)\in [H]\times\mathcal{A}$;}
		\STATE{\quad \textbf{Return}: $p$.}
	\end{algorithmic}
\end{algorithm} 

\begin{algorithm}[tb]
	\caption{$\mathtt{Policy}$ $\mathtt{Elimination}$}
	\begin{algorithmic}[1]\label{alg:policy_elimination}	
		\STATE{\textbf{Input:} dataset $\mathcal{D}$, length $k$;  }
		\STATE{\textbf{Initialize:}  $\mathcal{D}^{0}\leftarrow \mathcal{D}$, $\mathcal{P}^{-1}\leftarrow (\Delta^{S})^{SA}$ $C_1\leftarrow 100 $, $v^{-1}_h(s)\leftarrow H-h+1$, $\forall (h,s)\in [H]\times \mathcal{S}$; $K_m \leftarrow \left\lceil K^{1-\frac{1}{2^m}}\right \rceil$  for $m=1,2,\ldots,M =\left\lceil \log_2\log_2 (K)\right\rceil$; }
			\STATE{$N_h(s,a,s')\leftarrow $ count of $(h,s,a,s')$ in $\mathcal{D}$;}
		\STATE{$\mathcal{W}\leftarrow \{(h,s,a,s'):\, N_h(s,a,s')\geq C_1H^2\iota\}$;}
		\FOR{$m = 0,1,2,\ldots, M-1$}
		\STATE{$\mathcal{P}^{m}\leftarrow \mathcal{P}^{m-1}\cap \mathtt{CR}^*\left(\mathcal{D}^{m},\overline{\mathcal{D}}^{m},\mathcal{W},\{v_{h}^{m-1}(s)\}_{(h,s)}\right)$;\label{line:cr}}
		\STATE{$\pi^{m+1}  \leftarrow \mathtt{Design}( (\mathcal{P}^{m} )$;\label{alg:eli_1}}
		\IF{$\sum_{m'=1}^{m}K_{m'}\leq k$}
		\STATE{Execute $\pi^{m+1}$ for $K_{m+1}$ episodes;}
		\ELSE
			\STATE{Execute $\pi^{m+1}$ for $k-(\sum_{m'=1}^{m}K_{m'})$ episodes;}
		\ENDIF
		\STATE{$\overline{D}^{m+1} \leftarrow$ the dataset in the $(m+1)$-th batch;}
		\STATE{Update the dataset $\mathcal{D}^{m+1}\leftarrow \mathcal{D}^{m}\cup \overline{\mathcal{D}}^{m+1}$;}
		\STATE{$v^m_h(s)\leftarrow \max_{\pi,p\in \mathcal{P}^m}\mathbb{E}_{\pi,p}\left[\sum_{h'=h}^{H}r_h(s_h,a_h)|s_h = s\right]$ for all $(h,s)\in [H]\times \mathcal{S}$;\label{line:v}}

		\ENDFOR

					\STATE{\vspace{1ex}
				\textbf{Function}: $\mathtt{CR}^*(\mathcal{D},\mathcal{D}',\mathcal{W},v)$:
				}
			\STATE{\quad $\{N_h(s,a,s')\}\leftarrow $counts in $\mathcal{D}$, $N_h(s,a)\leftarrow \max\{\sum_{s'}N_h(s,a,s'),1\}$ for all $(h,s,a,s')$;}
		\STATE{\quad $\hat{p}_{h,s,a,s'}\leftarrow \frac{N_h(s,a,s')}{N_h(s,a)}$, $\forall (h,s,a,s')$;}	
		\STATE{\quad $\{\check{N}_h(s,a,s')\}\leftarrow $ counts in $\mathcal{D}'$, $\check{N}_h(s,a)\leftarrow \max\{\sum_{s'}\check{N}_h(s,a,s'),1\}$ for all $(h,s,a,s')$;}
		\STATE{\quad $\check{p}_{h,s,a,s'}\leftarrow \frac{\check{N}_h(s,a,s')}{\check{N}_h(s,a)}$, $\forall (h,s,a,s')$;}
		\STATE{\quad $\tilde{\mathcal{P}}_{h,s,a}\leftarrow \Big\{p\in\Delta^{S}| \left|p_{s'}-\hat{p}_{h,s,a,s'}\right|\leq \alpha(N_{h}(s,a),N_h(s,a,s')) , \forall s'\in \mathcal{S}$,\\$ \quad \quad \quad \quad \quad \quad \quad \quad \quad \quad \quad \quad 
		 |(p-\check{p}_{h,s,a})v|\leq \alpha^*(\check{N}_{h}(s,a),\check{p}_{h,s,a},v) \Big\}$, $\forall (h,s,a)$;\label{line:alpha*}}
		\STATE{\quad $\mathcal{P}_{h,s,a}\leftarrow \{ \mathtt{clip}(p,\mathcal{W}): p\in \tilde{\mathcal{P}}_{h,s,a}\}$, $\forall (h,s,a)$;}
		\STATE{\quad \textbf{Return}: $\otimes_{h,s,a}\mathcal{P}_{h,s,a}$.}
			\STATE{\vspace{1ex}
				\textbf{Function}: $\mathtt{Design}(\mathcal{P})$:
				}
				\STATE{\quad $p\leftarrow$ arbitrary element in $\mathcal{P}$;}
\STATE{\quad \textbf{for} $i = 1,2,...,K^3$       \textbf{do}}
\STATE{\quad $\tilde{d}_h^{\tilde{\pi}^j}(s,a)\leftarrow W^{\tilde{\pi}^j}(\textbf{1}_{h,s,a},p)$ for $1\leq j \leq i-1$ and any $(h,s,a)$;}
\STATE{\quad $r^i_{h}(s,a)\leftarrow \min\left\{ \frac{1}{\sum_{j=1}^{i-1}\tilde{d}_{h}^{\tilde{\pi}^j}(s,a) },1\right\}$, $\forall (h,s,a)$;}
\STATE{\quad $\tilde{\pi}^{i}\leftarrow \mathtt{Policy\,Search}(r,r^i,\mathcal{P})$;}
\STATE{\quad \textbf{end for}}
\STATE{\quad $\{\pi,p\}\leftarrow \mathtt{Sum}\left(\left\{\frac{1}{K^3},\tilde{\pi}^i,p\right\}_{i=1}^{K^3} \right)$;}
		\STATE{\quad \textbf{Return}: $\pi$.}
			\end{algorithmic}
\end{algorithm}

\subsection{Raw Exploration}
Given a dataset $\mathcal{D}$ with counts $\{N_h(s,a,s')\}$, we define the set of \emph{known} tuples as $\{(h,s,a,s'):N_h(s,a,s')\geq C_1H^2\iota\}$ and the left tuples are regarded as \emph{infrequent} tuples.  

In Algorithm~\ref{alg:raw_exploration}, we are given a dataset. Then we compute the corresponding confidence region $\mathcal{P}$ in Line~\ref{line:alpha}, where $\alpha(n,n') = \sqrt{\frac{4n'\iota}{n^2}}+\frac{5\iota}{n} .  $

We conduct exploration layer by layer over policies in the set of survived policies $\Pi(r,\mathcal{P})$. By visiting each $(h,s,a)$ as much as possible, we can judge whether a tuple $(h,s,a,s')$ is hard to visit using policies in $\Pi(r,\mathcal{P})$. 

Given the set of $\emph{known}$ tuples $\mathcal{W}$, we redirect all tuples not in $\mathcal{W}$ to an additional absorbed state $z$ using $\mathtt{clip}(\cdot,\cdot)$. Once we prove that the probability of reaching $z$ is small enough for the any optimal policy, we can directly learn under the clipped transition model.

In Line~\ref{line:pss} Algorithm~\ref{alg:raw_exploration}, the algorithm $\mathtt{Policy\, Search}$ is invoked. Given any reward $u,u'$, any confidence region $\mathcal{P}$ and threshold $\epsilon>0$, 
this algorithm returns a policy $\tilde{\pi}\in \Pi(u,\mathcal{P})$ such that $W^{\tilde{\pi}}(u',p)\geq c\max_{\pi\in \Pi(u,\mathcal{P})}W^{\pi}(u',p)-\epsilon$ with 
some universal constant $c>0$.
Moreover, when $\mathcal{P}$ is \emph{tight} w.r.t. the true transition model $P$ after clipping, the time complexity of the algorithm is $O(\mathrm{poly}(SAHK)\log(1/\epsilon))$.  
The algorithm and corresponding analysis is postponed to Appendix~\ref{sec:cei}. 

It is also worth noting that executing each $\pi_{h,s,a}$ with probability $\frac{1}{SA}$ can not be regarded as a (history-independent) policy because  the agent need to keep in mind which policy is chosen in current episode. In contrast, the agent only needs to observe current state to take actions following a policy. To address this problem, we define an operator $\mathtt{Sum}$ to take sum over policies under some transition model. Formally, we have the lemma below and postpone the proof to Appendix~\ref{app:pflemma2}.

\begin{lemma}\label{lemma:stp}
		Let $\mathcal{P} =\otimes_{(h,s,a)} \mathcal{P}_{h,s,a}$ be a set of transition models such that $\mathcal{P}_{h,s,a}\subset \Delta^{S}$ is convex for any $(h,s,a)$. Let
	$\{(\pi^i,P^i)\}_{i=1}^n$ be a sequence of policy-transition pairs such that $P^i\in \mathcal{P}$.
	For any $\{\lambda_{i}\}_{i=1}^n$ such that $\lambda_i\geq 0$ for $i\geq 1$ and $\sum_i \lambda_i = 1$, there exists a policy $\pi$ and $P\in \mathcal{P}$, satisfying that
	\begin{align}
	W^{\pi}(\textbf{1}_{h,s,a}, P)= \sum_i \lambda_i W^{\pi^i}(\textbf{1}_{h,s,a}, P^i)
	\end{align}
	for any $(h,s,a)\in [H]\times \mathcal{S}\times\mathcal{A}$. Furthermore, the time complexity to find $\{\pi,P\}$ could be bounded by $O(nS^3A^2H^2)$.
\end{lemma}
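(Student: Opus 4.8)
The plan is to realize the convex combination of occupancy measures as a single occupancy measure by explicitly constructing the policy $\pi$ and the transition kernel $P$ through normalization, with the convexity of each $\mathcal{P}_{h,s,a}$ playing the decisive role of keeping the constructed kernel inside $\mathcal{P}$. Concretely, write $q^i_h(s,a) := W^{\pi^i}(\textbf{1}_{h,s,a}, P^i)$ for the occupancy measure of the pair $(\pi^i,P^i)$, and set the target $\bar{q}_h(s,a) := \sum_i \lambda_i q^i_h(s,a)$. I would define
\[
\pi_h(a\mid s) := \frac{\bar{q}_h(s,a)}{\sum_{a'}\bar{q}_h(s,a')}, \qquad P_h(s'\mid s,a) := \frac{\sum_i \lambda_i q^i_h(s,a)\, P^i_h(s'\mid s,a)}{\bar{q}_h(s,a)},
\]
whenever the relevant denominator is positive, and arbitrarily otherwise (this will be harmless since the corresponding occupancy mass vanishes).

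The first step is to observe that for each fixed $(h,s,a)$ the vector $P_h(\cdot\mid s,a)$ is a convex combination of the vectors $\{P^i_h(\cdot\mid s,a)\}_i \subset \mathcal{P}_{h,s,a}$, with nonnegative weights $\lambda_i q^i_h(s,a)$ that sum to $\bar{q}_h(s,a)$. Since each $P^i \in \mathcal{P}$ and $\mathcal{P}_{h,s,a}$ is convex, this forces $P_h(\cdot\mid s,a) \in \mathcal{P}_{h,s,a}$ for every $(h,s,a)$, hence $P \in \mathcal{P}$. This is the single place where the convexity hypothesis enters, and it is the conceptual heart of the statement; everything else is bookkeeping organized around it.

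The second step is to verify by induction on $h$ that the occupancy measure of $(\pi,P)$, namely $W^{\pi}(\textbf{1}_{h,s,a},P)$, equals $\bar{q}_h(s,a)$. The base case $h=1$ holds because every $q^i$ starts from the fixed state $s_1$, so $\sum_a \bar{q}_1(s_1,a)=1$ and $\pi_1$ reproduces the correct marginal. For the inductive step I would invoke the flow-conservation identity satisfied by each $q^i$, namely $\sum_a q^i_{h+1}(s',a) = \sum_{s,a} q^i_h(s,a)\,P^i_h(s'\mid s,a)$, aggregated against the weights $\lambda_i$. The chosen form of $P_h$ is engineered precisely so that $\sum_{s,a}\bar{q}_h(s,a)\,P_h(s'\mid s,a) = \sum_a \bar{q}_{h+1}(s',a)$; multiplying this by $\pi_{h+1}(a'\mid s')$ then recovers $\bar{q}_{h+1}(s',a')$, completing the induction. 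The degenerate case $\sum_a \bar{q}_{h+1}(s',a)=0$ forces $\bar{q}_{h+1}(s',\cdot)=0$ and is consistent with any choice of $\pi_{h+1}(\cdot\mid s')$.

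Finally, for the running time I would compute each $q^i$ by a single forward sweep over the horizon, costing $O(S^2AH)$ per pair and $O(nS^2AH)$ overall, and then form $\bar{q}$, $\pi$, and $P$ by aggregating over the $n$ pairs at each $(h,s,a,s')$ in $O(nS^2AH)$ time as well; this is comfortably within the claimed $O(nS^3A^2H^2)$ bound. I expect the only delicate point to be a bookkeeping issue rather than a genuine obstacle: tracking the zero-occupancy states so that the two normalizations are well defined and the induction goes through vacuously there. The real content is the convexity argument of the first step, which is exactly what guarantees the aggregated kernel remains a legal member of $\mathcal{P}$.
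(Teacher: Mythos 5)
Your proof is correct, and its core is the same as the paper's: realize the occupancy-weighted mixture as a single pair $(\pi,P)$, prove by induction on $h$ via flow conservation that the occupancy of $(\pi,P)$ equals $\bar q$, and use convexity of each $\mathcal{P}_{h,s,a}$ to keep the mixed kernel inside $\mathcal{P}$. The organizational difference is worth noting: the paper first inducts on $n$, reducing everything to pairwise merges of $(\pi^1,P^1)$ and $(\pi^2,P^2)$ (its $\lambda_{h,s,a}$-weighted kernel and occupancy-weighted policy are exactly your formulas specialized to $n=2$), whereas you build the $n$-way combination in one shot by normalizing $\bar q$. Your route is slightly cleaner and more general: it needs only a single induction, it makes explicit the degenerate zero-occupancy case that the paper glosses over (its $\lambda_{h,s,a}$ is ill-defined when $W^{\pi}(\mathbf{1}_{h,s,a},P)=0$), and your complexity accounting $O(nS^2AH)$ is sharper than, and certainly within, the claimed $O(nS^3A^2H^2)$. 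One small repair: where you say the kernel is chosen ``arbitrarily'' at zero-occupancy triples, the choice must still lie in $\mathcal{P}_{h,s,a}$ (e.g., take $P_h(\cdot\mid s,a)=P^1_h(\cdot\mid s,a)$), since otherwise the conclusion $P\in\mathcal{P}$ fails even though the occupancy identity is unaffected; the analogous freedom in $\pi_{h+1}(\cdot\mid s')$ is genuinely unconstrained.
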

Therefore, for any $\{ \lambda_i, \pi^i, P^i \}_{i=1}^n$ satisfying $\sum_{i=1}^n \lambda_i = 1$ and $\lambda_i \geq 0$ for $i\geq 1$ as input, there exists $\{\pi, P\}$ such that
$	W^{\pi}(\textbf{1}_{h,s,a}, P)= \sum_i \lambda_i W^{\pi^i}(\textbf{1}_{h,s,a}, P^i)$
	and $P_{h,s,a}\in \mathrm{Convex}(\{P_{h,s,a}^i \}_{i=1}^n)$
	for any $(h,s,a)\in [H]\times \mathcal{S}\times\mathcal{A}$, where $\mathrm{Convex}(\mathcal{U})$ denotes the convex hull of the set  $\mathcal{U}$. Then $\mathtt{Sum}$ is defined as $\mathtt{Sum}(\{ \lambda_i, \pi^i, P^i \}_{i=1}^n )= \{\pi,P\}$.

\subsection{Policy Elimination}

Given the dataset collected in the first two stages, we first compute the $\emph{known}$ set $\mathcal{W}$. Unlike  Algorithm~\ref{alg:raw_exploration}, we do not update $\mathcal{W}$ in the rest time because the first two stages can ensure that the probability of visiting $\mathcal{W}^{C}$ is $O(1/\sqrt{K})$. 

As mentioned in Section~\ref{sec:tec}, for each batch, we invoke reward-zero exploration to search for the policy with near-optimal coverage. Based on such a policy, we can provide uniform bound for the length of confidence intervals for all survived policies, which enables us to using the batch sizes in bandit algorithms \citep{perchet2016batched,gao2019batched}.  

Besides, to obtain a better regret bound, we estimate the optimal value function at the end of each batch, and use it to build a tighter confidence region. 
As presented in Line~\ref{line:alpha*} Algorithm~\ref{alg:policy_elimination}, we use two empirical transition probabilities to construct the confidence region.  Noting that the samples in the $m$-th batch is independent of $v^{m-1}$, 
we could add a Bernstein-style constraint, where $
    \alpha^*(n,p,v) =   5\sqrt{\frac{\mathbb{V}(p,v)\iota}{n}} +       \frac{3\iota}{n}.\label{eq:defalpha*}$.

\section{Conclusion}\label{sec:con}

In this paper, we study multi-batch reinforcement learning, and provide an efficient algorithm to achieve the near-optimal regret bound and batch complexity. It would be an interesting problem to generalize our results to reinforcement learning with function approximation case, e.g., linear MDP. Another important direction is to study the exact batch-regret trade-off  for multi-batch reinforcement learning.

	\paragraph{Broader Impact} This work focus on the theory of multi-batch reinforcement learning, and the broader impact is not applicable.

\bibliography{reference}
	\bibliographystyle{plainnat}

\medskip

\section*{Checklist}

The checklist follows the references.  Please
read the checklist guidelines carefully for information on how to answer these
questions.  For each question, change the default \answerTODO{} to \answerYes{},
\answerNo{}, or \answerNA{}.  You are strongly encouraged to include a {\bf
justification to your answer}, either by referencing the appropriate section of
your paper or providing a brief inline description.  For example:
\begin{itemize}
  \item Did you include the license to the code and datasets? \answerNA{The paper is theoretical and there is no numerical experiments.}
  \item Did you include the license to the code and datasets? \answerNA{The paper is theoretical and there is no numerical experiments.}
  \item Did you include the license to the code and datasets?\answerNA{The paper is theoretical and there is no numerical experiments.}
\end{itemize}
Please do not modify the questions and only use the provided macros for your
answers.  Note that the Checklist section does not count towards the page
limit.  In your paper, please delete this instructions block and only keep the
Checklist section heading above along with the questions/answers below.

\begin{enumerate}

\item For all authors...
\begin{enumerate}
  \item Do the main claims made in the abstract and introduction accurately reflect the paper's contributions and scope?
    \answerYes{We provide a near-optimal regret bound for multi-batch RL}
  \item Did you describe the limitations of your work?
    \answerYes{We focus on studying the tabular case. More efforts are required to extend the results to RL with function approximation}
  \item Did you discuss any potential negative societal impacts of your work?
    \answerNA{The paper is theoretical and there is no possible negative societal impacts.}
  \item Have you read the ethics review guidelines and ensured that your paper conforms to them?
    \answerYes{}.
\end{enumerate}

\item If you are including theoretical results...
\begin{enumerate}
  \item Did you state the full set of assumptions of all theoretical results?
    \answerYes{}
        \item Did you include complete proofs of all theoretical results?
    \answerYes{We sketch the proof in the main body. The details are postpone to the appendix}
\end{enumerate}

\item If you ran experiments...
\begin{enumerate}
  \item Did you include the code, data, and instructions needed to reproduce the main experimental results (either in the supplemental material or as a URL)?
\answerNA{The paper is theoretical and there is no numerical experiments.}
  \item Did you specify all the training details (e.g., data splits, hyperparameters, how they were chosen)?
  \answerNA{}
        \item Did you report error bars (e.g., with respect to the random seed after running experiments multiple times)?
    \answerNA{}
        \item Did you include the total amount of compute and the type of resources used (e.g., type of GPUs, internal cluster, or cloud provider)?
    \answerNA{}
\end{enumerate}

\item If you are using existing assets (e.g., code, data, models) or curating/releasing new assets...
\begin{enumerate}
  \item If your work uses existing assets, did you cite the creators?
\answerNA{The paper is theoretical and there is no numerical experiments.}
  \item Did you mention the license of the assets?
    \answerNA{}
  \item Did you include any new assets either in the supplemental material or as a URL?
    \answerNA{}
  \item Did you discuss whether and how consent was obtained from people whose data you're using/curating?
    \answerNA{}
  \item Did you discuss whether the data you are using/curating contains personally identifiable information or offensive content?
    \answerNA{}
\end{enumerate}

\item If you used crowdsourcing or conducted research with human subjects...
\begin{enumerate}
  \item Did you include the full text of instructions given to participants and screenshots, if applicable?
    \answerNA{This paper is irrelevant to crowdsourcing or human projects.}
  \item Did you describe any potential participant risks, with links to Institutional Review Board (IRB) approvals, if applicable?
    \answerNA{}
  \item Did you include the estimated hourly wage paid to participants and the total amount spent on participant compensation?
    \answerNA{}
\end{enumerate}

\end{enumerate}


\newpage
\appendix

	\section{Technical Lemmas}
	\begin{lemma}\label{lemma:bennet}
		Let $Z,Z_1,...,Z_n$  be i.i.d. random variables with values in $[0,1]$ and let $\delta>0$. Define $\mathbb{V}Z = \mathbb{E}\left[(Z-\mathbb{E}Z)^2 \right]$. Then we have
		\begin{align}
		\mathbb{P}\left[ \left|\mathbb{E}\left[Z\right]-\frac{1}{n}\sum_{i=1}^n Z_i  \right| > \sqrt{\frac{  2\mathbb{V}Z \ln(2/\delta)}{n}} +\frac{\ln(2/\delta)}{n} \right]\leq \delta.\nonumber
		\end{align}
	\end{lemma}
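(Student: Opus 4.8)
The plan is to derive the claim from the classical one-sided Bernstein inequality, applied separately to $\{Z_i\}$ and $\{-Z_i\}$, followed by a union bound and a short inversion computation to match the stated threshold. Throughout write $\sigma^2 = \mathbb{V}Z$ and $L = \ln(2/\delta)$.

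First I would establish the one-sided exponential tail bound via the Chernoff method. Centering $X_i = Z_i - \mathbb{E}Z \in [-1,1]$, so that $\mathbb{E}X_i = 0$ and $\mathbb{E}X_i^2 = \sigma^2$, we have for every $\lambda > 0$ that $\mathbb{P}[\sum_i X_i > nt] \le e^{-\lambda nt}\prod_i \mathbb{E}[e^{\lambda X_i}]$. Using the standard moment-generating-function bound $\mathbb{E}[e^{\lambda X}] \le \exp(\sigma^2(e^\lambda - 1 - \lambda))$, valid for mean-zero variables bounded above by $1$, together with $e^\lambda - 1 - \lambda \le \frac{\lambda^2}{2(1-\lambda/3)}$ for $0 \le \lambda < 3$ and optimizing over $\lambda$, yields the Bernstein tail $\mathbb{P}[\frac{1}{n}\sum_i X_i > t] \le \exp(-\frac{nt^2/2}{\sigma^2 + t/3})$. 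Since $-X_i \le \mathbb{E}Z \le 1$ as well, the identical argument applied to $-X_i$ bounds the lower tail the same way.

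Next I would invert the exponent at the prescribed threshold $t = \sqrt{2\sigma^2 L/n} + L/n$. Writing $a = \sqrt{2\sigma^2 L/n}$ and $b = L/n$ so that $t = a+b$ and $nb = L$, I expand $nt^2/2 = \frac{n}{2}a^2 + nab + \frac{n}{2}b^2 = \sigma^2 L + La + \frac{L^2}{2n}$. I then compare with $L(\sigma^2 + t/3) = \sigma^2 L + \frac{La}{3} + \frac{L^2}{3n}$ and observe that $La \ge \frac{La}{3}$ and $\frac{L^2}{2n} \ge \frac{L^2}{3n}$, so $nt^2/2 \ge L(\sigma^2 + t/3)$, i.e. $\frac{nt^2/2}{\sigma^2 + t/3} \ge L$. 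Hence each one-sided probability is at most $e^{-L} = \delta/2$.

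Finally, a union bound over the two tails gives the stated two-sided bound of $\delta$. The only mildly delicate step is the inversion algebra: one must split $t$ into its two additive pieces and match the quadratic and linear contributions separately, rather than bounding $t^2$ crudely. Because the stated second-order constant ($\frac{1}{n}$ instead of the sharper $\frac{1}{3n}$) is already loose, the inequality holds with room to spare. If a self-contained derivation of the Bernstein tail is deemed unnecessary, the one-sided bound may simply be cited, leaving only the short inversion computation to carry out.
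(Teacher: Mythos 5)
Your proposal is correct. Note that the paper itself offers no proof of this lemma: it is stated in the ``Technical Lemmas'' appendix as a standard Bernstein/Bennett-type concentration bound, so there is no in-paper argument to compare against; your derivation is the standard one that the authors implicitly rely on. The two points that actually need care both check out. First, for the lower tail you correctly observe that one-sided boundedness suffices: with $X_i = Z_i - \mathbb{E}Z$ one has $-X_i = \mathbb{E}Z - Z_i \le \mathbb{E}Z \le 1$ because $Z_i \ge 0$, so the same MGF bound $\mathbb{E}[e^{\lambda X}] \le \exp\bigl(\sigma^2(e^{\lambda}-1-\lambda)\bigr)$ (which only requires $X \le 1$, mean zero) applies to both tails. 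Second, the inversion algebra is right: with $t = a+b$, $a = \sqrt{2\sigma^2 L/n}$, $b = L/n$, one gets
\begin{align}
\frac{nt^2}{2} - L\left(\sigma^2 + \frac{t}{3}\right) = \sigma^2 L + La + \frac{L^2}{2n} - \sigma^2 L - \frac{La}{3} - \frac{L^2}{3n} = \frac{2La}{3} + \frac{L^2}{6n} \ge 0,\nonumber
\end{align}
so each one-sided Bernstein tail $\exp\bigl(-\tfrac{nt^2/2}{\sigma^2 + t/3}\bigr)$ is at most $e^{-L} = \delta/2$, and the union bound finishes. As you note, the slack comes precisely from the lemma's loose second-order constant ($L/n$ rather than $L/(3n)$), which is why the comparison holds term by term with room to spare.
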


\begin{lemma}\label{lemma:ete}
Let $X_1,X_2,\ldots$ be a sequence of random variables taking value in $[0,l]$. Define $\mathcal{F}_k =\sigma(X_1,X_2,\ldots,X_{k-1})$ and $Y_k = \mathbb{E}[X_k|\mathcal{F}_k]$ for $k\geq 1$. For any $\delta>0$, we have that
\begin{align}
& \mathbb{P}\left[ \exists n, \sum_{k=1}^n X_k \leq  3\sum_{k=1}^n Y_k+ l\ln(1/\delta)\right]\leq \delta\nonumber
\\  & \mathbb{P}\left[  \exists n,  \sum_{k=1}^n Y_k \geq 3\sum_{k=1}^n X_k + l\ln(1/\delta)  \right]    \leq \delta .\nonumber 
\end{align}
\end{lemma}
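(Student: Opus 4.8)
The plan is to prove Lemma~\ref{lemma:ete} by the exponential (Chernoff) supermartingale method combined with Ville's maximal inequality, handling the two directions symmetrically. (I read the first displayed inequality as the upper‑tail statement $\sum_{k\le n}X_k\ge 3\sum_{k\le n}Y_k+l\ln(1/\delta)$, matching the form of the second line.) First I would reduce to the normalized case by setting $\tilde X_k = X_k/l\in[0,1]$ and $\tilde Y_k = Y_k/l = \mathbb{E}[\tilde X_k\mid\mathcal F_k]$, so it suffices to prove $\mathbb{P}[\exists n:\ \sum_{k\le n}\tilde X_k \ge 3\sum_{k\le n}\tilde Y_k + \ln(1/\delta)]\le\delta$ together with the companion bound obtained by swapping the roles of $\tilde X$ and $\tilde Y$.

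The engine is the elementary chord bound: for $z\in[0,1]$ and $\lambda\ge 0$, convexity gives $e^{\lambda z}\le 1+(e^{\lambda}-1)z$ and $e^{-\lambda z}\le 1-(1-e^{-\lambda})z$. Taking conditional expectations and using $1+x\le e^{x}$ yields $\mathbb{E}[e^{\lambda\tilde X_k}\mid\mathcal F_k]\le e^{(e^{\lambda}-1)\tilde Y_k}$ and $\mathbb{E}[e^{-\lambda\tilde X_k}\mid\mathcal F_k]\le e^{-(1-e^{-\lambda})\tilde Y_k}$. For the first direction I would define $M_n=\exp\!\big(\lambda\sum_{k\le n}\tilde X_k-(e^{\lambda}-1)\sum_{k\le n}\tilde Y_k\big)$ and verify, using that $\tilde Y_n$ and $M_{n-1}$ are $\mathcal F_n$‑measurable, that $(M_n)$ is a nonnegative supermartingale with $M_0=1$. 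Ville's inequality then gives $\mathbb{P}[\sup_n M_n\ge 1/\delta]\le\delta$, and unwinding the event $M_n\ge 1/\delta$ shows that on the bad event $\sum_{k\le n}\tilde X_k \ge \frac{e^{\lambda}-1}{\lambda}\sum_{k\le n}\tilde Y_k+\frac{1}{\lambda}\ln(1/\delta)$. Choosing $\lambda=1$ makes these coefficients $e-1<3$ and $1$, so the target event is contained in the controlled event and the claim follows; the second direction is identical with $M_n=\exp\!\big(-\lambda\sum_{k\le n}\tilde X_k+(1-e^{-\lambda})\sum_{k\le n}\tilde Y_k\big)$.

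The hard part will be two issues of a different flavor than the main probabilistic argument. First, the statement is uniform over $n$ (the ``$\exists n$''), so a fixed‑$n$ Markov/Chernoff bound does not suffice: I must invoke the maximal inequality for nonnegative supermartingales, or equivalently apply optional stopping at the first crossing time together with the supermartingale convergence theorem, in order to promote the pointwise tail bound to one that is uniform in $n$. Second, matching the precise constants depends delicately on the choice of $\lambda$: in the $\sum X$ direction $\lambda=1$ cleanly produces coefficients $(e-1,\,1)$, both dominated by $(3,\,1)$, whereas in the $\sum Y$ direction the attainable rate $1-e^{-\lambda}<1$ introduces a small unavoidable slack in the coefficient of $l\ln(1/\delta)$, so I would either optimize $\lambda$ to absorb it into the constant $3$ or carry the harmless extra factor. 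This constant bookkeeping is the only genuinely fiddly part, the entire probabilistic content residing in the supermartingale construction and Ville's inequality.
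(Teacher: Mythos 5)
Your overall strategy coincides with the paper's own proof: an exponential supermartingale controlled step-by-step through an elementary bound on $\mathbb{E}[e^{\pm\lambda X_k}\mid\mathcal{F}_k]$, followed by a maximal inequality (the paper phrases Ville's inequality as optional stopping at the first crossing time; you cite Ville directly — these are the same thing). Your reading of the misprinted first inequality as the upper-tail event $\sum_k X_k\ge 3\sum_k Y_k+l\ln(1/\delta)$ also matches what the paper actually proves, and your treatment of that direction is correct: with $\lambda=1$ the chord bound gives the controlled event with coefficients $(e-1,1)$, and since $e-1<3$ and $Y_k\ge 0$ the target event is contained in it.

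The genuine gap is in the second inequality, and it is not ``fiddly constant bookkeeping'': your Plan A (``optimize $\lambda$ to absorb the slack into the constant $3$'') provably cannot work. Containment of events requires simultaneously $\frac{\lambda}{1-e^{-\lambda}}\le 3$ and $\frac{1}{1-e^{-\lambda}}\le 1$, and the second condition fails for every $\lambda>0$. The obstruction is intrinsic: any supermartingale of the form $\exp\bigl(a\sum_k Y_k-b\sum_k X_k\bigr)$ must satisfy $a\le (1-e^{-bl})/l<1/l$ (test the step condition on conditional laws supported on $\{0,l\}$ with success probability tending to $0$), whereas concluding via Ville on the target event — which allows $\sum_k X_k=0$ — forces $a\ge 1/l$. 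Indeed the second inequality is \emph{false} with the stated constants: for $X_k$ i.i.d.\ Bernoulli$(p)$, $l=1$, $p\to 0$, the walk $\sum_k(Y_k-3X_k)$ has Lundberg exponent $\theta^*$ solving $\theta=1-e^{-3\theta}$, i.e.\ $\theta^*\approx 0.94<1$, so $\mathbb{P}\bigl[\sup_n\sum_{k\le n}(Y_k-3X_k)\ge \ln(1/\delta)\bigr]=\Theta(\delta^{0.94})$, which exceeds $\delta$ for small $\delta$. So you are forced into your Plan B: with $\lambda^*\approx 2.82$ (the root of $\lambda=3(1-e^{-\lambda})$) your supermartingale yields the second bound with $\frac{1}{1-e^{-\lambda^*}}\,l\ln(1/\delta)\approx 1.07\,l\ln(1/\delta)$ in place of $l\ln(1/\delta)$, and you should state and prove that weakened form as the lemma. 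This puts you in no worse a position than the paper itself: the paper's choice $t=1/(3l)$ in its own sketch only establishes the version with $9\,l\ln(1/\delta)$, and every downstream use of the lemma (e.g.\ the visit-count lower bounds) tolerates such constant inflation of the additive term.
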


\begin{proof} Let $t\in [0,1/l]$ be fixed.
Consider to bound $Z_k:=\mathbb{E}[\exp(t\sum_{k'=1}^k(X_{k'}-3Y_{k'})  )]$. By definition, we have that
\begin{align}
    \mathbb{E}[Z_k|\mathcal{F}_k]  & =\exp(t\sum_{k'=1}^{k}(X_{k'}-3Y_{k'})) \mathbb{E}\left[t(X_{k}-3Y_{k})\right] \nonumber 
    \\ & \leq \exp(t\sum_{k'=1}^{k}(X_{k'}-3Y_{k'}))\exp(3Y_{k})\cdot \mathbb{E}[1+tX_k+2t^2X^2_{k}]\nonumber 
    \\ & \leq \exp(t\sum_{k'=1}^{k}(X_{k'}-3Y_{k'}))\exp(3Y_{k})\cdot \mathbb{E}[1+3tX_k]\nonumber 
    \\ & = \exp(t\sum_{k'=1}^{k}(X_{k'}-3Y_{k'}))\exp(3Y_{k})\cdot (1+3tY_k)\nonumber 
    \\ & \leq \exp(t\sum_{k'=1}^{k}(X_{k'}-3Y_{k'}))\nonumber
    \\ & =Z_{k-1},\nonumber 
\end{align}
where the second line is by the fact that $e^x\leq 1+x+2x^2$ for $x\in [0,1]$. 
Define $Z_0 = 1$
Then $\{Z_{k}\}_{k\geq 0}$ is a super-martingale with respect to $\{\mathcal{F}_{k}\}_{k\geq 1}$. Let $\tau$ be the smallest $n$ such that $\sum_{k=1}^n X_k - 3\sum_{k=1}^n Y_k >l\ln(1/\delta)$.
It is easy to verify that $Z_{\min\{\tau,n\}}\leq \exp(tl\ln(1/\delta)+tl)<\infty$. Choose $t=1/l$.
By the optimal stopping time theorem, we have that
\begin{align}
 & \mathbb{P}\left[ \exists n\leq N, \sum_{k=1}^n X_k \geq  3\sum_{k=1}^n Y_k + l\ln(1/\delta)\right]\nonumber 
 \\ & =\mathbb{P}\left[\tau \leq N\right]\nonumber 
 \\ & \leq \mathbb{P}\left[ Z_{\min\{\tau,N\}}\geq \exp(tl\ln(1/\delta))  \right]\nonumber 
 \\ & \leq \frac{\mathbb{E}[Z_{\min\{\tau,N\}}]}{\exp(tl\ln(1/\delta))}\nonumber 
 \\ & \leq \delta. \nonumber
\end{align}

Letting $N\to \infty$, we have that
\begin{align}
& \mathbb{P}\left[ \exists n, \sum_{k=1}^n X_k \leq  3\sum_{k=1}^n Y_k+ l\ln(1/\delta)\right]\leq \delta.\nonumber
\end{align}
Considering $W_k = \mathbb{E}[\exp(t\sum_{k'=1}^k (Y_k/3-X_k))]$, using similar arguments  and choosing $t=1/(3l)$, we have that
\begin{align}
 & \mathbb{P}\left[  \exists n,  \sum_{k=1}^n Y_k \geq 3\sum_{k=1}^n X_k + l\ln(1/\delta)  \right]    \leq \delta .\nonumber 
\end{align}
The proof is completed.
\end{proof}

	\begin{lemma}\label{lemma:pd} Let the policy $\pi$ and reward $r$ be fixed. Let  $p$ and $p'$ be two transition model, it holds that
	\begin{align}
	    W^{\pi}(r,p)-W^{\pi}(r,p') = \sum_{h,s,a} W^{\pi}(\textbf{1}_{h,s,a},p) (p'_{h,s,a}-p_{h,s,a})V'_{h+1},
	\end{align}
	where $\{V'_{h}(s)\}_{(h,s)\in [H]\times \mathcal{S}}$ is the value function under $p'$ following $\pi$.
	\end{lemma}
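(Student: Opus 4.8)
The plan is to prove this as an instance of the value-difference (simulation) identity, comparing the value of the fixed policy $\pi$ under the two transition models $p$ and $p'$ layer by layer. Write $V_h$ and $V'_h$ for the value functions of $\pi$ under $p$ and $p'$ respectively, so that $W^{\pi}(r,p)=V_1(s_1)$ and $W^{\pi}(r,p')=V'_1(s_1)$, and both obey the Bellman recursion $V_h(s)=\sum_a \pi_h(a|s)\left(r_h(s,a)+p_{h,s,a}V_{h+1}\right)$ (and the analogue with $p',V'$), with the boundary condition $V_{H+1}=V'_{H+1}=0$. The statement is a deterministic identity once $p,p',\pi,r$ are fixed, so no concentration argument enters; everything reduces to an exact telescoping using only these Bellman recursions and the occupancy representation $W^{\pi}(\textbf{1}_{h,s,a},p)=d^{\pi}_h(s,a)$ from the preliminaries.

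First I would telescope the $p'$-value function along the trajectory distribution induced by $(\pi,p)$, i.e.\ along the same model whose occupancy $W^{\pi}(\textbf{1}_{h,s,a},p)$ appears on the right-hand side. Using the one-step identity $\mathbb{E}_{\pi,p}\left[p_{h,s_h,a_h}V'_{h+1}\right]=\mathbb{E}_{\pi,p}\left[V'_{h+1}(s_{h+1})\right]$ together with $V'_{H+1}=0$, the partial sums of $V'_h(s_h)-p_{h,s_h,a_h}V'_{h+1}$ telescope in expectation, which is the step that pairs the next-layer value $V'_{h+1}$ (still taken under $p'$) with occupancy weights taken under $p$.

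Next I would substitute the $p'$-Bellman equation $V'_h(s_h)=\mathbb{E}_{a\sim\pi_h}\left[r_h(s_h,a)+p'_{h,s_h,a}V'_{h+1}\right]$ into each summand. The reward pieces re-assemble into $\mathbb{E}_{\pi,p}\left[\sum_h r_h(s_h,a_h)\right]=W^{\pi}(r,p)$, and the layer-$h$ remainder is precisely the one-step transition mismatch $\left(p'_{h,s_h,a_h}-p_{h,s_h,a_h}\right)V'_{h+1}$ acting on the next-layer $p'$-value. Converting the trajectory expectation into occupancy weights through $\mathbb{E}_{\pi,p}[g(s_h,a_h)]=\sum_{s,a}W^{\pi}(\textbf{1}_{h,s,a},p)\,g(s,a)$ turns these remainders into $\sum_{h,s,a}W^{\pi}(\textbf{1}_{h,s,a},p)\left(p'_{h,s,a}-p_{h,s,a}\right)V'_{h+1}$; collecting this with the reward contribution and rearranging gives exactly the claimed identity.

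The main obstacle is purely the sign and pairing bookkeeping: I must keep the occupancy weights under $p$ (the model along which the telescoping is run) while the next-layer value $V'_{h+1}$ stays under $p'$, and I must keep the orientation of the one-step difference $\left(p'_{h,s,a}-p_{h,s,a}\right)$ consistent with the orientation of $W^{\pi}(r,p)-W^{\pi}(r,p')$ on the left. The boundary condition $V'_{H+1}=0$ is what makes the telescoping exact, and once the orientation is fixed the identity follows without any further estimates.
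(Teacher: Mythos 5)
Your method is the right one---and in fact the paper states Lemma~\ref{lemma:pd} in its technical-lemmas section without any proof, so there is no paper argument to diverge from. Telescoping $V'$ along the $(\pi,p)$-trajectory distribution, substituting the $p'$-Bellman equation, and converting trajectory expectations into occupancy weights via $\mathbb{E}_{\pi,p}[g(s_h,a_h)]=\sum_{s,a}W^{\pi}(\textbf{1}_{h,s,a},p)g(s,a)$ is exactly the standard simulation-lemma proof, and all of your intermediate steps (the boundary condition $V'_{H+1}=0$, the one-step identity $\mathbb{E}_{\pi,p}[p_{h,s_h,a_h}V'_{h+1}]=\mathbb{E}_{\pi,p}[V'_{h+1}(s_{h+1})]$) are sound.

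The one genuine issue is the final sign, which you correctly flagged as the main obstacle and then asserted away. Carrying your own telescoping to the end gives
\begin{align}
W^{\pi}(r,p') \;=\; \mathbb{E}_{\pi,p}\Big[\sum_{h=1}^H \big(V'_h(s_h)-p_{h,s_h,a_h}V'_{h+1}\big)\Big] \;=\; W^{\pi}(r,p) \;+\; \sum_{h,s,a} W^{\pi}(\textbf{1}_{h,s,a},p)\,(p'_{h,s,a}-p_{h,s,a})V'_{h+1},\nonumber
\end{align}
so the displayed sum equals $W^{\pi}(r,p')-W^{\pi}(r,p)$, i.e.\ the \emph{negative} of the left-hand side as printed in Lemma~\ref{lemma:pd}. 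Your derivation therefore proves the identity with $(p_{h,s,a}-p'_{h,s,a})$ on the right; no re-orientation of the telescoping can produce $(p'_{h,s,a}-p_{h,s,a})$ paired simultaneously with occupancy under $p$, value under $p'$, and $W^{\pi}(r,p)-W^{\pi}(r,p')$ on the left. The statement as printed carries a sign typo: this is corroborated by its actual use in the proof of Lemma~\ref{lemma:gapbound}, where the decomposition appears with occupancy under $P'$, value $f$ under $\bar P$, and difference $(P'_{h,s,a}-\bar P_{h,s,a})$---the corrected orientation---and in every application only absolute values are taken, so nothing downstream is affected. You should state the corrected identity and note the discrepancy rather than claim the rearrangement ``gives exactly the claimed identity,'' which your own computation contradicts.
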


\section{Lower Bound (Proof of Theorem~\ref{thm:lb})}\label{app:lb}
 Firstly, by the lower bound on batched bandit (Theorem 3 in \citep{gao2019batched}), to achieve $O(\mathrm{poly}(S,A,H)\sqrt{K})$ regret, the number of batches is at least $\Omega(\log_2\log_2(K))$. To show a lower bound of $\Omega(H/\log_A(K))$, we have the lemma below by considering an MDP with $2$ states and $A$ actions. 
\begin{lemma}\label{lemma:lb1}
Let $\mathcal{S}=\{s^{(0)}, s^{(1)}\}$, $\mathcal{A}=\{a_0,a_1,\ldots,a_{A}\}$ and $s_{1}=s^{(0)}$. Let $d=\left\lfloor 2\log_A(K)\right \rfloor+2$.  For $v=[v_1,v_2,\ldots,v_d]^{\top}\in A^d$, we define the transition model $P^{v}$ by setting $P^{v}_{h,s^{(0)},a_{x}}=[1,0]^{\top},\forall x\neq v_h$, $P^{v}_{h,s^{(0)},a_{v_h}}=[0,1]^{\top}$ and $P^{v}_{h,s^{(1)},a_{x}}=[0,1]^{\top},\forall 1\leq x\leq A$  for $1\leq h\leq d$. Let $\pi$ be a stochastic policy, Then there exists $v$ such that with probability $1-\frac{1}{K}$, $(h,s^{(0)})$ is never visited in $K$ episodes following $\pi$.
\end{lemma}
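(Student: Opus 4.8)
My reading is that this instance is a combination lock in disguise: the hidden sequence $v = (v_1,\dots,v_d)$ encodes, at each step $h$, the unique action $a_{v_h}$ that keeps the chain at $s^{(0)}$, while every other action sends it irrevocably to the absorbing state $s^{(1)}$. Under this structure the target event is reaching $(d,s^{(0)})$ (the deepest occurrence of $s^{(0)}$, so I read the free $h$ as $h=d$), and my plan is a one‑shot adversarial averaging argument. First I would fix the arbitrary stochastic policy $\pi$ and record the structural fact that a trajectory is at $s^{(0)}$ at step $d$ if and only if it has played $a_{v_h}$ at every step $h = 1,\dots,d-1$; a single deviation ends the trajectory at $s^{(1)}$ forever. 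Since the action at each step is drawn independently from $\pi_h(\cdot\mid s^{(0)})$ conditioned on still being at $s^{(0)}$, the per‑episode probability of ever reaching $(d,s^{(0)})$ factorizes as $\rho(v)=\prod_{h=1}^{d-1}\pi_h(a_{v_h}\mid s^{(0)})$.

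Next I would choose $v$ adversarially, decoupled coordinate by coordinate: because the factors of $\rho(v)$ are independent across $h$, minimizing the product is the same as minimizing each factor, so for each $h\in\{1,\dots,d-1\}$ I set $v_h$ to the admissible action of smallest probability under $\pi_h(\cdot\mid s^{(0)})$. As $\pi_h(\cdot\mid s^{(0)})$ is a probability distribution, the minimum over the $A$ admissible actions is at most the average $\tfrac1A\sum_{a}\pi_h(a\mid s^{(0)})\le \tfrac1A$, giving $\rho(v)\le A^{-(d-1)}$ (the remaining coordinates of $v$ being irrelevant and set arbitrarily). The choice $d=\lfloor 2\log_A K\rfloor+2$ is exactly calibrated so that $d-1\ge 2\log_A K$, whence $A^{-(d-1)}\le A^{-2\log_A K}=K^{-2}$.

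Finally, since the policy is fixed for the whole batch and the $K$ episodes are independent, the expected number of episodes that visit $(d,s^{(0)})$ is at most $K\rho(v)\le K\cdot K^{-2}=1/K$; by a union bound (equivalently Markov's inequality) the probability that $(d,s^{(0)})$ is visited in at least one of the $K$ episodes is at most $1/K$, so with probability $\ge 1-1/K$ it is never visited, as claimed. The only genuinely load‑bearing step is the structural factorization $\rho(v)=\prod_{h}\pi_h(a_{v_h}\mid s^{(0)})$ together with the observation that a single fixed policy cannot escape it: because $v$ is selected \emph{after} $\pi$, the per‑coordinate minimization is legitimate and no stochastic policy can spread its mass so as to keep every factor above $1/A$. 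Everything else (the logarithmic calibration of $d$ and the union bound) is routine. I would note in passing that this is precisely why a non‑adaptive batch is weak here — the adversary pins the lock against the committed $\pi$ — and that the full $\Omega(H/\log_A K)$ lower bound then follows by arguing that each batch can discover only one additional level of the lock.
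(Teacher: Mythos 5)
You have decoded the construction the way it is intended (the statement's $[1,0]^{\top}$/$[0,1]^{\top}$ assignments are swapped relative to the proof and to the concatenated construction used for Lemma~\ref{lemma:lb2}; the operative semantics is the combination lock you describe, with $a_{v_h}$ the unique surviving action, and the target event is indeed reaching the deepest layer $(d,s^{(0)})$). For a \emph{fixed} Markov randomized policy your argument is correct and complete: the reach probability factorizes across steps, each coordinatewise minimum is at most $1/A$, the calibration $d-1\ge 2\log_A K$ gives $A^{-(d-1)}\le K^{-2}$, and the union bound over $K$ episodes yields $1/K$; you are in fact more careful than the paper about the $K^{-2}$ versus $K^{-1}$ bookkeeping.

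The gap is that the lemma cannot be read as being about a fixed policy if it is to serve its purpose. In Lemma~\ref{lemma:lb2} it is applied to $\pi(i)$, the policy deployed in a later batch, which is a stochastic function of the data collected in earlier batches: $\pi$ is itself a random variable, i.e.\ a mixture over Markov policies, and this is exactly how the paper's proof treats it (``denote the distribution of $\pi$ as $\mathtt{D}$''). For a mixture your load-bearing step fails: the per-episode reach probability is $\mathbb{E}_{\pi\sim\mathtt{D}}\bigl[\prod_{h=1}^{d-1}\pi_h(a_{v_h}\mid s^{(0)})\bigr]$, which is not the product of the marginals, and minimizing marginals coordinate by coordinate can be badly wrong. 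Concretely, let $\mathtt{D}$ put mass $\frac{1}{2A}$ on the policy that plays $a_1$ deterministically at every step, and mass $1-\frac{1}{2A}$ on the policy that plays uniformly over $\{a_2,\ldots,a_A\}$ at every step. Then $a_1$ has the strictly smallest marginal (equal to $\frac{1}{2A}$) at every step, so your rule sets $v_h=1$ for all $h$; yet the reach probability of that $v$ is $\frac{1}{2A}$, not $(2A)^{-(d-1)}$, and over $K$ episodes the deep state is visited with probability $\frac{1}{2A}\gg 1/K$. A good $v$ still exists, but it must be chosen \emph{sequentially}, which is what the paper does: $v_h$ is taken to minimize the \emph{conditional} probability of the surviving action given survival through step $h-1$ (these conditional probabilities still sum to at most one, so the minimum is at most $1/A$, and the telescoping product recovers $A^{-(d-1)}$). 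In the example, conditioning on survival at step $1$ reveals the deterministic component, and the adversary then sets $v_2\neq 1$ to kill it. Your closing remark about obtaining the $\Omega(H/\log_A K)$ bound batch by batch runs into precisely this issue, since each batch's policy is random given the past; so the coordinatewise choice must be replaced by the conditional, survival-weighted choice for the argument to go through.
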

\begin{proof}
Denote the distribution of $\pi$ as $\mathtt{D}$, we define $x=[x_1,x_2,\ldots,x_d]^{\top}$ as below. 
Let $x_1 =\arg\min_{i}\mathbb{E}_{\pi\sim D}\left[\pi_1(a_i|s^{(0)})\right] $. For $2\leq h\leq d$, we define 
\begin{align} 
x_h = \arg\max_i\frac{\mathbb{E}_{\pi\sim \mathtt{D}}\left[\mathbb{I}_{\pi}[s_{h-1} = s^{(0)} | P^{x,h-1}]\pi_h(a_{i}|s_h) \right]  }{\mathbb{E}_{\pi\sim \mathtt{D}}\left[\mathbb{I}_{\pi}[s_{h-1} = s^{(0)} | P^{x,h-1}] \right] }  ,\nonumber
\end{align}
where $P^{x,h-1}$ denote the first $(h-1)$-layers of the transition model $P^{x}$. Because $\mathbb{P}_{\pi}[s_h = s^{(0)} | P^{x}]$ is determined by the first $(h-1)$-layers of $P^{x}$, $x_h$ is well-defined.
By definition we have that 
\begin{align}
  \frac{\mathbb{E}_{\pi\sim \mathtt{D}}\left[\mathbb{I}_{\pi}[s_{h-1} = s^{(0)} | P^{x,h-1}]\pi_h(a_{x_h}|s_h) \right]  }{\mathbb{E}_{\pi\sim \mathtt{D}}\left[\mathbb{I}_{\pi}[s_{h-1} = s^{(0)} | P^{x,h-1}] \right] } \leq \frac{1}{A}.\label{eq:scws}
\end{align}
Recall that $x = [x_1,x_2,\ldots, x_d]^{\top}$. For $1\leq h'\leq d$, by \eqref{eq:scws} we have that
\begin{align}
  &\mathbb{E}_{\pi\sim \mathtt{D}}\mathbb{P}_{\pi}\left[ s_{h'} = s^{(0)} | P^{x}\right]\nonumber \\ & = \mathbb{E}_{\pi\sim \mathtt{D}} \Pi_{h=1}^{h'-1}\pi_{h}(a_{x_h}| s^{(0)}) \nonumber
  \\ & = \mathbb{E}_{\pi\sim \mathtt{D}}\mathbb{P}_{\pi}\left[ s_{h'-1} = s^{(0)} | P^{x}\right] \cdot \frac{\mathbb{E}_{\pi\sim \mathtt{D}}\left[\mathbb{I}_{\pi}[s_{h'-1} = s^{(0)} | P^{x,h-1}]\pi_h(a_{x_h}|s_h) \right]  }{\mathbb{E}_{\pi\sim \mathtt{D}}\left[\mathbb{I}_{\pi}[s_{h'-1} = s^{(0)} | P^{x,h-1}] \right] } \nonumber
  \\ & \leq \frac{1}{A} \mathbb{E}_{\pi\sim \mathtt{D}}\mathbb{P}_{\pi}\left[ s_{h'-1} = s^{(0)} | P^{x}\right].
\end{align}
Therefore, $\mathbb{E}_{\pi\sim \mathtt{D}}\mathbb{P}_{\pi}\left[ s_{d} = s^{(0)} | P^{x}\right]\leq \frac{1}{A^{d-1}}\leq \frac{1}{K}$.
Then the probability of visiting $(h,s^{(0)})$ in $K$ episodes is at most $\frac{1}{K}$, where the conclusion follows.
\end{proof}
We name the MDP in Lemma~\ref{lemma:lb1} as a basic MDP.
Now we construct our counter-example by concatenating $\Theta(H/\log_A(K))$ basic MDPs and a tail MDP with large rewards.
Let $\mathcal{S}=\{s^{(0)}, s^{(1)}\}$ and $\mathcal{A}=\{a_1,a_2,\ldots,a_{A}\}$. Let $d=\left\lfloor 2\log_A(K)\right\rfloor+2$ and  $c= \left\lfloor\frac{H}{2d}\right\rfloor$. Then $c =C'H/\log_A(K)$ for some constant $C'$. For $v=[v_{1},v_{2},\ldots, v_{cd}]^{\top}\in \{0,1\}^{cd}$, we define the transition model $P^{v}$ as below: $P^{v}_{id+j,s^{(0)},a_{v_{id+j}}}=[1,0]^{\top}$, $P^{v}_{id+j,s^{(0)},a_{l}}=[0,1]^{T}$ for $l\neq v_{id+j}$ and $P^{v}_{id+j,s^{(1)},a_{l}}=[0,1]^{\top}$ for $1\leq l \leq A$ for any $0\leq i \leq c-1$ and $1\leq j\leq d$;  $P_{h,s^{(0)},a_{l}}=[1,0]^{\top}$ and $P_{h,s^{(1)},a_{l}}=[0,1]^{\top}$ for any $1\leq l \leq A$ and $cd+1\leq h\leq H$. The reward function $r$ is given by $r_{h,s^{(0)},a_{l}}$ for $1\leq l\leq A$ $cd+1\leq h\leq H$ and $0$ for other $(h,s,a)$ triples.

To achieve sub-linear regret, the agent needs to visit $(cd+1,s^{(0)})$ for at least one time. Then the proof is completed by the lemma below.
\begin{lemma}\label{lemma:lb2}
If the number of batches $M\leq c-2$, for any algorithm $\mathcal{G}$ there exists $v$ such that with probability $1-\frac{c}{K}\geq \frac{1}{2}$, $(cd+1,s^{(0)})$ is never visited.
\end{lemma}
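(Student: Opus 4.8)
The plan is to argue that the adversary can ``spend'' one block per batch: within a batch the agent uses a single fixed policy, and by Lemma~\ref{lemma:lb1} one fixed-policy batch is too weak to clear a fresh length-$d$ block. Since $M\le c-2<c$ there are more blocks than batches, so some frontier block is never cleared and the tail state $(cd+1,s^{(0)})$ is never reached. Concretely I would fix the combinations $\{v_h\}$ block by block, committing block $m-1$ only at the start of batch $m$, and maintain the induction hypothesis $H(m)$: blocks $0,\dots,m-2$ have been committed and, with probability at least $1-(m-1)/K$, during batches $1,\dots,m-1$ the agent never reaches $s^{(0)}$ at layer $(m-1)d+1$ (the entrance of the frontier block $m-1$). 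The base case $H(1)$ is vacuous since no batches have run.

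For the inductive step I condition on the good event of $H(m)$. On this event the trajectories of batches $1,\dots,m-1$ only visit good states in layers $\le (m-1)d$, so the collected data $\mathcal D^{<m}$ — and hence the distribution $\mathtt D$ of the batch-$m$ policy $\pi^m$ — is independent of $\{v_h\}_{h\ge (m-1)d+1}$, in particular of block $m-1$'s combination. I would then embed block $m-1$ into the basic-MDP template of Lemma~\ref{lemma:lb1}: conditioned on entering block $m-1$ in $s^{(0)}$, layers $(m-1)d+1,\dots,md$ are exactly a basic MDP, on which $\pi^m$ induces a stochastic policy with distribution $\mathtt D$. Applying Lemma~\ref{lemma:lb1} to $\mathtt D$ produces a combination for block $m-1$ making the expected per-episode probability of reaching its final good state at most $1/A^{d-1}\le 1/K^2$ (using $d=\lfloor 2\log_A K\rfloor+2$, so $A^{d-1}\ge K^2$). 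Since $\pi^m$ is held fixed across the $\le K$ episodes of the batch, a union bound (equivalently Markov on the expected count of successful episodes) shows the agent clears block $m-1$ during batch $m$ with probability at most $1/K$. Committing this combination and combining with $H(m)$ — the monotone structure means reaching layer $md+1$ in $s^{(0)}$ requires clearing block $m-1$, which happens neither in batches $1,\dots,m-1$ nor in batch $m$ — yields $H(m+1)$ with the failure probability increased by $1/K$.

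Iterating to $m=M+1$ gives: with blocks $0,\dots,M-1$ committed (and the remaining blocks set arbitrarily), with probability at least $1-M/K$ the agent never reaches $s^{(0)}$ at layer $Md+1$. Because once the agent enters $s^{(1)}$ it stays there and layers cannot be skipped, never reaching layer $Md+1$ in $s^{(0)}$ implies never reaching layer $cd+1$ in $s^{(0)}$, as $Md+1\le (c-2)d+1\le cd+1$. Hence $(cd+1,s^{(0)})$ is never visited with probability at least $1-M/K\ge 1-(c-2)/K\ge 1-c/K\ge \tfrac12$, which is the claim.

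The main obstacle is the consistency (coupling) argument that legitimizes choosing block $m-1$'s combination from $\mathtt D$: a priori $\pi^m$ depends on the entire MDP, including the block we are about to define, so the construction looks circular. The resolution is to run the agent against placeholder values for the uncommitted blocks, observe that on the inductive good event the agent never visits a good state inside block $m-1$ during the earlier batches and therefore never observes any transition depending on that block's combination, and conclude that $\mathtt D$ is genuinely independent of the value committed afterwards. Secondary care is needed to accumulate the $1/K$ error terms additively over the $M$ batches and to confirm the per-batch failure bound is $\le 1/K$ via $A^{d-1}\ge K^2$.
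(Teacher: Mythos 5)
Your proof is correct and follows essentially the same route as the paper's: the paper likewise chooses the block combinations recursively via Lemma~\ref{lemma:lb1}, exploiting that the policy in play when a fresh block is first attempted is a stochastic function of the earlier blocks only, bounding the per-batch clearing probability by $K\cdot A^{-(d-1)}\leq 1/K$, and union-bounding over at most $c$ blocks. The only difference is bookkeeping: the paper indexes by blocks, tracking $m_i$ (the batch of the first visit to $(id+1,s^{(0)})$) and showing $m_{i+1}\geq m_i+1$ with probability $1-1/K$, whereas you run the equivalent induction indexed by batches, showing the frontier advances by at most one block per batch.
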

\begin{proof}
Let $m_i$ denote the number of batches used at the time when $(id+1,s^{(0)})$ is visited for the first time. Besides, we let $\pi(i)$ denote the policy at time $m_i$. Because $\pi(i)$ is determined before visiting $(id+1,s^{(0)})$, given the algorithm $\mathcal{G}$, $\pi^{i}$ could be viewed as a stochastic function of $\{v_1,v_2,\ldots, v_{id} \}$. By Lemma~\ref{lemma:lb1}, when $\{v_1,v_2,\ldots, v_{id} \}$ is fixed, we can choose $\{v_{id+1},\ldots, v_{id+d} \}$ properly, so that with probability $1-\frac{1}{K}$, $((i+1)d+1, s^{(0)})$ is never visited in $K$ episodes following $\pi(i)$. Therefore, with probability $1-\frac{1}{K}$, $\pi(i+1)\neq \pi(i)$, which implies that $m_{i+1}\geq m_{i}+1$. By choosing $\{v_{id+1},v_{id+1},\ldots,v_{id+d} \}$ recursively following the way in Lemma~\ref{lemma:lb1} for $0\leq i \leq c-1$,  we have that with probability $1-\frac{c}{K}$, $m_{i+1}\geq m_{i}+1$ for $1\leq i\leq c$, where $m_{c}\geq c-1$ follows. Then the conclusion follows by the equation below.
\begin{align}
    \mathbb{P}\left[M\leq c-2, \,(cd+1,s^{(0)}) \text{ is visited} \right]= \mathbb{P}\left[ m_{c}\leq c-2 \right]\leq \frac{c}{M}. \nonumber
\end{align}
\end{proof}

\section{Efficient Implementation of the Proposed Algorithm}\label{sec:cei}

In this section, we analyze the computational cost of Algorithm~\ref{alg:main}. In particular, we first introduce the algorithm $\mathtt{PolicySearch}$ to show that it can help find the desired exploration policy efficiently. 

\subsection{The Algorithm}
$\mathtt{Policy\,Search}$ is presented in Algorithm~\ref{alg:ps}. The algorithms takes two reward functions $u,u'$ and a confidence region $\mathcal{P}$ as input, and output a policy $\pi$ and $p\in \mathcal{P}$ such that
$W^{\pi}(u',P) $ is large enough compared to $ \max_{\pi'\in \Pi(u,{\mathcal{P}}),}W^{\pi'}(u',P) $. 

In the algorithm, we first compute $a:=\max_{\pi}U^{\pi}(u+\textbf{1}_{z},\mathcal{P})$ and $b: =\max_{\pi}L^{\pi}(u,\mathcal{P}) $. Then we set the target reward as $u+\textbf{1}_{z}+\eta u'$ for different $\eta$ and learn the corresponding optimal policy and transition model $\{\pi^{\eta},P^{\eta}\}$. In intuition, the larger $\eta$ is, the larger $W^{\pi^{\eta}}(u,P^{\eta})$ is. In this way, we aim  to find the maximal $\eta$ such that $\pi^{\eta}$ is not eliminated, i.e., $\pi^{\eta}\in \Pi(u,\mathcal{P})$. To find such $\eta$, we play the naive dichotomy method as presented in Algorithm~\ref{alg:ps}.

When $u = r$, we assume that $a-b\geq \frac{1}{K^3}$ without loss of generality. 
Note that when $a-b\leq \frac{1}{K^3}$, any policy $\pi$ in $\Pi(r,\mathcal{P})$ is $\frac{1}{K^3}$ optimal and we can follow $\pi$ in the rest episodes.

 In Algorithm~\ref{alg:ps}, we invoke extended value iteration (EVI, see Algorithm~\ref{alg:evi}) as a sub-routine. Algorithm~\ref{alg:evi} targets compute $(\pi,p)\leftarrow \arg\max_{\pi,p\in \mathcal{P}}W^{\pi}(u,p)$ for some reward function $u$ and confidence region $\mathcal{P}$. In finite-horizon MDP, this step could be implemented by back induction. So it suffices to solve $\arg\max_{a,p\in \mathcal{P}} p_{h,s,a}V_{h+1} $ where $V_{h+1}$ is the value function computed by back induction. Note that in this paper, the confidence region could be described by at most $O(S^2AK)$ linear constraints, which enables us to find an approximate solution in polynomial time. 
Besides, given $u$ and $\mathcal{P}$,  $\max{\pi}U^{\pi}(u,\mathcal{P})$ and $\max_{\pi}L^{\pi}(u,\mathcal{P})$ could be computed in a similar way, for which we present Algorithm~\ref{alg:ul}. 
As a conclusion, Algorithm~\ref{alg:ps} is computationally efficient. 

\begin{algorithm}
\caption{$\mathtt{Policy\,Search}$}
\begin{algorithmic}\label{alg:ps}
\STATE{$\textbf{Input:}$ reward $u$, $u'$, confidence region $\mathcal{P}=\otimes_{h,s,a}\mathcal{P}_{h,s,a}$;}
\STATE{\textbf{Initialization:} threshold $\epsilon=  \frac{1}{(SAHK)^10}$,  $b\leftarrow \max_{\pi}L^{\pi}(u,\mathcal{P})$, $a\leftarrow \max_{\pi}U^{\pi}(u+\textbf{1}_{z},\mathcal{P})$; $\eta_0 \leftarrow (a-b)/2$;}
\FOR{$i=0,1,2,\ldots,$}
\STATE{$\{\pi^{(i)},P^{(i)}\}\leftarrow \mathtt{EVI}(u+\textbf{1}_{z}+\eta_{i}u',\mathcal{P})$;}

\IF{$\frac{1}{\epsilon}\leq \eta_i<\frac{2}{\epsilon}$;}
\STATE{\textbf{Return:} $\pi^{(i)}$;}
\ELSIF{ $W^{\pi^{(i)}}(u,P^{(i)})\leq  b$ \label{line:mark1}}
\STATE{$\xi = \frac{b- W^{\pi^{(i)} } (u,P^{(i)})} {     W^{\pi^{(i-1)}}(u,P^{(i-1)})  - W^{\pi^{(i)} } (u,P^{(i)})  } $;}
\STATE{$(\check{\pi},\check{P})\leftarrow \mathtt{Sum}( \{\xi, \pi^{(i-1)} ,P^{(i-1)}\}, \{1-\xi, \pi^{(i)} ,P^{(i)}\}  )$}
\STATE{\textbf{Return:} $\check{\pi}$ ;}
\ELSE
\STATE{$\eta_{i+1}=2\eta_{i}$;}
\ENDIF
\ENDFOR

\end{algorithmic}
\end{algorithm}

\begin{algorithm}
\caption{$\mathtt{Extended\, Value\, Iteration}$ $(\mathtt{EVI})$}
\begin{algorithmic}\label{alg:evi}
\STATE{ \textbf{Input:} reward function $u$, confidence region $\mathcal{P}=\otimes_{h,s,a}\mathcal{P}_{h,s,a}$ }
\STATE{\textbf{Initialize:}  $Q_h(s,a)\leftarrow 0, V_h(s)\leftarrow 0,\forall (h,s,a)\in [H+1]\times \mathcal{S}\times \mathcal{A}$}
\FOR{$h = H,H-1,\ldots,1$}
\STATE{ $Q_{h}(s,a)\leftarrow \max_{q\in \mathcal{P}_{h,s,a}}\left( u(s,a)+qV_{h+1}\right)$, $\forall (s,a)\in \mathcal{S}\times \mathcal{A}$; }
\STATE{$p_{h,s,a}\leftarrow \arg\max_{q\in\mathcal{P}_{h,s,a}}\left(u(s,a)+qV_{h+1}\right)$;}
\STATE{$V_h(s)\leftarrow \max_{a}Q_{h}(s,a)$, $\forall s\in \mathcal{S}$;}
\STATE{$\pi_h(a|s)\leftarrow \mathbb{I}[a=\arg\max_{a'}Q_h(s,a') ],\forall (s,a)$;}
\ENDFOR
\STATE{\textbf{Return:} $\{\pi,p\}$.}
\end{algorithmic}
\end{algorithm}

\begin{algorithm}
\caption{$\mathtt{Upper\& Lower\,Confidence\,Bound}$}
\begin{algorithmic}\label{alg:ul}
\STATE {\textbf{Input:} reward function $u$, confidence region $\mathcal{P}=\otimes_{h,s,a}\mathcal{P}_{h,s,a}$;}
\STATE {\textbf{Initialize}: $\overline{Q}_h(s,a),\overline{V}_h(s),\underline{Q}_{h}(s,a),\underline{V}_h(s)\leftarrow 0$, $\forall (h,s,a)\in [H+1]\times \mathcal{S}\times \mathcal{A}$;}
\FOR{$h=H,H-1,\ldots,1$}
\STATE{$\overline{Q}_h(s,a)\leftarrow \max_{q\in \mathcal{P}_{h,s,a}}(u(s,a)+q\overline{V}_{h+1} )$, $\forall (s,a)\in \mathcal{S}\times \mathcal{A}$;}
\STATE{$\overline{V}_h(s)\leftarrow \max_{a}\overline{Q}_h(s,a)$, $\forall s\in \mathcal{S}$;}
\STATE{$\underline{Q}_h(s,a)\leftarrow \min_{q\in \mathcal{P}_{h,s,a}}(u(s,a)+q\underline{V}_{h+1} )$, $\forall (s,a)\in \mathcal{S}\times \mathcal{A}$;}
\STATE{$\underline{V}_h(s)\leftarrow \max_{a}\underline{Q}_h(s,a)$, $\forall s\in \mathcal{S}$;}
\ENDFOR
\STATE{\textbf{Return:} $\max_{\pi}U^{\pi}(u,\mathcal{P}):= \overline{V}_1(s_1)$, $\max_{\pi}L^{\pi}(u,\mathcal{P}):= \underline{V}_1(s_1)$;}
\end{algorithmic}
\end{algorithm}

\subsection{Theoretical Results and Proofs for Algorithm~\ref{alg:ps}}\label{app:cei}
\begin{lemma}\label{lemma:cei11}
	Let  $u,u'$ be  two reward functions and $\mathcal{P}$ be  a set of transition models.
		Assume $\mathcal{P} = \otimes_{h,s,a}\mathcal{P}_{h,s,a}$ is \emph{tight} w.r.t. a transition model  $P$. 
	Then by Algorithm~\ref{alg:ps} we can find $\pi$ such that
				\begin{align}
				W^{\pi}(u',P) \geq \frac{1}{18}\max_{\pi'\in \Pi(u,{\mathcal{P}}),}W^{\pi'}(u',P) -\frac{2}{9}\epsilon \nonumber
				\end{align}
				 in time $O(S^4AHM^3\log(SAHK)\log(SAHK/(a-b)))$, where $a = \max_{\pi}U^{\pi}(u+\textbf{1}_z,\mathcal{P})$ and $b=\max_{\pi}L^{\pi}(u,\mathcal{P})$.
\end{lemma}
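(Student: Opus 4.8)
The plan is to read Algorithm~\ref{alg:ps} as an approximate Lagrangian solver for the constrained problem $\mathrm{OPT}:=\max_{\pi\in\Pi(u,\mathcal P)}W^{\pi}(u',P)$, where the single membership constraint is $U^{\pi}(u,\mathcal P)=\max_{p\in\mathcal P}W^{\pi}(u+\textbf{1}_{z},p)\ge b$ with $b=\max_{\pi'}L^{\pi'}(u,\mathcal P)$. I would introduce the scalarized value $F(\eta):=\max_{\pi,p\in\mathcal P}W^{\pi}(u+\textbf{1}_{z}+\eta u',p)$; since $W^{\pi}(\cdot,p)$ is linear in the reward, $F$ is a supremum of affine functions of $\eta$, hence convex, and the pair $(\pi^{(i)},P^{(i)})=\mathtt{EVI}(u+\textbf{1}_{z}+\eta_i u',\mathcal P)$ attains $F(\eta_i)$ with supporting slope $W^{\pi^{(i)}}(u',P^{(i)})$, these slopes being nondecreasing in $i$. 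Throughout I would use two consequences of tightness: for any nonnegative reward $v$ and any $p\in\mathcal P$ the occupancy measures agree up to $e^{\pm1}$, so $e^{-1}W^{\pi}(v,P)\le W^{\pi}(v,p)\le e\,W^{\pi}(v,P)$, which lets me pass between the planning value $W^{\pi}(u',p)$ and the true value $W^{\pi}(u',P)$ at a constant multiplicative cost.

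First I would establish feasibility of the returned policy. In the interpolation branch, Lemma~\ref{lemma:stp} guarantees that $\mathtt{Sum}$ produces $(\check\pi,\check P)$ with $\check P_{h,s,a}\in\mathrm{Convex}(\{P^{(i-1)}_{h,s,a},P^{(i)}_{h,s,a}\})\subseteq\mathcal P_{h,s,a}$ and $W^{\check\pi}(\textbf{1}_{h,s,a},\check P)=\xi W^{\pi^{(i-1)}}(\textbf{1}_{h,s,a},P^{(i-1)})+(1-\xi)W^{\pi^{(i)}}(\textbf{1}_{h,s,a},P^{(i)})$. Writing any value as a nonnegative combination of these occupancies, the same convex identity holds for every reward, so the chosen $\xi$ forces $W^{\check\pi}(u,\check P)=b$ exactly; this is well defined with $\xi\in[0,1]$ precisely because $W^{\pi^{(i-1)}}(u,P^{(i-1)})>b\ge W^{\pi^{(i)}}(u,P^{(i)})$, the strict inequality being the condition that iteration $i-1$ took the \emph{else} branch. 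Then $U^{\check\pi}(u,\mathcal P)\ge W^{\check\pi}(u+\textbf{1}_{z},\check P)\ge W^{\check\pi}(u,\check P)=b$, so $\check\pi\in\Pi(u,\mathcal P)$. For the threshold branch I would argue that, the constraint having stayed slack across every doubling step, the $u'$-dominated optimizer $\pi^{(i)}$ cannot have crossed the feasible boundary and is therefore feasible as well.

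For the value guarantee I would combine a lower and an upper bound on $F$. Taking an optimal feasible $\pi^{\star}$ with a witness $p^{\star}\in\mathcal P$ satisfying $W^{\pi^{\star}}(u+\textbf{1}_{z},p^{\star})\ge b$ gives, via tightness, $F(\eta)\ge b+\eta\,e^{-1}\mathrm{OPT}$ for all $\eta\ge0$; dually, feasibility of any candidate yields $\mathrm{OPT}\le e\cdot(F(\eta)-b)/\eta$ for all $\eta>0$ after dividing the constraint out of the scalarized objective and paying one tightness factor. In the threshold case I plug $\eta_i\ge1/\epsilon$ into the lower bound and divide, obtaining $W^{\pi^{(i)}}(u',P^{(i)})\ge e^{-1}\mathrm{OPT}-O(H\epsilon)$, then convert to $P$ by a further $e^{-1}$. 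In the interpolation case I evaluate the scalarized value of $\check\pi$ at $\eta_i$: it equals the $\xi$-combination of $W^{\pi^{(i-1)}}(u+\textbf{1}_{z}+\eta_i u',P^{(i-1)})\ge F(\eta_{i-1})$ and $F(\eta_i)$, each bounded below by $b+\eta\,e^{-1}\mathrm{OPT}$; using $\eta_{i-1}=\eta_i/2$ and $\xi\le1$ this is at least $b+\tfrac12 e^{-1}\mathrm{OPT}\,\eta_i$, while it also equals $b+W^{\check\pi}(\textbf{1}_{z},\check P)+\eta_i W^{\check\pi}(u',\check P)$ since $W^{\check\pi}(u,\check P)=b$. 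Subtracting isolates $W^{\check\pi}(u',\check P)\ge\tfrac1{2e}\mathrm{OPT}-W^{\check\pi}(\textbf{1}_{z},\check P)/\eta_i$, and chaining the final tightness conversion gives $W^{\check\pi}(u',P)\ge\tfrac1{2e^{2}}\mathrm{OPT}-e^{-1}W^{\check\pi}(\textbf{1}_{z},\check P)/\eta_i$; since $\tfrac1{2e^{2}}>\tfrac1{18}$, the surviving multiplicative slack is what absorbs the lower-order terms into the claimed $\tfrac1{18}$.

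The step I expect to be the real obstacle is reconciling the two different ways the constraint appears: membership in $\Pi(u,\mathcal P)$ is defined optimistically through $U^{\pi}(u,\mathcal P)=\max_{p}W^{\pi}(u+\textbf{1}_{z},p)$, whereas the algorithm's branch test compares the surrogate $W^{\pi^{(i)}}(u,P^{(i)})$ (no virtual-state bonus, a fixed transition) against $b$. The gap between them — namely $W^{\pi^{(i)}}(\textbf{1}_{z},P^{(i)})$ plus the slack of using $P^{(i)}$ rather than the maximizing $p$ — is exactly what re-enters as the additive term $W^{\check\pi}(\textbf{1}_{z},\check P)/\eta_i$ above, and it is not automatically negligible when the constraint binds at a small $\eta_i$. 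My plan to close this is twofold: first, observe that the doubling schedule brackets the critical Lagrange multiplier within a factor of two ($\eta_{i-1}\le\eta^{\star}\le\eta_i=2\eta_{i-1}$), so that the bracketing costs only a multiplicative constant; second, show that an early crossing forces $\mathrm{OPT}$ itself to be correspondingly small through the duality inequality applied at the same $\eta_i$ together with the tightness control on the reachability of $z$, so that the troublesome term is absorbed into the multiplicative slack rather than the additive budget, leaving only the $\mathtt{EVI}$ solve precision — driven below $\tfrac29\epsilon$ by the tiny threshold $\epsilon=(SAHK)^{-10}$ — as genuine additive error. Finally, for the runtime I would count iterations: $\eta$ doubles from $(a-b)/2$ up to $\Theta(1/\epsilon)$, giving $O(\log(SAHK/(a-b)))$ passes, each one an $\mathtt{EVI}$ backward sweep of $H$ steps in which every $(s,a)$ requires maximizing a linear functional over the $\mathrm{poly}(SM)$-facet polytope $\mathcal P_{h,s,a}$ to accuracy $\epsilon$ in $O(\log(SAHK))$ inner steps; this multiplies out to $O(S^{4}AHM^{3}\log(SAHK))$ per pass and hence the stated total, with the closing $\mathtt{Sum}$ contributing only $O(S^{3}A^{2}H^{2})$.
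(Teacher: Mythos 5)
Your overall framing (Lagrangian scalarization, doubling over $\eta$, feasibility via $U^{\check{\pi}}(u,\mathcal{P})\geq b$, and the iteration-count times EVI-cost runtime bound) matches the paper, and your threshold branch is essentially sound up to an additive $O(H\epsilon)$ in place of $\tfrac{2}{9}\epsilon$, which is immaterial at $\epsilon=(SAHK)^{-10}$. But the interpolation branch has a genuine gap, and it is exactly the one you flag yourself: your bound $W^{\check{\pi}}(u',\check{P})\geq \tfrac{1}{2e}\mathrm{OPT}-W^{\check{\pi}}(\textbf{1}_{z},\check{P})/\eta_i$, with $\mathrm{OPT}:=\max_{\pi'\in\Pi(u,\mathcal{P})}W^{\pi'}(u',P)$, carries an uncontrolled subtracted term, and neither half of your repair plan closes it. Bracketing the critical multiplier within a factor of two only addresses the multiplicative loss from doubling; it says nothing about the $\textbf{1}_{z}$ term. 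And the claim that an early crossing forces $\mathrm{OPT}$ to be small cannot absorb the term, because $W^{\check{\pi}}(\textbf{1}_{z},\check{P})$ is the reachability of the virtual state under the returned policy and has no quantitative relation to $\mathrm{OPT}$: if $u'$ is supported on states that feasible policies reach only with vanishing probability while $z$ is reached with constant probability under $\check{\pi}$ and the crossing occurs at a moderate $\eta_i$, then $\mathrm{OPT}$ is tiny, the subtracted term is of constant order, and your lower bound is vacuous (negative), while the lemma still demands $W^{\check{\pi}}(u',P)\geq \tfrac{1}{18}\mathrm{OPT}-\tfrac{2}{9}\epsilon$.

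The root cause is that you interpolate so that the bonus-free value satisfies $W^{\check{\pi}}(u,\check{P})=b$ and then compare against the Lagrangian lower bound $F(\eta)\geq b+\eta e^{-1}\mathrm{OPT}$; the mismatch $W(\tilde{u},\cdot)-W(u,\cdot)=W(\textbf{1}_{z},\cdot)$ is what re-enters divided by $\eta_i$. The paper's proof avoids this at the source: both the branch test and the interpolation weight $\xi$ are taken with respect to the bonus-inclusive value $\tilde{u}=u+\textbf{1}_{z}$, so that $\xi W^{\pi^{(i-1)}}(\tilde{u},P^{(i-1)})+(1-\xi)W^{\pi^{(i)}}(\tilde{u},P^{(i)})=b$ exactly, and Lemma~\ref{lemma:stp} transfers this identity to $(\check{\pi},\check{P})$ for every reward simultaneously (feasibility still follows from $U^{\check{\pi}}(u,\mathcal{P})\geq W^{\check{\pi}}(\tilde{u},\check{P})=b$). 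Then, instead of invoking $F$, one writes the two EVI optimality inequalities at $\eta_{i-1}$ and $\eta_i$ against an arbitrary feasible competitor $\pi$ with witness $p'$ satisfying $W^{\pi}(\tilde{u},p')\geq b$, and takes their $\xi$-combination: the threshold $b$ appears on both sides and cancels, leaving $b+\eta_{i-1}W^{\pi}(u',p')\leq b+\eta_i W^{\check{\pi}}(u',\check{P})$, i.e., $W^{\pi}(u',p')\leq 2\,W^{\check{\pi}}(u',\check{P})$ with no additive residue; Lemma~\ref{lemma:tight} then converts both sides to the reference model $P$ at a factor of $3$ each, giving the claimed constant. To repair your proof, redefine $\xi$ through $W(\tilde{u},\cdot)$ rather than $W(u,\cdot)$ and compare directly against the competitor's optimality inequalities rather than against $F$; as written, your argument does not establish the lemma.
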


\begin{proof}

Let $\tilde{u} = u+\textbf{1}_{z}$.
For any $\eta\geq 0$, we define $(\pi^{\eta}, p^{\eta}) $ be the policy-transition pair such that
\begin{align}
(\pi^{\eta},P^{\eta})= \arg\max_{\pi, p\in \mathcal{P} } W^{\pi}(\tilde{u}+\eta u',p).\nonumber
\end{align}
By Lemma~\ref{lemma:evi}, 
 with Algorithm~\ref{alg:ul}, we can compute $a$ and $b$ within time $\tilde{O}\left(S^4AHM^3\log(SAHK) \right)$. In the same way,
 with Algorithm~\ref{alg:evi} we can find $(\pi^{\eta}, p^{\eta}) $
within time $\tilde{O}\left(S^4AHM^3\log(SAHK) \right)$ for any $\eta>0$.  Note that in Algorithm~\ref{alg:ps}, the value of $i$ is at most $\log(1/(\eta_0\epsilon))=O(\log(\frac{1}{\epsilon(a-b)}))=O(\log(SAHK))$. As a result, the computational cost is at most $O(S^4AHM^3\log(SAHK)\log(SAHK/(a-b)))$.

We continue with an useful property of $(\pi^{\eta},P^{\eta})$.
\begin{lemma}\label{lemma:etaaa}
        Let $0< \eta< \eta'$ be fixed. Let $(\pi^\eta,p^{\eta})$, $(\pi^{\eta'},P^{\eta'})$ be such that
        \begin{align}
             & (\pi^{\eta},P^{\eta})= \arg\max_{\pi, p\in \mathcal{P} } W^{\pi}(\tilde{u}+\eta u',p) \nonumber
             \\ & (\pi^{\eta'},P^{\eta'})= \arg\max_{\pi, p\in \mathcal{P} } W^{\pi}(\tilde{u}+\eta' u',p).\nonumber
        \end{align}
        Then we have that
        \begin{align}
            W^{\pi^{\eta}}(\tilde{u},P^{\eta}) \geq W^{\pi^{\eta'}}(\tilde{u},P^{\eta'}).\nonumber
        \end{align}
\end{lemma}

\begin{proof}
 Let $x_1 = W^{\pi^{\eta}}(\tilde{u},P^{\eta})$, $x_2 =W^{\pi^{\eta'}}(\tilde{u},P^{\eta'}) $, $y_1 =  W^{\pi^{\eta}}(u',P^{\eta})$ and $y_2 =W^{\pi^{\eta'}}(u',P^{\eta'}) $. It suffices to show that $x_1\geq x_2$. By the optimality of $(\pi^{\eta},P^{\eta})$ and $(\pi^{\eta'},P^{\eta'})$,   we have that
 \begin{align}
 & x_1+\eta y_1 \geq x_2+\eta y_2;\nonumber 
 \\ & x_2+\eta'y_2 \geq x_1 +\eta' y_1.\nonumber 
 \end{align}
 
 If $x_1<x_2$, then we have that $y_1>y_2$. It then follows that  $x_2+\eta'y_2= x_2 + \eta y_2 + (\eta'-\eta)y_2< x_1+\eta y_1 + (\eta'-\eta) y_1 = x_1+\eta' y_1$, which leads to contradiction. 
\end{proof}

In Algorithm~\ref{alg:ps}, there are two breaking conditions. 

\paragraph{Case 1}
Recall that $\{\pi^{(i)},P^{(i)}\}=\arg\max_{\pi,p\in \mathcal{P}}W^{\pi}(u+\textbf{1}_{z}+\eta_i\mu',p) = \arg\max_{\pi,p\in \mathcal{P}}W^{\pi}(\tilde{u}+\eta_i\mu',p)$
In the first case, we end with obtaining some $i$ satisfying that
\begin{align}
    W^{\pi^{(i)}}(\tilde{u},P^{(i)})\leq b.\nonumber
\end{align}

Because $W^{\pi^{(0)}}(\tilde{u},P^{(0)})\geq a-\eta_0>b$, it holds that $\eta_{i}>\eta_0$ for any $i\geq 1$. By Lemma~\ref{lemma:etaaa} and the stopping condition, we have that $  W^{\pi^{(i-1)}}(\tilde{u},P^{(i-1)})\geq  b$.  By Lemma \ref{lemma:stp}, we can find a policy $\check{\pi}$ and $\check{P}\in \mathcal{P}$ such that 
\begin{align}
W^{\check{\pi}}(v,\check{P})  = \xi  W^{ {\pi}^{(i)}}(v, {P}^{(i)}) +(1-\xi) W^{ {\pi}^{(i-1)} }(v, {P}^{(i-1)})  \label{eq:add2}
\end{align}
for any reward function $v$.

Noting that $\xi = \frac{b- W^{\pi^{(i)} }(\tilde{u},P^{(i)}) } {     W^{\pi^{(i-1)}}(\tilde{u},P^{(i-1)})  - W^{\pi^{(i)} }(\tilde{u},P^{(i)})   }$, we have that $U^{\check{\pi}}(u,\mathcal{P})\geq  W^{\check{\pi}}(\tilde{u},\check{P}) = \xi  W^{\pi^{(i)}}(\tilde{u},P^{(i)}) +(1-\xi) W^{\pi^{(i-1)}}(\tilde{u},P^{(i-1)}) = b$,
which implies that $\check{\pi} \in \Pi(u,\mathcal{P})$. 

Note that $ W^{\pi}(v,p)$ is linear in $v$ for fixed $\pi$ and $p$. For any policy $\pi\in \Pi(r,\mathcal{P})$ and $p'\in \mathcal{P}$, we have that
\begin{align}
&  W^{\pi}(\tilde{u},p')+ \eta_{i}  W^{\pi} (u',p')\leq  W^{ {\pi}^{(i)}}(\tilde{u},  {p}^{(i)}) +\eta_i  W^{  {\pi}^{(i)} }(u', {P}^{(i)}),\label{eq:cei1}
\\ &   W^{\pi }(\tilde{u},p')+ \eta_{i-1} W^{\pi }(u',p') \leq  W^{ {\pi}^{(i-1)}}(\tilde{u}, {P}^{(i-1)}) +\eta_{i-1}  W^{  {\pi}^{(i-1)} }(u',  {P}^{(i-1)}) .\label{eq:cei2}
\end{align}
It then follows that
\begin{align}
&   W^{\pi}(\tilde{u},p')+ \eta_{i-1}  W^{\pi }(u',p') \nonumber
\\ & \leq \xi \left(  W^{\pi^{(i)}}(\tilde{u},P^{(i)}) +\eta_i  W^{ \pi^{(i)} }(u',P^{(i)}) \right) +(1-\xi)\left(    W^{\pi^{(i-1)}}(\tilde{u}, P^{(i-1)}) +\eta_{i-1}  W^{ \pi^{(i-1)} }(u',P^{(i-1)})\right) \nonumber
\\ &  \leq b + \eta_{i} W^{\check{\pi}}(u',\check{P}).\label{eq:cei3}
\end{align}
For any  $\pi\in \Pi(u,\mathcal{P})$, there exists $p'\in \Pi(u,\mathcal{P})$ such that $ W^{\pi}(\tilde{u},p')\geq b$.
By  \eqref{eq:cei3} and noting that $\eta_{i}= 2\eta_{i-1}$, we have
\begin{align}
W^{\pi }(u', p')  \leq \frac{\eta_{i}}{\eta_{i-1}} W^{\check{\pi}}(u',\check{P}) \leq 2 W^{\check{\pi}}(u',\check{P}).
\end{align}
On the other hand, by Lemma~\ref{lemma:tight}, for any $\pi$ it holds that
\begin{align}
 W^{\pi }(u', p )\leq 3W^{\pi }(u', p')\leq 9 W^{\pi }(u', p ) ,
\end{align}
for any $p'\in \bar{\mathcal{P}}$,
which implies that
\begin{align}
    W^{\check{\pi}}(u',p) \geq \frac{1}{6}\max_{\pi\in \Pi(u,\mathcal{P})}W^{\pi}(u', p) .\nonumber
\end{align}

\paragraph{Case 2}
In the second case, we end with some $i$ such that $\frac{1}{\epsilon}\leq \eta_i<\frac{2}{\epsilon} $.

In this case, because $W^{ {\pi}^{(i)}}(\tilde{u}, {P}^{(i)})\geq b$, we have that $\pi^{(i)}\in \Pi(u,\mathcal{P})$ . 
For any $\pi\in \Pi(u,\mathcal{P})$ such that
\begin{align}
    W^{\pi}(u',p) \geq 18 W^{\pi^{(i)}}(u',p),\label{eq:cei14}
\end{align}
by the \emph{tightness} of $\mathcal{P}$ (w.r.t. $p$)
it holds that
\begin{align}
    \eta_{i} W^{\pi}(u',p')& \geq \frac{\eta_i}{3}W^{\pi}(u',p) \geq 6\eta_i W^{\pi^{(i)}}(u',p)  \geq            2\eta_i W^{\pi^{(i)}}(u',P^{(i)})\label{eq:cei++}
\end{align}
 for any $p'\in \mathcal{P}$.
On the other hand, by optimality of $(\pi^{(i)}, P^{(i)})$, we have that
\begin{align}
     \eta_i W^{\pi}(u',p') \leq W^{\pi^{(i)}}(\tilde{u},P^{(i)})+ \eta_i  W^{\pi^{(i)}}(u',P^{(i)}).\label{eq:cei12}
\end{align}
Combine \eqref{eq:cei++} with \eqref{eq:cei12}, we have that
\begin{align}
    \eta_i  W^{\pi^{(i)}}(u',P^{(i)}) \leq  W^{\pi^{(i)}}(\tilde{u},P^{(i)}) \leq 2.\label{eq:cei13}
\end{align}
Combining \eqref{eq:cei12} with \eqref{eq:cei13}, for any $p'\in \mathcal{P}$, using the optimality of $(\pi^{(i)},P^{(i)})$ and \eqref{eq:cei13}, we have that
\begin{align}
     \eta_i W^{\pi}(u',p') \leq W^{\pi^{(i)}}(u,P^{(i)})+ \eta_i  W^{\pi^{(i)}}(u',P^{(i)}) \leq 4.
\end{align}
It then follows $W^{\pi}(u',p) \leq  4\epsilon$. Therefore, for any $\pi\in \Pi(u,\mathcal{P})$, it holds either $ W^{\pi}(u', p) \leq 18 W^{\pi^{(i)}}(u',p)$ or $W^{\pi}(u',p)  \leq 4\epsilon$. We then have that
\begin{align}
    W^{\pi^{(i)}}(u',p) \geq \frac{1}{18} \max_{\pi\in \Pi(u,\mathcal{P})} W^{\pi}(u', p)   - \frac{2}{9}\epsilon.\label{eq:cei17}
\end{align}

The proof is completed.

\end{proof}

\begin{lemma}\label{lemma:evi}
The computational cost of Algorithm~\ref{alg:evi} and Algorithm~\ref{alg:ul} is bounded by $O(S^3AHM^3\log(SAKH))$.
\end{lemma}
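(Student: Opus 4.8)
The plan is to observe that Algorithm~\ref{alg:evi} and Algorithm~\ref{alg:ul} share exactly the same skeleton: a backward induction over the $H$ layers in which layer $h$ loops over all $SA$ pairs $(s,a)$ and, for each pair, performs a single linear optimization of the form $\max_{q\in\mathcal{P}_{h,s,a}}(u(s,a)+qV_{h+1})$ (together with the symmetric $\min$ for the lower bound in Algorithm~\ref{alg:ul}). All other per-layer work --- forming $V_h(s)=\max_a Q_h(s,a)$, copying the vector $V_{h+1}$, storing the maximizer $p_{h,s,a}$ --- costs $O(S^2A)$ and is dominated. Hence the total running time is $O(SAH)$ times the cost of one optimization $\max_{q\in\mathcal{P}_{h,s,a}}qV$ over a fixed vector $V$, plus lower-order terms, and it suffices to bound the cost of a single such optimization.

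First I would pin down the geometry of the feasible set. By the $\mathtt{CR}/\mathtt{CR}^*$ construction and the intersections $\mathcal{P}^m=\mathcal{P}^{m-1}\cap\mathtt{CR}^*(\cdots)$, each factor $\mathcal{P}_{h,s,a}$ is a polytope inside the simplex cut out by the equality $\mathbf{1}^\top q=1$, by the coordinatewise box constraints $|q_{s'}-\hat p_{s'}|\le\alpha$ accumulated over the batches (whose intersection is again a single box, i.e. $O(S)$ constraints), and by the Bernstein half-spaces $|(q-\check p^{m})v^{m-1}|\le\alpha^{*}$, one pair per batch, giving $O(M)$ genuine coupling hyperplanes that do not merge. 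This is consistent with condition (iii) of Definition~\ref{def:tight}: at most $\mathrm{poly}(SM)$ linear constraints.

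The hard part is solving this linear program fast, since $\mathcal{P}_{h,s,a}$ is a general polytope rather than a box, so the usual greedy extended-value-iteration update (which would cost only $O(S)$) does not apply directly because the $O(M)$ Bernstein constraints couple the coordinates. The plan is to dualize only these $O(M)$ coupling constraints together with the single simplex equality, attaching multipliers $(\lambda,\mu)\in\mathbb{R}^{O(M)}\times\mathbb{R}$, which leaves a box-constrained, separable inner problem $\max_{l\le q\le u}(V-A^\top\lambda-\mu\mathbf{1})^\top q$ solvable greedily in $O(S)$. The resulting dual is a convex program in only $O(M)$ variables, which I would solve to additive accuracy $\epsilon=(SAHK)^{-\Theta(1)}$ by a cutting-plane/ellipsoid method in $\mathrm{poly}(M)\log(1/\epsilon)$ iterations, each iteration evaluating the objective and a subgradient by forming $A^\top\lambda$ and the violations $Aq^\star-b$. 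Since $\log(1/\epsilon)=O(\log(SAKH))$, a crude accounting gives $\tilde O(\mathrm{poly}(S,M))$ per optimization, and multiplying by the $O(SAH)$ optimizations yields the claimed $O(S^3AHM^3\log(SAKH))$ bound.

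Two points need care. First, each optimization is solved only approximately; I would argue --- exactly as is already tolerated in Algorithm~\ref{alg:ps}, whose threshold $\epsilon$ is polynomially small --- that an additive error of order $(SAHK)^{-\Theta(1)}$ in every $Q_h(s,a)$ propagates to a negligible error in $\overline V_1(s_1)$ and $\underline V_1(s_1)$ and does not affect the stated guarantees. Second, because Algorithm~\ref{alg:evi} and Algorithm~\ref{alg:ul} have identical loop structure and identical per-pair optimization (the $\min$ handled by negating $V$), the bound for one immediately transfers to the other, completing the argument.
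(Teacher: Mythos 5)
Your reduction to $SAH$ linear programs over $\mathcal{P}_{h,s,a}$, and your reading of the constraint structure (the batch-accumulated box constraints collapse to a single box of $O(S)$ inequalities, plus $O(M)$ Bernstein coupling half-spaces, consistent with Definition~\ref{def:tight}(iii)), matches the paper's setup exactly. Where you diverge is the key step: the paper simply invokes the fast LP solver of \cite{cohen2021solving} as a black box on each of the $SAH$ programs with $O(SM)$ constraints, obtaining $O(S^3M^3\log(SAHK))$ per program, whereas you design a structural Lagrangian-dual/cutting-plane scheme that exploits the fact that only $O(M)$ constraints couple the coordinates, with an $O(S)$ greedy inner solve. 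This is a genuinely different and more self-contained route, and for Algorithm~\ref{alg:ul} — which only needs the optimal \emph{values} — it is sound. (Do not worry about matching exponents too precisely: the paper's own accounting is loose here, since $SAH$ times its per-program bound gives $S^4AHM^3$, which is in fact what the proof of Lemma~\ref{lemma:cei11} uses.)

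The genuine gap is in Algorithm~\ref{alg:evi}. EVI must return not just $Q_h(s,a)$ but also $p_{h,s,a}\in\arg\max_{q\in\mathcal{P}_{h,s,a}}(u(s,a)+qV_{h+1})$, and this transition model is consumed downstream: $\mathtt{Policy\,Search}$ evaluates $W^{\pi^{(i)}}(u,P^{(i)})$ against the threshold $b$ in its stopping rule, and $\mathtt{Sum}$ requires the returned $P^{(i)}$ to lie in $\mathcal{P}$ so that convex combinations remain in $\mathcal{P}$. An approximate dual optimum does not hand you such a point: the inner box maximizer $q^{\star}(\lambda^{*},\mu^{*})$ maximizes the Lagrangian but may violate the dualized coupling constraints, and under degeneracy it need not be near any feasible optimizer at all. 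You therefore need an explicit primal-recovery step — e.g., the convex combination of inner maximizers implied by the dual of your final cutting-plane model, or an ellipsoid run on the primal (dimension $S$) with a sliding objective — and you must check its cost and its feasibility error still fit the budget and the $\epsilon$-tolerance you invoke. Relatedly, your last step ("a crude accounting gives $\tilde{O}(\mathrm{poly}(S,M))$ per optimization") should be made explicit: ellipsoid in dimension $O(M)$ gives $O(M^2\log(1/\epsilon))$ iterations at cost $O(SM+M^2)$ each, i.e.\ $O\left((SM^3+M^4)\log(SAHK)\right)$ per program; multiplying by $SAH$ lands inside the claimed bound only after comparing $M^4$ against $S^2M^3$, which requires an argument (or the observation that $M$ can be taken $O(H+\log_2\log_2 K)$ and tracked separately). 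As written, "poly$(S,M)$ times $SAH$" does not by itself yield $O(S^3AHM^3\log(SAKH))$.
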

\begin{proof}
To implement the two algorithm, we need to solve $SAH$ linear optimization problem, which has the form $\max_{q\in \mathcal{P}_{h,s,a}} (r+qv)$ or $\min_{q\in \mathcal{P}_{h,s,a}} (r+qv)$. Note that $\mathcal{P}_{h,s,a}$ has the form $\{p\in \Delta^{\mathcal{S}}: a_i^{\top}(p-p')\leq b_i, i\geq 1\}$, and the number of linear constraints is increased for at most $O(S)$ in each batch. As a result, the total number of linear constraints in $\mathcal{P}_{h,s,a}$ is bounded by $O(SM)$. By the results in \cite{cohen2021solving}, the time cost to solve the linear program problem above is bounded by $O(S^3M^3\log(SAHK))$. Therefore, the total computational cost is bounded by $O(S^3AHM^3\log(SAKH))$.  
\end{proof}

	\section{Proof  of Theorem~\ref{thm:main}}\label{sec:proof}
	\paragraph{Additional Notations}
	In this section, we use
	$N^m_h(s,a,s')$ to denote the visit count of $(s,a,h,s')$ after the $m$-th batch. We also define $N^m_h(s,a)=\max\{\sum_{s'}N^m_h(s,a,s'),1\}$. We use $\{\check{N}_h^m(s,a,s')\}$ to denote the counts of the $m$-th batch. Similarly we define $\check{N}_h^m(s,a) = \max\{ \sum_{s'}\check{N}_h^m(s,a,s'),1 \}$. Let $W^*$ be the \emph{known} set after the first two stages. Let $\hat{P}^m_{h,s,a,s'}= \frac{N^m_h(s,a,s')}{N^m_h(s,a)}$ be the empirical transition model for $1\leq m \leq 2H+M$. For $2H+1\leq m \leq 2H+M$, define $\{\check{P}^m_{h,s,a}\}$ be the clipped transition model, i.e., $\{\check{P}^m_{h,s,a} \}_{h,s,a}= \mathtt{clip}\left(\left\{\left[\frac{\check{N}_h^m(s,a,s')}{\check{N}^m_{h}(s,a)}\right]_{s'\in \mathcal{S}}\right\}_{h,s,a} ,\mathcal{W}^*\right)$.

Note that the $m$-batch in Algorithm~\ref{alg:policy_elimination} indicates the $2H+m$-th batch in the main algorithm. To align the indices, with a slight abuse of notations we use $\mathcal{P}^m$ and $v^m$ to denote respectively the value of $\mathcal{P}^{m-2H}$ and $v^{m-2H}$ in  Algorithm~\ref{alg:policy_elimination} for $m\geq 2H$. 
	
\begin{table}[!h]\label{table1}
	\centering
	\caption{Explanation of the notations}
	\begin{tabular}{|l|l|}
			\hline
		   $W^{\pi}(u,p)$  & the general value function: $W^{\pi}(u,p)=\mathbb{E}_{p,\pi,s_1\sim \mu_1}[\sum_{h=1}^H u_h(s_h,a_h)]$\\
		   	\hline
		   $U^{\pi}(u,\mathcal{P}) $  & the upper confidence bound  w.r.t. policy $\pi$, reward $u$ and confidence region $\mathcal{P}$ ;\\
		    \hline
		    $L^{\pi}(u,\mathcal{P}) $  & the lower confidence bound  w.r.t. policy $\pi$, reward $u$ and confidence region $\mathcal{P}$ ;\\
		\hline
		   $N^m_h(s,a,s')$  & the visit count of $(s,a,h,s')$ after the $m$-th batch\\
  \hline
   $N_h^m(s,a)$ & $N_h^m(s,a)=\max\{  \sum_{s'}N^m_h(s,a,s') ,1\}$;\\
   \hline
    $\check{N}_h^m(s,a,s')$ & the count of $\big(h,s,a,s'\big)$ in the $m$-th batch;\\   \hline
     $\check{N}_h^m(s,a)$ &  $\check{N}_h^m(s,a) = \max\left\{ \sum_{s'}\check{N}_h^m(s,a,s'),1 \right\}$\\  \hline
     $W^*$ & the \emph{known} set after the first two stages\\  
       \hline
     $\hat{P}^m_{h,s,a,s'}$&   $\hat{P}^m_{h,s,a,s'}= \frac{N^m_h(s,a,s')}{N^m_h(s,a)}$, the empirical transition probability;\\ 
     \hline
      $\check{P}^m_{h,s,a}$ & $\{\check{P}^m_{h,s,a} \}_{h,s,a}= \mathtt{clip}\left(\left\{\left[\frac{\check{N}_h^m(s,a,s')}{\check{N}^m_{h}(s,a)}\right]_{s'\in \mathcal{S}}\right\}_{h,s,a} ,\mathcal{W}^*\right)$;\\  
     \hline
      $\bar{P}$ & $\bar{P}=\mathtt{clip}\Big(P,W^*\Big)$, the clipped true transition model;\\
      \hline
      $\mathcal{P}^m$ &  the confidence region after the $m$-th batch;\\
      \hline
      $\{v_h^m(s)\}$ &  the extended optimal value function after the $m$-th batch;\\
      \hline 
      $V^* \Big(\bar{V}^*\Big)$  & the optimal value function for the (clipped) true transition model;\\
      \hline
      $\alpha(n,n')$ & $\alpha(n,n')=\sqrt{\frac{4n'\iota}{n^2}}+\frac{5\iota}{n}$;\\
      \hline 
      $\alpha^*(n,p,v)$ & $\alpha^*(n,p,v) = 5\sqrt{\frac{\mathbb{V}(p,v)\iota}{n}}+\frac{3\iota}{n}$;\\
      \hline 
	\end{tabular}	
\end{table}


\paragraph{The good event}
 For $1\leq m \leq 2H+M$,
		define $\mathcal{G}_{h,s,a,s'}^{m}$ be the event where it holds
\begin{align}		
& 	\left|\hat{P}^{m}_{h,s,a,s'}-P_{h,s,a,s'}\right| \leq \beta^{m}_{h,s,a,s'}:=	 \min\left\{ \sqrt{\frac{2P_{h,s,a,s'}\iota}{N^m_h(s,a)}}+ \frac{\iota}{3\cdot N^m_h(s,a)},\sqrt{\frac{4\check{P}^{m}_{h,s,a,s'}\iota }{N^m_h(s,a)}} +\frac{5\iota}{ N^m_h(s,a)} \right\}. \label{eq:conf1}
	\end{align}
	By Lemma~\ref{lemma:bennet} and Bernstein inequality, we have that $\mathbb{P}[\mathcal{G}_{h,s,a,s'}^{m}]\geq 1-2\delta$ .

For $1\leq m \leq 2H$, we set $\check{\mathcal{G}}_{h,s,a}^m$ to be the whole event. For $2H+1\leq m \leq M$, we
define $\check{\mathcal{G}}_{h,s,a}^m$ be the event where it holds 
\begin{align}
    \\ & \left| (\check{P}_{h,s,a}-P)v^{m-1} \right|\leq \lambda^m_{h,s,a}: = \min\left\{ 5\sqrt{\frac{\mathbb{V}(\check{P}_{h,s,a}^m,v^{m-1})\iota}{\check{N}_h^m(s,a)}} \right\}\label{eq:conf2}
    \\ &  \left| (\check{P}_{h,s,a}-P)\bar{V}^*\right|\leq \lambda^{m,*}_{h,s,a}: = \min\left\{ 5\sqrt{\frac{\mathbb{V}(\check{P}_{h,s,a}^m,\bar{V}^*)\iota}{\check{N}_h^m(s,a)}} \right\}.\label{eq:conf3}
\end{align}
Noting that $\check{P}_{h,s,a}$ is independent with both $\bar{V}^*$ and $v^{m-1}$, by Bernstein's inequality, we have that $\mathbb{P}[\check{\mathcal{G}}_{h,s,a,s'}^{m}]\geq 1-4\delta$

	The good event $\mathcal{G}$ is defined as $ 
	\mathcal{G} = \bigcap_{h,s,a,s'}\bigcap_{m=1}^{M}\left(\mathcal{G}^{m}_{h,s,a,s'} \cap  \check{\mathcal{G}}_{h,s,a}^m \right)$
	Then $\mathbb{P}[\mathcal{G}]\geq 1-6S^2AHM\delta$.
	Throughout the analysis, we always assume  $\mathcal{G}$ holds.

\begin{lemma}\label{lemma:ge1}
        Conditioned on $\mathcal{G}$, we have $\bar{P}\in \mathcal{P}^m$ for $2H\leq m \leq 2H+M$.
\end{lemma}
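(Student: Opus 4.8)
The plan is to argue by induction on the batch index, exploiting the recursive definition
$\mathcal{P}^{m}=\mathcal{P}^{m-1}\cap \mathtt{CR}^*\!\left(\mathcal{D}^{m},\overline{\mathcal{D}}^{m},\mathcal{W},\{v_h^{m-1}(s)\}\right)$
from Line~\ref{line:cr} of Algorithm~\ref{alg:policy_elimination}. The induction starts from $\mathcal{P}^{-1}=(\Delta^{S})^{SA}$, which trivially contains $\bar P$, so it suffices to show at each step that $\bar P\in \mathtt{CR}^*(\mathcal{D}^{m},\overline{\mathcal{D}}^{m},\mathcal{W},v^{m-1})$. Since this region is a product $\otimes_{h,s,a}\mathcal{P}_{h,s,a}$ with $\mathcal{P}_{h,s,a}=\{\mathtt{clip}(p,\mathcal{W}):p\in\tilde{\mathcal{P}}_{h,s,a}\}$, and since $\mathcal{W}=W^*$ gives $\bar P_{h,s,a}=\mathtt{clip}(P_{h,s,a},\mathcal{W})$, it is enough to exhibit for every $(h,s,a)$ a witness in $\tilde{\mathcal{P}}_{h,s,a}$ whose clip equals $\bar P_{h,s,a}$. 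The natural choice is the \emph{unclipped} true transition $P_{h,s,a}$ itself, which reduces the whole claim to verifying $P_{h,s,a}\in\tilde{\mathcal{P}}_{h,s,a}$ for all $(h,s,a)$ and all batches up to $m$.

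Next I would check the two constraint families defining $\tilde{\mathcal{P}}_{h,s,a}$ (Line~\ref{line:alpha*}) directly from the good event $\mathcal{G}$. For the $\ell_\infty$-type constraint $|P_{h,s,a,s'}-\hat p_{h,s,a,s'}|\le\alpha(N_h(s,a),N_h(s,a,s'))$, note that $\hat p=\hat P^{m}$ is the cumulative empirical model, so $\mathcal{G}^{m}_{h,s,a,s'}$ yields $|\hat P^{m}_{h,s,a,s'}-P_{h,s,a,s'}|\le\beta^{m}_{h,s,a,s'}$; substituting $\hat P^{m}_{h,s,a,s'}=N_h(s,a,s')/N_h(s,a)$ shows that the empirical branch of $\beta^{m}_{h,s,a,s'}$ coincides with $\alpha(N_h(s,a),N_h(s,a,s'))$, so the constraint holds. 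For the Bernstein-type constraint $|(P_{h,s,a}-\check p_{h,s,a})v^{m-1}|\le\alpha^*(\check N_h(s,a),\check p_{h,s,a},v^{m-1})$, I would invoke $\check{\mathcal{G}}^{m}_{h,s,a}$ from \eqref{eq:conf2}: because the $m$-th batch is sampled independently of $v^{m-1}$ (which is computed from $\mathcal{P}^{m-1}$, see Line~\ref{line:v}), Bernstein's inequality applies, and comparing $\lambda^{m}_{h,s,a}$ with $\alpha^*(n,p,v)=5\sqrt{\mathbb{V}(p,v)\iota/n}+3\iota/n$ gives $\lambda^{m}_{h,s,a}\le\alpha^*(\check N_h(s,a),\check p_{h,s,a},v^{m-1})$.

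The delicate point, which I expect to be the main obstacle, is reconciling the \emph{clipped} quantities appearing in the good event with the \emph{raw} batch empirical $\check p_{h,s,a}$ used inside $\mathtt{CR}^*$: the event $\check{\mathcal{G}}^{m}$ controls the clipped empirical $\check P^{m}$ against the true model through $v^{m-1}$, so the two relevant inner products differ exactly by the value the model places on the tuples redirected to the virtual state $z$. I would close this gap using that $v^{m-1}$ assigns value $0$ to $z$ (since $z$ is an absorbing, zero-reward sink under $r$), so that clipping leaves the inner product with $v^{m-1}$ unchanged on the coordinates in $\mathcal{W}$, and using the guarantee from the first two stages that the true mass on infrequent tuples $(h,s,a,s')\notin\mathcal{W}$ is $O(1/\sqrt{K})$, so the residual $\sum_{s':(h,s,a,s')\notin\mathcal{W}}P_{h,s,a,s'}\,v^{m-1}_{s'}$ is absorbed into the additive slack of $\alpha^*$. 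Once this bookkeeping is in place we obtain $P_{h,s,a}\in\tilde{\mathcal{P}}_{h,s,a}$, hence $\bar P_{h,s,a}=\mathtt{clip}(P_{h,s,a},\mathcal{W})\in\mathcal{P}_{h,s,a}$ for every $(h,s,a)$, and the induction closes, giving $\bar P\in\mathcal{P}^{m}$ for all $2H\le m\le 2H+M$.
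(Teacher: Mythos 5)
Your core verification is exactly the paper's proof: use the intersection structure $\mathcal{P}^m=\mathcal{P}^{m-1}\cap\mathtt{CR}^*(\cdots)$ to reduce to membership in each $\mathtt{CR}^*$, exhibit the unclipped true row $P_{h,s,a}$ as the witness whose clip equals $\bar P_{h,s,a}$, and check the two constraint families from the good event --- the $\ell_\infty$ constraints from \eqref{eq:conf1} (whose empirical branch of $\beta^m_{h,s,a,s'}$ is exactly $\alpha(N^m_h(s,a),N^m_h(s,a,s'))$) and the Bernstein constraint from \eqref{eq:conf2} (with $\lambda^m_{h,s,a}\le\alpha^*$). The paper's own proof is precisely this two-line check; it never confronts your ``delicate point'' because it simply identifies the batch empirical inside $\mathtt{CR}^*$ with the clipped model ($\check p_{h,s,a}=\check P^m_{h,s,a}$), i.e.\ it reads the whole construction consistently in clipped terms, under which $v^{m-1}(z)=0$ makes clipping harmless and no residual ever appears.

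The problem is that the patch you propose for the raw-versus-clipped mismatch does not close. First, the residual that actually arises when comparing the constraint $|(P_{h,s,a}-\check p_{h,s,a})v^{m-1}|\le\alpha^*(\check N^m_h(s,a),\check p_{h,s,a},v^{m-1})$ with the event $|(\check P^m_{h,s,a}-P_{h,s,a})v^{m-1}|\le\lambda^m_{h,s,a}$ is the \emph{empirical} mass $\sum_{s':(h,s,a,s')\notin\mathcal{W}^*}\check p_{h,s,a,s'}v^{m-1}(s')$ (together with a mismatch between $\mathbb{V}(\check p_{h,s,a},v^{m-1})$ and $\mathbb{V}(\check P^m_{h,s,a},v^{m-1})$), not only the true mass you write down. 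Second, and more importantly, the guarantee from the first two stages (Lemma~\ref{lemma:raw22}, via Lemma~\ref{lemma:raw++}) is an occupancy-weighted statement: it bounds $\mathrm{Pr}_{\pi}\left[\exists h,\,(h,s_h,a_h,s_{h+1})\notin\mathcal{W}^*\right]$ for optimal/surviving policies, i.e.\ quantities of the form $d^{\pi}_h(s,a)\sum_{s'\notin\mathcal{W}^*}P_{h,s,a,s'}$. It does \emph{not} bound the conditional mass $\sum_{s':(h,s,a,s')\notin\mathcal{W}^*}P_{h,s,a,s'}$ for each individual $(h,s,a)$; for a rarely reachable $(h,s,a)$ this conditional mass can be $\Theta(1)$, so the residual can be of order $H$. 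Since $\bar P_{h,s,a}\in\mathcal{P}_{h,s,a}$ must hold for \emph{every} $(h,s,a)$, the occupancy-weighted bound cannot supply the per-$(h,s,a)$ control you need, and the residual certainly cannot be absorbed into the additive slack $3\iota/\check N^m_h(s,a)$ of $\alpha^*$, which shrinks as the batch count grows. The way to finish is the paper's: keep clipped quantities on both sides (compare $\check P^m_{h,s,a}$ against $\bar P_{h,s,a}$, not against $P_{h,s,a}$), so that the only displaced mass sits on $z$ where $v^{m-1}$ vanishes, and the good-event inequalities translate verbatim into the defining constraints of the confidence region.
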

	
Noting that the batch complexity is bounded by $2H +M=O(H+\log_2\log_2(K))$, it suffices to prove the regret bound.	
We start with counting the regret in the first two stages. 
 The regret in the first batch is bounded by $O(H^2k_1)$ trivially. As for the second batch, we have that
\begin{lemma}\label{lemma:b2}
        Conditioned on $\mathcal{G}$, with probability $1-4SAH\delta$ the regret bound in the second batch is bounded by $O\left(\frac{k_2\sqrt{S^4A^3H^8\iota}}{\sqrt{k_1}}+\frac{k_2S^3A^3H^4\iota}{k_1}\right)$.
\end{lemma}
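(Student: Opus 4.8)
The plan is to bound the total regret accrued while executing the $H$ exploration policies $\pi^h$ produced by the second \texttt{Raw Exploration} call (reward $r$), each run for $k_2$ episodes. The starting point is Lemma~\ref{lemma:ge1}: conditioned on $\mathcal{G}$, the clipped true model $\bar P$ lies in every confidence region $\mathcal{P}$ used in this stage, so the optimal policy of the clipped MDP survives elimination and $\Pi(r,\mathcal{P})$ always contains a near-optimal policy. By the definitions of $U^{\pi}$ and $L^{\pi}$ in \eqref{eqdeful}, the suboptimality of any survived policy is then controlled by the confidence-interval width $\mathrm{gap}=\max_{\pi\in\Pi(r,\mathcal{P})}\bigl(U^{\pi}(r,\mathcal{P})-L^{\pi}(r,\mathcal{P})\bigr)$. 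Since each $\pi^h$ coincides with the survived mixture $\tilde\pi^h$ on its first $h-1$ steps, I would first reduce the per-episode regret of $\pi^h$ to this gap, folding the cost of the continuation into the clipping/infrequent-tuple accounting, whose total probability mass the first-stage design keeps at $O(1/\sqrt K)$.

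The core estimate is then the gap bound. Following the decomposition in \eqref{eq:wxx}, I would write $\mathrm{gap}\lesssim\tilde O\bigl(\sum_{s,a,h}d_h^{\pi}(s,a)\sqrt{\mathrm{Var}_h(s,a)/N_h(s,a)}\bigr)$, apply Cauchy--Schwarz to split this into $\sqrt{\sum_{s,a,h}d_h^{\pi}(s,a)/N_h(s,a)}$ times $\sqrt{\sum_{s,a,h}d_h^{\pi}(s,a)\mathrm{Var}_h(s,a)}$, and bound the second factor by $O(H)$ through the law of total variance. The residual Bernstein contribution scales like $\sum_{s,a,h}d_h^{\pi}(s,a)/N_h(s,a)$ and will produce the $k_2/k_1$ summand of the claim, while the Cauchy--Schwarz piece will produce the $k_2/\sqrt{k_1}$ summand.

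The heart of the argument, and the step I expect to be the main obstacle, is converting the first-stage exploration into a quantitative lower bound on the counts $N_h(s,a)$. Because the first \texttt{Raw Exploration} call runs, in each layer, the equal-weight mixture of the visitation-maximizing survived policies $\pi^{h,s,a}$ for $k_1$ episodes, I would argue that every tuple not declared \emph{infrequent} satisfies $N_h(s,a)\gtrsim k_1\cdot\max_{\pi}d_h^{\pi}(s,a)/\mathrm{poly}(S,A,H)$ with high probability; the passage from expected visitation to realized counts is exactly what Lemma~\ref{lemma:bennet} and Lemma~\ref{lemma:ete} provide. Substituting this into $\sum_{s,a,h}d_h^{\pi}(s,a)/N_h(s,a)$ bounds the worst-case coverage by $\mathrm{poly}(S,A,H)/k_1$, giving $\mathrm{gap}\lesssim\sqrt{S^4A^3H^8\iota/k_1}$ up to the lower-order term. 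The delicate point is that these policies act on the \emph{clipped} model (with virtual state $z$), so both the coverage estimate and the variance bound must be propagated through clipping, and the regret lost on infrequent tuples must be shown to be absorbed by the $O(1/\sqrt K)$ failure mass rather than by the main term.

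Finally I would multiply the per-episode bound by the $k_2$ episodes of each layer and sum over the $H$ layers (the extra horizon factors being tracked inside the $H^8$ and $H^4$), and collect the failure probabilities of the per-$(h,s,a)$ concentration events into the stated $4SAH\delta$. This yields the claimed $O\bigl(k_2\sqrt{S^4A^3H^8\iota}/\sqrt{k_1}+k_2 S^3A^3H^4\iota/k_1\bigr)$ regret bound for the second batch.
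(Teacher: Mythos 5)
Your proposal follows the same skeleton as the paper's proof: conditioned on $\mathcal{G}$, the clipped true model $\mathtt{clip}(P,\bar{\mathcal{W}}^1)$ lies in $\mathtt{CR}(\mathcal{D}^1)$, so a near-optimal policy survives elimination and the per-episode regret of any survived policy is controlled by the confidence width $U^{\pi}-L^{\pi}$; that width is in turn controlled by the stage-one count lower bound $\bar{N}^1_h(s,a)\gtrsim \frac{k_1}{SA}\max_{\pi}W^{\pi}(\mathbf{1}_{h,s,a},P)-O(SAH^3\iota)$ of Lemma~\ref{lemma:raw++} (proved via Lemmas~\ref{lemma:bennet} and~\ref{lemma:ete}), which is exactly the step you single out as the heart of the argument; finally one multiplies by the $k_2$ episodes per layer. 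The one substantive technical difference is how the width is bounded: you invoke the Bernstein decomposition \eqref{eq:wxx} together with the law of total variance ($\sum_{h,s,a}d^{\pi}_h(s,a)\mathrm{Var}_h(s,a)\le O(H^2)$), whereas the paper's proof of Lemma~\ref{lemma:b2} uses the cruder Hoeffding-type estimate $U^{\pi}-L^{\pi}\lesssim \Pr_{\pi}[\text{escape }\bar{\mathcal{W}}^1]+\sum_{h,s,a}W^{\pi}(\mathbf{1}_{h,s,a},\mathtt{clip}(P,\bar{\mathcal{W}}^1))\cdot H\sqrt{S\iota/\bar{N}^1_h(s,a)}$, reserving the variance refinement for the third stage (Lemma~\ref{lemma:gapbound}). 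Your route is sharper in $H$, but the crude bound suffices here because the target bound has generous $H$ exponents; both are valid.

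One step of yours does not hold up, however: you propose to absorb the cost of the uniformly random continuation of $\pi^h$ (steps $h,\dots,H$) into the clipping/infrequent-tuple accounting. These are unrelated quantities: the escape mass bounds the probability of leaving $\bar{\mathcal{W}}^1$, while the random tail loses value (up to $H-h+1$ per episode) even on trajectories that never leave the known set, so no $O(1/\sqrt{K})$ escape-probability argument can cover it. To be fair, the paper's own proof is equally silent on this point — it bounds $U^{\pi}-L^{\pi}$ only for $\pi\in\Pi(r,\mathtt{CR}(\mathcal{D}^1))$ and implicitly treats the executed policies as if they were survived policies, ignoring the deviation to the uniform policy after step $h-1$. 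So your proposal reproduces the paper's argument where it is solid, and departs from it precisely where you attempt to patch a hole the paper leaves open; the patch you suggest is not a valid one.
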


To count the regret in the third stage, we first show that the difference between the clipped model and the original model could be ignored.

\begin{lemma}\label{lemma:raw22}
        Conditioned on $\mathcal{G}$, with probability $1-4S^2AH^2\delta$, for any optimal policy $\pi^*$, it holds that $\mathrm{Pr}_{\pi^*}[\exists h\in [H], (h,s_h,a_h,s_{h+1})\notin \mathcal{W}^*]\leq O\left( \frac{ S^3A^2H^3\iota }{k_2}\right)$
\end{lemma}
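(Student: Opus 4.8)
The plan is to bound $q := \mathrm{Pr}_{\pi^*}[\exists h,(h,s_h,a_h,s_{h+1})\notin\mathcal{W}^*]$ by decomposing according to the first step at which $\pi^*$ traverses an infrequent tuple. Until that first deviation the true model $P$ and the clipped model $\bar P$ agree, so writing $\bar d^{\pi^*}_h(s,a)=W^{\pi^*}(\textbf{1}_{h,s,a},\bar P)$ for the clipped occupancy I would use
\[
q \;\le\; \sum_{(h,s,a,s')\notin\mathcal{W}^*} \bar d^{\pi^*}_h(s,a)\,P_{h,s,a,s'}.
\]
This reduces the whole statement to controlling one product $\bar d^{\pi^*}_h(s,a)\,P_{h,s,a,s'}$ per infrequent tuple, a quantity the second stage of exploration is explicitly designed to make small.

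First I would check that $\pi^*\in\Pi(r,\mathcal{P})$ throughout the second stage. On $\mathcal{G}$ we have $\bar P\in\mathcal{P}$, and since the bonus $\textbf{1}_z$ pays reward $1$ for every step spent at the absorbing state $z$, it over-compensates the at-most-unit per-step reward $\pi^*$ forgoes under clipping; hence $U^{\pi^*}(r,\mathcal{P})\ge W^{\pi^*}(r+\textbf{1}_z,\bar P)\ge V^*_1(s_1)\ge \bar V^*_1(s_1)\ge \sup_{\pi'}L^{\pi'}(r,\mathcal{P})$, so $\pi^*$ survives. Then comes the coverage step. The layer-$h$ execution policy $\pi^h$ follows the mixture $\tilde\pi^h=\mathtt{Sum}(\{\tfrac1{SA},\pi^{h,s,a}\})$ for the first $h-1$ steps and acts uniformly at step $h$. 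Combining the $\tfrac1{18}$-approximate optimality of $\pi^{h,s,a}=\mathtt{Policy\,Search}(r,\textbf{1}_{h,s,a},\mathcal{P})$ (Lemma~\ref{lemma:cei11}) with the \emph{tightness} of $\mathcal{P}$ (to pass between $W^\pi(\textbf{1}_{h,s,a},p)$ and genuine occupancy up to a factor $e$), and using $\pi^*\in\Pi(r,\mathcal{P})$ together with clipped occupancy $\le$ true occupancy, I would derive
\[
\bar d^{\pi^*}_h(s,a)\;\le\; \max_{\pi\in\Pi(r,\mathcal{P})} e\,W^\pi(\textbf{1}_{h,s,a},p)\;\le\; O(SA)\,d^{\tilde\pi^h}_h(s).
\]

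The second ingredient converts "few observed visits" into "small visit probability." If $(h,s,a,s')\notin\mathcal{W}^*$ its total count is below $C_1H^2\iota$, hence so is the portion collected by the layer-$h$ execution, whose per-episode visit probability is exactly $\tfrac1A d^{\tilde\pi^h}_h(s)\,P_{h,s,a,s'}$. Applying Lemma~\ref{lemma:ete} (the $\sum Y_k\le 3\sum X_k+\ln(1/\delta)$ direction) to these Bernoulli visit indicators and union bounding over the $O(H^2S^2A)$ relevant (sub-batch, tuple) events gives, with probability $1-4S^2AH^2\delta$, that $k_2\cdot\tfrac1A d^{\tilde\pi^h}_h(s)P_{h,s,a,s'}=O(H^2\iota)$, i.e. $d^{\tilde\pi^h}_h(s)P_{h,s,a,s'}=O(AH^2\iota/k_2)$. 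Chaining with the coverage bound yields $\bar d^{\pi^*}_h(s,a)\,P_{h,s,a,s'}=O(SA^2H^2\iota/k_2)$ for each infrequent tuple.

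Finally I would sum. The key saving is that it suffices to treat a \emph{deterministic} optimal policy: $q$ is a linear functional of the clipped occupancy measure, and a stochastic optimal policy's occupancy is a convex combination of deterministic optimal policies' occupancies, so the bound transfers. For deterministic $\pi^*$ only $a=\pi^*_h(s)$ carries mass at each $(h,s)$, leaving at most $HS^2$ contributing tuples $(h,s,\pi^*_h(s),s')$; multiplying by the per-tuple bound gives $q=O(S^3A^2H^3\iota/k_2)$. The main obstacle is the coverage step: threading the $\tfrac1{SA}$ mixture weight, the $\tfrac1A$ uniform-action factor, the factor-$e$ tightness slack and the $\mathtt{Policy\,Search}$ ratio so they combine to exactly $SA^2$ per tuple, and crucially relying on determinism to avoid an extra factor $A$ in the number of summands, so the total reads $S^3A^2H^3$ rather than $S^3A^3H^3$.
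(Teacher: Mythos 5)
Your proposal follows essentially the same route as the paper's: the paper proves this lemma in one line by invoking Lemma~\ref{lemma:raw++}, and the proof of that lemma is exactly your chain --- first-exit decomposition, survival of $\pi^*$ in the policy-elimination set, coverage of every surviving policy's occupancy by the layer-$h$ mixture policy via $\mathtt{Policy\,Search}$, $\mathtt{Sum}$ and tightness, and Lemma~\ref{lemma:ete} to convert small observed counts into small visit probabilities. Your bookkeeping is in places more careful than the paper's (you keep the $1/A$ factor coming from the uniform action at step $h$, which the paper silently drops, and you compensate by restricting to deterministic optimal policies). Two remarks on that: the determinism reduction is not needed in the paper's version, which simply union-bounds over all $S^2A$ tuples per layer; and your convexity claim, while true, is not free --- it requires noting that an optimal stochastic policy may be assumed to be supported on $\arg\max_a Q^*_h(s,a)$ at every state, so that its (clipped) occupancy lies in the occupancy polytope of the MDP restricted to optimal actions, whose vertices are occupancies of deterministic optimal policies.

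The genuine gap is a mismatch between the clipping level in your decomposition and the clipping level in your coverage step. You decompose $q$ with respect to the \emph{final} known set $\mathcal{W}^*$, so the quantity you must control is $\bar d^{\pi^*}_h(s,a)=W^{\pi^*}(\textbf{1}_{h,s,a},\bar P)$ with $\bar P=\mathtt{clip}(P,\mathcal{W}^*)$. But the exploration guarantee for layer $h$ (approximate optimality of $\pi^{h,s,a}$ over $\Pi(r,\mathcal{P}^{h-1})$, and tightness of $\mathcal{P}^{h-1}$) lives under the model $P^{h-1}=\mathtt{clip}(P,\mathcal{W}^{h-1})$, where $\mathcal{W}^{h-1}\subseteq\mathcal{W}^*$ is only what was known \emph{when layer $h$ was explored}. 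Since $\bar P$ clips fewer transitions than $P^{h-1}$, one only gets $W^{\pi^*}(\textbf{1}_{h,s,a},P^{h-1})\le \bar d^{\pi^*}_h(s,a)$ --- the wrong direction: the difference is the mass of paths reaching $(h,s,a)$ through tuples in $\mathcal{W}^*\setminus\mathcal{W}^{h-1}$, which became known only after layer $h$ and are invisible to the layer-$h$ coverage guarantee. (The same conflation appears in your survival step, where you need $P^{h-1}\in\mathcal{P}^{h-1}$ rather than $\bar P\in\mathcal{P}^{h-1}$, which is false in general; there it is harmless since the over-compensation argument works verbatim with $P^{h-1}$.) The fix, which is what the paper's proof of Lemma~\ref{lemma:raw++} is implicitly doing, is to decompose by the first step $h$ at which the traversed tuple leaves the \emph{layer-indexed} set $\mathcal{W}^{h}$: if no such $h$ exists the trajectory never leaves $\mathcal{W}^*$; and on the first-exit event at step $h$ the prefix uses only tuples in $\mathcal{W}^{h'}\subseteq\mathcal{W}^{h-1}$, so its probability is exactly a $P^{h-1}$-occupancy, to which your coverage bound applies; moreover any tuple outside $\mathcal{W}^{h}$ has fewer than $C_1H^2\iota$ visits during the layer-$h$ sub-batch, so your Lemma~\ref{lemma:ete} step applies to it as well, and the rest of your computation goes through unchanged.
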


Based on Lemma~\ref{lemma:raw22}, we further have that
\begin{lemma}\label{lemma:key1}
      Recall that $\bar{V}^*$ be the optimal value function with respect to the transition model $\bar{P}$ and reward function $r$. It then holds that $\bar{V}^*_1(s_1)\leq V^*_1(s_1)\leq \bar{V}^*_1(s_1)+ O\left( \frac{ S^3A^2H^4\iota }{k_2}\right) $.
\end{lemma}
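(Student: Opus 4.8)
The plan is to compare $V^*$ and $\bar V^*$ through a coupling between the true model $P$ and its clipped version $\bar P = \mathtt{clip}(P,\mathcal{W}^*)$, which differs from $P$ only by rerouting every \emph{infrequent} transition (those $(h,s,a,s')\notin\mathcal{W}^*$) to the absorbing, zero-reward state $z$. Since all rewards lie in $[0,1]$, moving mass onto $z$ can only destroy reward, which drives both inequalities.

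First I would establish the lower bound $\bar V^*_1(s_1)\le V^*_1(s_1)$ by a backward induction on $h$ showing $\bar V^{\pi}_h(s)\le V^{\pi}_h(s)$ for every policy $\pi$ and state $s$. At step $h$ the clipped model $\bar P_{h,s,a}$ merely shifts probability mass off nonnegative-value real states onto $z$, whose value is $0$; combined with the inductive hypothesis $\bar V_{h+1}\le V_{h+1}$ this gives $\bar P_{h,s,a}\bar V_{h+1}\le P_{h,s,a}V_{h+1}$, hence $\bar V^{\pi}_h\le V^{\pi}_h$. Applying this to the $\bar P$-optimal policy $\bar\pi^*$ yields $\bar V^*_1(s_1)=W^{\bar\pi^*}(r,\bar P)\le W^{\bar\pi^*}(r,P)\le V^*_1(s_1)$.

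For the upper bound I would fix an optimal policy $\pi^*$ of the true model and couple the $P$-run and the $\bar P$-run under $\pi^*$ with shared randomness, so the two trajectories coincide until the first step $\tau$ at which an infrequent transition occurs. On $\{\tau>H\}$ the two runs collect identical rewards; on $\{\tau\le H\}$ they agree up to and including step $\tau$, after which the $\bar P$-run is stuck in $z$ earning nothing while the $P$-run earns at most $H$ additional reward. Therefore $V^*_1(s_1)-W^{\pi^*}(r,\bar P)=W^{\pi^*}(r,P)-W^{\pi^*}(r,\bar P)\le H\cdot\Pr_{\pi^*}[\tau\le H]=H\cdot\Pr_{\pi^*}[\exists h:(h,s_h,a_h,s_{h+1})\notin\mathcal{W}^*]$. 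Bounding the last probability by $O(S^3A^2H^3\iota/k_2)$ via Lemma~\ref{lemma:raw22}, and using $W^{\pi^*}(r,\bar P)\le\bar V^*_1(s_1)$ by optimality of $\bar\pi^*$ for $\bar P$, gives $V^*_1(s_1)\le\bar V^*_1(s_1)+O(S^3A^2H^4\iota/k_2)$.

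The main obstacle is matching the claimed $H^4$ dependence. A naive per-step estimate through the performance-difference identity (Lemma~\ref{lemma:pd}), summing $H\sum_h\Pr[\text{infrequent at step }h]$, would control the \emph{expected count} of infrequent transitions and lose an extra factor of $H$. The coupling instead exploits that $z$ is absorbing, so only the \emph{first} infrequent transition matters, replacing that expected count by the single probability $\Pr_{\pi^*}[\exists h:\ldots]$ that Lemma~\ref{lemma:raw22} already bounds. Some care is needed because $\pi^*$ may be stochastic, but Lemma~\ref{lemma:raw22} is stated for any optimal policy, so the argument applies verbatim.
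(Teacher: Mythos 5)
Your proposal is correct and takes essentially the same route as the paper: both prove the lower bound by observing that clipping only reroutes probability mass to the zero-reward absorbing state $z$, and both prove the upper bound by fixing an optimal policy $\pi^*$, bounding $V^{\pi^*}_1(s_1)-\bar{V}^{\pi^*}_1(s_1)$ by $H\cdot\Pr_{\pi^*}\left[\exists h:(h,s_h,a_h,s_{h+1})\notin\mathcal{W}^*\right]$, and invoking Lemma~\ref{lemma:raw22} together with $\bar{V}^{\pi^*}_1(s_1)\leq \bar{V}^*_1(s_1)$. Your explicit coupling until the first infrequent transition is just a more detailed rendering of the paper's identity $\mathbb{E}_{\pi^*}\left[\sum_{h=1}^H r_h(s_h,a_h)\,\mathbb{I}\left[\forall h'<h,\,(h',s_{h'},a_{h'},s_{h'+1})\in\mathcal{W}^*\right]\right]=\bar{V}^{\pi^*}_1(s_1)$.
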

\begin{proof}
The left side is obvious since the reward at $z$ is always 0. On the other hand, letting $\pi^*$ be an optimal policy and $E$ be the event where $\exists h\in [H], (h,s_h,a_h,s_{h+1})\notin \mathcal{W}^*$. Then we have that 
\begin{align}
V_1^{\pi^*}(s_1) & \leq \mathbb{E}_{\pi^*}\left[\left(\sum_{h=1}^H r_h(s_h,a_h)\right) \mathbb{I}[E] \right]+H\mathrm{Pr}_{\pi^*}[E]\nonumber
\\ & \leq \mathbb{E}_{\pi^*}\left[\sum_{h=1}^H r_h(s_h,a_h) \mathbb{I}[\forall h'<h, (h',s_{h'},a_{h'},s_{h‘+1})\in \mathcal{W}^*] \right]+ O\left( \frac{ S^3A^2H^4\iota }{k_2}\right)\nonumber
\\ & = \bar{V}^{\pi^*}_1(s_1)+ O\left( \frac{ S^3A^2H^4\iota }{k_2}\right).\nonumber
\end{align} 
\end{proof}

Recall that $\mathrm{gap}^{m+1} := \max_{\pi\in \Pi(r,\mathcal{P}^{m})}  (U^{\pi}(\mathcal{P}^m)- L^{\pi}(\mathcal{P}^m))$. 
For $m\geq 2H+1$, we have that
\begin{lemma}\label{lemma:gapbound}
        Conditioned on $\mathcal{G}$, with probability $1-4SAHKM\delta$, it holds that
        \begin{align}
        &\mathrm{gap}^{m+1} \nonumber
        \\ & \leq  O\left(\sqrt{\frac{SAH^3\ln(K)\iota^2}{K_{m-2H}}}+  \frac{SAH^2\ln(K)\iota}{K_{m-2H}}+\sqrt{\frac{S^\frac{11}{2}A^4H^7\ln(K)\iota^{\frac{5}{2}}}{ K_{m-2H}k_1 }} +\sqrt{\frac{S^4A^{\frac{5}{2}}H^4\ln(K)\iota^\frac{3}{2}}{ K_{m-2H}\sqrt{k_1} }} \right).\label{eq:gapmb}
        \end{align}
\end{lemma}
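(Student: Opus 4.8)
The plan is to bound $U^\pi(\mathcal{P}^m)-L^\pi(\mathcal{P}^m)$ for an arbitrary surviving policy $\pi\in\Pi(r,\mathcal{P}^m)$ by using the clipped true model $\bar P$, which belongs to $\mathcal{P}^m$ by Lemma~\ref{lemma:ge1}, as a pivot. Writing $p^U,p^L\in\mathcal{P}^m$ for the transition models attaining $U^\pi$ and $L^\pi$, I would first split
$$U^\pi(\mathcal{P}^m)-L^\pi(\mathcal{P}^m)\le \big[W^\pi(r,p^U)-W^\pi(r,\bar P)\big]+\big[W^\pi(r,\bar P)-W^\pi(r,p^L)\big]+\max_{p\in\mathcal{P}^m}W^\pi(\textbf{1}_z,p).$$
The last term is the expected number of visits to the virtual state $z$, i.e.\ the mass $\pi$ places on infrequent tuples; by Lemma~\ref{lemma:raw22} together with the tightness of $\mathcal{P}^m$ this is $O(S^3A^2H^3\iota/k_2)$, which is dominated by the claimed bound and can be discharged immediately. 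For each of the two bracketed \emph{diameter} terms I would apply the performance difference lemma (Lemma~\ref{lemma:pd}) to rewrite them as $\sum_{h,s,a}W^\pi(\textbf{1}_{h,s,a},p)\,(p_{h,s,a}-\bar P_{h,s,a})V'_{h+1}$, where $V'$ is the value function of $\pi$ under the corresponding $p\in\{p^U,p^L\}$.

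Next I would bound each inner product $(p_{h,s,a}-\bar P_{h,s,a})V'_{h+1}$ using the two families of constraints defining $\mathcal{P}^m$. Since $V'$ equals neither $v^{m-1}$ nor $\bar V^*$ on which the Bernstein constraints of \eqref{eq:conf2}--\eqref{eq:conf3} are imposed, I would decompose $V'_{h+1}=\bar V^*_{h+1}+(V'_{h+1}-\bar V^*_{h+1})$, bounding the first piece by the Bernstein width $\sqrt{\mathbb{V}(\check P^m_{h,s,a},\bar V^*)\iota/\check N^m_h(s,a)}+\iota/\check N^m_h(s,a)$ and the recursive correction $V'-\bar V^*$ by the coordinate-wise Hoeffding width $\alpha$, the latter producing the higher-order contributions. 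Using tightness to replace $W^\pi(\textbf{1}_{h,s,a},p)$ by $d^\pi_h(s,a)$ up to a constant, this yields a sum of the shape $\sum_{h,s,a}d^\pi_h(s,a)\big(\sqrt{\mathbb{V}(\bar P_{h,s,a},\bar V^*)\,\iota/\check N^m_h(s,a)}+\iota/\check N^m_h(s,a)+\text{(correction)}\big)$. I would then apply Cauchy--Schwarz as in \eqref{eq:wxx} to split the dominant term into a coverage factor $\sqrt{\sum_{h,s,a}d^\pi_h(s,a)/\check N^m_h(s,a)}$ times a variance factor $\sqrt{\iota\sum_{h,s,a}d^\pi_h(s,a)\,\mathbb{V}(\bar P_{h,s,a},\bar V^*)}$, the latter being $O(H\sqrt\iota)$ by the law of total variance (after transferring from the empirical variance under $\check P^m$ to the true variance of $\bar V^*$ under $\bar P$, at the cost of a further lower-order term).

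The crux, and the step I expect to be hardest, is controlling the coverage factor $\sum_{h,s,a}d^\pi_h(s,a)/\check N^m_h(s,a)$. Here I would first argue, via the martingale concentration of Lemma~\ref{lemma:ete} union-bounded over all $(h,s,a)$ and all episode prefixes (this is the source of the factor $K$ in the failure probability $4SAHKM\delta$), that $\check N^m_h(s,a)\gtrsim K_{m-2H}\,d^{\tilde\pi}_h(s,a)$, where $\tilde\pi$ is the design policy executed in the $m$-th batch. Since $\tilde\pi=\mathtt{Design}(\mathcal{P}^{m-1})$ covers $\Pi(r,\mathcal{P}^{m-1})\supseteq\Pi(r,\mathcal{P}^m)$, the design guarantee \eqref{eq:wxxx2} gives $\sum_{h,s,a}d^\pi_h(s,a)\min\{1/d^{\tilde\pi}_h(s,a),K^3\}=O(SAH\log K)$, so the coverage factor is $O(SAH\log K/K_{m-2H})$ and the leading term becomes $O(\sqrt{SAH^3\ln(K)\iota^2/K_{m-2H}})$. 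Two delicate points arise: (i) the design guarantee is stated for the \emph{estimated} occupancies $\tilde d^\pi_h(s,a)=W^\pi(\textbf{1}_{h,s,a},p)$ rather than the true $d^\pi_h$, so I must invoke the tightness of $\mathcal{P}^m$ (Definition~\ref{def:tight}, using $\bar P\in\mathcal{P}^m$) to pass between them at a constant multiplicative cost; and (ii) this tightness holds only because the raw-exploration stages collected enough samples, and the residual estimation error of the confidence region propagates through the correction terms to produce exactly the $k_1$-dependent summands $\sqrt{S^{11/2}A^4H^7\ln(K)\iota^{5/2}/(K_{m-2H}k_1)}$ and $\sqrt{S^4A^{5/2}H^4\ln(K)\iota^{3/2}/(K_{m-2H}\sqrt{k_1})}$.

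Finally I would collect the pieces: the Cauchy--Schwarz leading term gives $\sqrt{SAH^3\ln(K)\iota^2/K_{m-2H}}$; the $\iota/\check N^m_h(s,a)$ bias terms, summed against $d^\pi_h$ with the same coverage bound, give $SAH^2\ln(K)\iota/K_{m-2H}$; and the recursive and tightness corrections give the two $k_1$-dependent terms. A union bound over all $(h,s,a,s')$, all episodes, and the $M$ batches for the concentration events underlying $\check N^m$ and the variance transfer then yields the stated failure probability $4SAHKM\delta$ and establishes \eqref{eq:gapmb}.
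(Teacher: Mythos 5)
Your skeleton genuinely tracks the paper's: pivot the confidence interval at the clipped true model $\bar P\in\mathcal{P}^m$ (Lemma~\ref{lemma:ge1}), expand via the policy-difference identity (Lemma~\ref{lemma:pd}), split the value function into a reference piece controlled by a Bernstein-type constraint plus a correction, apply Cauchy--Schwarz to get a coverage factor times a variance factor, bound the coverage factor by the design guarantee \eqref{eq:wxxx2} together with the count concentration of Lemma~\ref{lemma:ete}, and trace the $k_1$-dependent terms back to the raw-exploration stage. However, there is a genuine gap at the central step: you center the Bernstein decomposition at $\bar V^*$, but the confidence region $\mathcal{P}^m$ imposes \emph{no} Bernstein constraint against $\bar V^*$. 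The only variance-style constraint in $\mathtt{CR}^*$ (Line~\ref{line:alpha*}) is $|(p-\check p_{h,s,a})v|\le \alpha^*(\cdot)$ with $v=v^{m-1}$, the \emph{computable} optimistic value function from the previous batch; the algorithm cannot constrain against $\bar V^*$ because $\bar V^*$ is unknown. The good-event inequality \eqref{eq:conf3} does control $(\check P^m_{h,s,a}-P_{h,s,a})\bar V^*$, but only for the empirical model $\check P^m$, not for the arbitrary $P'\in\mathcal{P}^m$ attaining $U^\pi$ or $L^\pi$. Hence your claimed bound of the ``first piece'' by $\sqrt{\mathbb{V}(\check P^m_{h,s,a},\bar V^*)\iota/\check N^m_h(s,a)}+\iota/\check N^m_h(s,a)$ is unjustified as stated; any patch requires writing $\bar V^*=v^{m-1}+(\bar V^*-v^{m-1})$ and using the $v^{m-1}$-constraint, which collapses your decomposition back into the paper's, centered at $v^{m-1}$ (its $\mathbf{Term.1}$/$\mathbf{Term.2}$ split with correction $f_{h+1}-v^{m-1}_{h+1}$).

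The correction-term treatment is also underpowered as written. Coordinate-wise Hoeffding widths give you sums of the form $\sum_{s'}\alpha(\cdot)\,|V'_{h+1}(s')-\bar V^*_{h+1}(s')|$, and $\|V'-\bar V^*\|_\infty$ can be of order $H$ at states off the support of near-optimal policies (surviving policies are only near-optimal from $s_1$), so this route by itself loses a factor of order $H$ rather than producing the small $k_1$-dependent terms. The paper's mechanism relies on structure your pair lacks: its correction is $f-v^{m-1}$, where $f$ is the value of $\pi$ under $\bar P$ and $v^{m-1}$ is a supersolution under $\bar P$ (since $\bar P\in\mathcal{P}^{m-1}$), so $v^{m-1}\ge f$ pointwise and the telescoping in \eqref{eq:vatotal}--\eqref{eq:exp454} bounds the occupancy-weighted variance of the correction by $O\big(H(v^{m-1}_1(s_1)-f_1(s_1))\big)\le O(H\,\mathrm{gap}^m)$; then $\mathrm{gap}^m$ itself is bounded by $O\big(\sqrt{S^4A^3H^4\iota/k_1}+S^{7/2}A^3H^5\iota^{3/2}/k_1\big)$ via the stage-one counts of Lemma~\ref{lemma:raw++}, and this is exactly where the two $k_1$-dependent summands in \eqref{eq:gapmb} originate. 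Your correction $V'-\bar V^*$ mixes a value computed under $P'$ with one computed under $\bar P$ and has no definite sign, so the telescoping argument does not go through for it; to close the proof you would need to replace it by a pair of functions both defined relative to $\bar P$ with the supersolution ordering, i.e., essentially the paper's choice.
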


By Lemma~\ref{lemma:ge1}, \ref{lemma:key1} and \ref{lemma:gapbound}, for any $2H\leq m \leq 2H+K$ and any $\pi\in \Pi(\mathcal{P}^{m})$, we have that
$$V^{\pi}_{1}(s_1)\geq L^{\pi}(\mathcal{P}^{m})\geq U^{\pi}(\mathcal{P}^{m})-\mathrm{gap}^{m+1}\geq \bar{V}_1^*(s_1)-\mathrm{gap}^{m+1}- O\left(\frac{S^3A^2H^4\iota}{k_2} \right).$$

Recall that $k_1 = 144\sqrt{SAK\iota/H}$, $k_2 =288S^3A^2H^4\sqrt{K\iota} $ and $K_m = \left\lceil K^{1-\frac{1}{2^m}}\right \rceil$ for $1\leq m \leq M$. It then holds that $\frac{K_{m-2H+1}}{\sqrt{K_{m-2H}}} = \sqrt{K}$ for any $2H+1\leq m \leq 2H+K$.
Noting that the regret in the $m+1$-th batch is bounded by $K_{m+1-2H}\cdot \mathrm{gap}^{m+1}$,  
and  the regret in the $2H+1$-th batch is bounded by $K_1= O(\sqrt{K})$, the total regret is bounded by 
\begin{align}
    \mathrm{Regret}(K) =M \cdot O\left(     \sqrt{SAH^3K\ln(K)\iota^2} +  S^{\frac{15}{4}}A^{\frac{9}{8}}H^{\frac{17}{8}}\iota^{\frac{5}{8}}K^{\frac{3}{8}}+   S^{\frac{19}{4}}A^{\frac{13}{4}}H^{\frac{33}{4}}\ln(K)\iota K^{\frac{1}{4}}+ S^{\frac{11}{2}}A^{\frac{9}{2}}H^{\frac{17}{2}}\iota \right). \nonumber
\end{align}
By replacing $\delta$ by $\frac{\delta}{20S^2AHK}$, we get the desired regret bound.

Below we analyze the computational cost of Algorithm~\ref{alg:main}.	
By Lemma~\ref{lemma:stp}   the computational costs of $\mathtt{Sum}$ is  $O(nS^3A^2H^2)$ , where $n$ is the number of inputs for $\mathtt{Sum}$.

Below we analyze the computational cost of $\mathtt{PolicySearch}$. By Lemma~\ref{lemma:cei11}, for input $(u,u',\mathcal{P})$, the computational cost of $\mathtt{PolicySearch}$ is bounded by $O(S^4AHM^3\log(SAHK)\log(SAHK/(a-b)))$ with $a = \max_{\pi}U^{\pi}(u+\textbf{1}_z,\mathcal{P})$ and $b=\max_{\pi}L^{\pi}(u,\mathcal{P})$.

 In the first stage, we invoke $\mathtt{PolicySearch}$ with $u = 0$, which implies $b= 0$ and $W^{\pi}(u,p)=0$ for any $\pi$ and $p\in \mathcal{P}$. Then the condition in Line~\ref{line:mark1} Algorithm~\ref{alg:ps} is satisfied and the loop would break. Therefore, by Lemma~\ref{lemma:evi}, the computational cost of $\mathtt{PolicySearch}$ in the first stage is bounded by  $O(S^4AHM^3\log(SAKH))$.

In the second and the third stage, we invoke $\mathtt{PolicySearch}$ with $u = r$. In this case, if $a-b\leq 1/K$, then
we can learn an $1/K$-optimal policy by solving $\pi' = \arg\max_{\pi}L^{\pi}(r,\mathcal{P})$. Then we can simply run this policy in the left episodes. Without loss of generality, we then assume that $a-b >1/K$, which implies the time cost of $\mathtt{PolicySearch}$ is bounded by  
 $O(S^4AHM^3\log^2(SAKH))$.

Now we count the number of callings to $\mathtt{Sum}$ and $\mathtt{PolicySearch}$. In the first and second stage, $\mathtt{Sum}$ is called for $2H$ times with $n=SAH$ inputs, and $\mathtt{PolicySearch}$ is called for $2H$ times. In the third stage,  $\mathtt{Sum}$ is called for $M$ times with $n=K^3$ inputs, and $\mathtt{PolicySearch}$ is called  for $K^3M$ times. So the total time cost due to  $\mathtt{Sum}$ and $\mathtt{PolicySearch}$ is bounded by $\tilde{O}(S^4AHK^3+S^3A^2H^2K^3)$. On the other hand, to compute $\{v^m_h(s)\}_{h\in [H],s\in \mathcal{S}}$ in Line~\ref{line:v} Algorithm~\ref{alg:ps}, we need to invoke $\mathtt{EVI}$ (see Algorithm~\ref{alg:evi}) for $M$ times, which needs additional $O(S^4AHM^4\log(SAHK))$ time by Lemma~\ref{lemma:evi}. Finally, to observe the samples and compute the confidence region, we need $O(S^2AHK)$ time.

Putting all together, the computational cost of Algorithm~\ref{alg:main} is bounded by $\tilde{O}(S^4AHK^3+S^3A^2H^2K^3)$. The proof is completed.

\subsection{Proof of Lemma~\ref{lemma:ge1}}
\textbf{Lemma~\ref{lemma:ge1} (restated)}
\emph{
        Conditioned on $\mathcal{G}$, we have $\bar{P}\in \mathcal{P}^m$ for $2H\leq m \leq 2H+M$.}
        
\begin{proof}

with a slight abuse of notation, we use $v^{m}$ to denote the value of $v^{m-2H}$ in Algorithm~\ref{alg:policy_elimination}.

Recall the definition of $\mathcal{P}^m$.
It suffices to show that  $\bar{P}\in \mathtt{CR}^*(\mathcal{D}^m,\bar{\mathcal{D}^m,W^*,\{ v_{h}^{m-1}(s) \}_{(h,s)}})$ for each $m\geq 2H$. 

Note that after the $m$-th batch $\hat{p}_{h,s,a,s'}=\hat{P}^m_{h,s,a,s'}$ and $\check{p}_{h,s,a}=\check{P}^m_{h,s,a}$.
By the definition of $\mathcal{G}$, and recalling the definition of $\beta_{h,s,a,s'}^m$ and $\lambda_{h,s,a}^m$ in \eqref{eq:conf1} and \eqref{eq:conf2}, we have that
\begin{align}
 &     \left| \bar{P}_{h,s,a,s'} -\hat{P}^m_{h,s,a,s'} \right|\leq \beta_{h,s,a,s'}^m \leq \alpha(N^m_h(s,a),N^m_h(s,a,s') )\nonumber
 \\ &  \left|(\bar{P}_{h,s,a}- \check{P}^m_{h,s,a} )v^{m-1} \right|\leq \lambda_{h,s,a}^m \leq \alpha^*(\check{N}_h^m(s,a),\check{P}_{h,s,a}^m,v^{m-1}  ).\nonumber
\end{align}
The proof is completed.
\end{proof}

\subsection{Proof of Lemma~\ref{lemma:b2}}

\textbf{Lemma~\ref{lemma:b2} (restated)}\emph{
      Conditioned on $\mathcal{G}$, with probability $1-4SAH\delta$ the regret bound in the second stage is bounded by $O\left(\frac{k_2\sqrt{S^4A^3H^8\iota}}{\sqrt{k_1}}+\frac{k_2S^3A^3H^4\iota}{k_1}\right)$.}
      
      \begin{proof}
      Let $\mathcal{D}^1$ and $\mathcal{D}^2$ be respectively the dataset after the first and second stage. Let $\{\bar{N}^1_{h}(s,a,s')\}$ and $\{\bar{N}^2_{h}(s,a,s')\}$ be the corresponding counts. Let $\bar{\mathcal{W}}^1$ and $\bar{\mathcal{W}}^2$ be the corresponding \emph{known} set. Note that $\mathcal{W}^* = \bar{\mathcal{W}}^2$.
       By Lemma~\ref{lemma:raw++}, with probability $1-8S^2AH^2\delta$, it holds that
       \begin{align}
            & \max_{\pi}\mathbb{P}_{\pi}\left[ \exists h\in [H], (h,s_h,a_h,s_{h+1})\notin \bar{\mathcal{W}}^1\right] \leq \frac{36C_1S^2A^2H^3\iota}{k_1}\nonumber
            \\ & \bar{N}^1_h(s,a)\geq \frac{ck}{27SA}\max_{\pi}W^{\pi}(\textbf{1}_{h,s,a},P)-4\iota-\frac{36C_1SAH^3\iota}{27}.\label{eq:xgx}
       \end{align}
      For any policy $\pi$ in $\Pi(\mathtt{CR}(\mathcal{D}^1))$, using policy difference lemma we have that
      \begin{align}
         &  U^{\pi}(\mathtt{CR}(\mathcal{D}^1))- L^{\pi}(\mathtt{CR}(\mathcal{D}^1)) \nonumber
         \\ & = U^{\pi}(\mathtt{CR}(\mathcal{D}^1))-W^{\pi}(r, \mathtt{clip}(P,\bar{\mathcal{W}}^1)) + W^{\pi}(r, \mathtt{clip}(P,\bar{\mathcal{W}}^1))-L^{\pi}(\mathtt{CR}(\mathcal{D}^1)) 
         \\ & \leq \max_{\pi}\mathbb{P}_{\pi}\left[ \exists h\in [H], (h,s_h,a_h,s_{h+1})\notin \bar{\mathcal{W}}^1\right] +O\left(\sum_{h,s,a}W^{\pi }(\textbf{1}_{h,s,a},\mathtt{clip}(P,\bar{\mathcal{W}}^1))\sqrt{\frac{S\iota }{\bar{N}^1_{h}(s,a)}}\cdot H\right)\nonumber
         \\ & \leq \frac{36C_1S^2A^2H^3\iota }{k_1} + O\left( \sum_{h,s,a} \left( \frac{SA(\bar{N}_h^1(s,a)+SAH^3\iota)}{k}\right) \sqrt{\frac{SH^2\iota}{\bar{N}_h^1(s,a)}}               \right)
         \\ & \leq \frac{36C_1S^2A^2H^3\iota}{k_1}+O\left(\ \sqrt{\frac{S^4A^3H^8\iota}{k_1}} +\frac{S^3A^3H^4\iota}{k_1}\right),\nonumber
      \end{align}
      where the third line is by \eqref{eq:xgx} and the last line is by Cauchy's inequality and the fact that $\bar{N}_h^1(s,a)\geq 1$. 
     Conditioned on $\mathcal{G}$, we have that = $\mathtt{clip}(P,\bar{\mathcal{W}}^1)\in \mathtt{CR}(\mathcal{D}^1)$. As a result, we have that $\max_{\pi}U^{\pi}(\mathtt{CR}(\mathcal{D}^1))\geq  V^*_1(s_1)-\frac{36C_1S^3A^2H^4\iota}{k_1}$. To conclude, the regret in the second stage is bounded by $O\left(\frac{k_2\sqrt{S^4A^3H^8\iota}}{\sqrt{k_1}}+\frac{k_2S^3A^3H^4\iota}{k_1}\right)$.
      \end{proof}

\subsection{Proof of Lemma~\ref{lemma:raw22}}
\textbf{Lemma~\ref{lemma:raw22} (restated)}\emph{
  Conditioned on $\mathcal{G}$, with probability $1-4S^2AH^2\delta$, for any optimal policy $\pi^*$, it holds that $\mathrm{Pr}_{\pi^*}[\exists h\in [H], (h,s_h,a_h,s_{h+1})\notin \mathcal{W}^*]\leq O\left( \frac{ S^3A^2H^3\iota }{k_2}\right)$.}
\begin{proof}
By Lemma~\ref{lemma:raw++}, with probability $1-4S^2AH^2\delta$, it holds that
\begin{align}
    \max_{\pi \in \Pi^*}\mathrm{Pr}_{\pi}\left[ \exists h\in [H] , (h,s_{h},a_{h},s_{h+1})\notin \mathcal{W}^* \right] \leq \frac{36C_1S^2A^2H^3\iota }{k_2}.\nonumber
\end{align}
The proof is completed.
\end{proof}

\subsection{Proof of Lemma~\ref{lemma:gapbound}}
\textbf{Lemma~\ref{lemma:gapbound} (restated)}\emph{
 Conditioned on $\mathcal{G}$, with probability $1-4SAHKM\delta$, it holds that}
        \begin{align}\mathrm{gap}^{m}  \leq  O\left(\sqrt{\frac{SAH^3\ln(K)\iota^2}{K_{m-2H}}}+  \frac{SAH^2\ln(K)\iota}{K_{m-2H}}+\sqrt{\frac{S^\frac{11}{2}A^4H^7\ln(K)\iota^{\frac{5}{2}}}{ K_{m-2H}k_1 }} +\sqrt{\frac{S^4A^{\frac{5}{2}}H^4\ln(K)\iota^\frac{3}{2}}{ K_{m-2H}\sqrt{k_1} }} \right)\nonumber\end{align}
for $2H+1\leq m \leq M$.
\begin{proof}

Let $m\in [2H+1,M]$ be fixed. Conditioned on $\mathcal{G}$, we have that for any $p\in \mathcal{P}^{m-1}$, for any $(h,s,a,s')\in \mathcal{W}^*$ it holds that
\begin{align}
    \left|\hat{P}^{m-1}_{h,s,a,s'}- \bar{P}_{h,s,a,s'}  \right| &\leq
    \sqrt{\frac{4\hat{P}^{m-1}_{h,s,a,s'}\iota}{N^{m-1}_h(s,a)}}+\frac{\iota}{3N^{m-1}_h(s,a)}\nonumber 
	    \\ & = \frac{1}{N^{m-1}_h(s,a)}\cdot (\sqrt{4N^{m-1}_h(s,a,s')\iota}+1/3) \nonumber 
	    \\ & \leq 3\hat{P}^{m-1}_{h,s,a,s'}\cdot \sqrt{\frac{\iota}{N^{m-1}_h(s,a,s')}} \nonumber 
	    \\ & \leq \frac{1}{3H}\hat{P}^{m-1}_{h,s,a,s'}.\nonumber
\end{align}
On the other hand, noting that for any $p\in \mathcal{P}^{m-1}$ and $(h,s,a,s')\in \mathcal{W}^{*}$, with similar computation it holds that
\begin{align}
    \left|p_{h,s,a,s'}-\bar{P}_{h,s,a,s'} \right|& \leq  \left|p_{h,s,a,s'}-\hat{P}^{m-1}_{h,s,a,s'} \right| +  \left|\hat{p}_{h,s,a,s'}-\bar{P}^{h'}_{h,s,a,s'} \right| \nonumber
    \\ & \leq \frac{1}{3H}\bar{P}_{h,s,a,s'}+\frac{1}{3H}\hat{P}^{m-1}_{h,s,a,s'}\nonumber
    \\ & \leq \left(\frac{2}{3H}+\frac{1}{9H^2}\right) \bar{P}_{h,s,a,s'}\nonumber 
\end{align}
Therefore $\mathcal{P}^{m-1}$ is \emph{tight} with respect to $\bar{P}$. Let $p^{m-1}\in \mathcal{P}^{m-1}$ be the value of $p$ in Line 26 Algorithm~\ref{alg:policy_elimination}. Let $r^{i,m-1}$ be the value of $r^i$ defined in Line 29  Algorithm~\ref{alg:policy_elimination}.
Let $\{\tilde{\pi}^{i,m-1}\}$ be the value of $\tilde{\pi}(i)$ in Line 30 Algorithm~\ref{alg:policy_elimination}. 

As a result, by Lemma~\ref{lemma:cei11}, Lemma~\ref{lemma:stp} and Lemma~\ref{lemma:tight}
\begin{align}
    & W^{\tilde{\pi}^{i,m-1}}(r^{i,m-1},\bar{P})\geq \frac{c}{9} \max_{\pi\in \Pi(\mathcal{P}^{m-1})}W^{\pi}(r^{i,m-1},\bar{P})\nonumber
    \\ & W^{\pi^{m}}(\textbf{1}_{h,s,a},\bar{P})\geq \frac{1}{9K^3}\sum_{i=1}^{K^3} W^{\tilde{\pi}^{i,m-1}}(\textbf{1}_{h,s,a},\bar{P}) ,\forall (h,s,a).\label{eq:designpi}
\end{align}
	Consequently, for any $\pi\in \Pi(\mathcal{P}^{m-1})$ and $(h,s,a)$, it holds that
	\begin{align}
	    W^{\pi}(r^{K^3+1,m-1},\bar{P})\leq & \frac{81}{cK^3}\sum_{i=1}^{K^3}W^{\tilde{\pi}^{i,m-1}}(r^{i,m-1},\bar{P}) \nonumber
	    \\ & =\frac{81}{cK^3} \sum_{i=1}^{K^3}\sum_{h,s,a} W^{\tilde{\pi}^{i,m-1}}(\textbf{1}_{h,s,a},\bar{P})\cdot \min \left\{  \frac{1}{\sum_{j=1}^{i-1}W^{\tilde{\pi}^{j,m-1}}(\textbf{1}_{h,s,a},p^{m-1})  }     ,1\right\}\nonumber
	    \\ & = \frac{81}{cK^3}\sum_{h,s,a}\sum_{i=1}^{K^3}\sum_{h,s,a} W^{\tilde{\pi}^{i,m-1}}(\textbf{1}_{h,s,a},\bar{P})\cdot \min \left\{  \frac{1}{\sum_{j=1}^{i-1}W^{\tilde{\pi}^{j,m-1}}(\textbf{1}_{h,s,a},p^{m-1})  }     ,1\right\}\nonumber
	    \\ & \leq \frac{243}{cK^3}\sum_{h,s,a}\sum_{i=1}^{K^3}\sum_{h,s,a} W^{\tilde{\pi}^{i,m-1}}(\textbf{1}_{h,s,a},\bar{P})\cdot \min \left\{  \frac{1}{\sum_{j=1}^{i-1}W^{\tilde{\pi}^{j,m-1}}(\textbf{1}_{h,s,a},\bar{P})  }     ,1\right\}\nonumber
	    \\ & \leq \frac{243SAH\ln(K)}{cK^3}\label{eq:bbbr}
	\end{align}
	where the second line is by the \emph{tightness} (w.r.t. $\bar{P}$) of $\mathcal{P}^{m-1}$, and the last line is by the fact that for any  non-negative $\{x_i\}_{i=1}^n$
	\begin{align}
	    \sum_{i =1}^nx_i \cdot \min\left\{\frac{1}{\sum_{j=1}^{i-1}x_j},1\right\} & \leq  2+ 2\sum_{i= 1}^n\left(\ln\left(\sum_{j=1}^{i}x_i\right)-\ln\left(\sum_{j=1}^{i-1}x_j\right)\right)\mathbb{I}\left[\left(\sum_{j=1}^{i-1}x_j\right)\geq 1\right]\nonumber
	    \\ & \leq 2+ 2\ln\left(\sum_{i=1}^n x_i\right).\nonumber
	\end{align}
	
	By definition of $r^{K^3,m-1}$, we have that for any $(h,s,a)$
	\begin{align}
	    r^{K^3+1,m-1}_h(s,a) & = \min\left\{ \frac{1}{\sum_{j=1}^{K^3}W^{\tilde{\pi}^{j,m-1}}(\textbf{1}_{h,s,a} ,p^{m-1}) } ,1\right\}\nonumber	    \\ & \geq \frac{1}{3} \min\left\{ \frac{1}{\sum_{j=1}^{K^3}W^{\tilde{\pi}^{j,m-1}}(\textbf{1}_{h,s,a} ,\bar{P}) } ,1\right\} = \frac{1}{3}\min\left\{ \frac{1}{K^3W^{\pi^m}(\textbf{1}_{h,s,a} ,\bar{P}) } ,1\right\}.\label{eq:5251}
	\end{align}
	By \eqref{eq:bbbr} and \eqref{eq:5251}, for any $\pi \in \Pi(\mathcal{P}^{m-1})$ it holds that 
	\begin{align}
	   \sum_{h,s,a}W^{\pi}(\textbf{1}_{h,s,a},\bar{P})\cdot  \min\left\{ \frac{1}{K^3W^{\pi^m}(\textbf{1}_{h,s,a} ,\bar{P}) } ,1\right\} & \leq  3 \sum_{h,s,a}W^{\pi}(\textbf{1}_{h,s,a},\bar{P})r^{K^3+1,m-1}_h(s,a)  \leq \frac{729SAH\ln(K)}{cK^3}.\label{eq:samplerange}
	\end{align}

Note that $\pi^m$ is executed for $K_{m-2H}$ rounds. By Lemma~\ref{lemma:ete}, with probability $1-4SAH\delta$, it holds that 
\begin{align}
    \check{N}_h^m(s,a)  \geq \frac{1}{3}K_{m-2H}W^{\pi^m}(1_{h,s,a},\bar{P})-\iota.\label{eq:countlb}
\end{align}
	
Fix $\pi \in \Pi(r,\mathcal{P}^{m-1})$.
	Let $\{f_h(\cdot)\}_{h=1}^{S}$ be the value function under $\pi$ and $\bar{P}$.
 For any $P'\in \mathcal{P}^m$, by policy difference lemma, we have that
\begin{align}
 & \left|W^{\pi}(r,P') - W^{\pi}(r,\bar{P})\right| \nonumber
    \\ & =\left|\sum_{h,s,a}W^{\pi}(\textbf{1}_{h,s,a},P')\cdot (P'_{h,s,a}-\bar{P}_{h,s,a})f_{h+1} \right|\nonumber
    \\ & \leq \underbrace{\left|\sum_{h,s,a}W^{\pi}(\textbf{1}_{h,s,a},P') (P'_{h,s,a}-\bar{P}_{h,s,a})v^{m-1}_{h+1}\right|}_{\mathbf{Term.1}} + \underbrace{\left|\sum_{h,s,a}W^{\pi}(\textbf{1}_{h,s,a},P')(P'_{h,s,a}-\bar{P}_{h,s,a})(f_{h+1}-v^{m-1}_{h+1})\right|}_{\mathbf{Term.2}}.\label{eq:decom}
\end{align}

By the definition of $\mathcal{P}^{m}$ and $\mathcal{G}$, we have that
\begin{align}
    \mathbf{Term.1}  & = \left|\sum_{h,s,a}W^{\pi}(\textbf{1}_{h,s,a},P') (P'_{h,s,a}-\check{P}^{m}_{h,s,a} +\check{P}_{h,s,a}^m-P_{h,s,a})v^{m-1}_{h+1}\right|  \nonumber
    \\ & \leq \sum_{h,s,a}W^{\pi}(\textbf{1}_{h,s,a},P') \cdot \left(  5\sqrt{\frac{\mathbb{V}(\check{P}^m_{h,s,a},v^{m-1}_{h+1})\iota}{\check{N}^m_h(s,a) }}+5\sqrt{\frac{\mathbb{V}(\bar{P}_{h,s,a},v^{m-1}_{h+1})\iota}{\check{N}^m_h(s,a) }} +\frac{8\iota}{\check{N}^m_{h}(s,a)}   \right) \nonumber
    \\ & \leq O\left(\sqrt{\sum_{h,s,a}\frac{W^{\pi}(\textbf{1}_{h,s,a},\bar{P})\iota}{\check{N}_h^m(s,a)} } \cdot \sqrt{ \sum_{h,s,a}W^{\pi}(\textbf{1}_{h,s,a},\bar{P})\cdot \left(\mathbb{V}(\check{P}^m_{h,s,a},v^{m-1}_{h+1})+\mathbb{V}(\bar{P}_{h,s,a},v^{m-1}_{h+1})\right) }\right) \nonumber
    \\ & \quad \quad \quad \quad \quad \quad \quad \quad \quad \quad \quad \quad \quad \quad \quad \quad \quad \quad \quad \quad \quad \quad \quad \quad \quad \quad \quad \quad  +O\left(\sum_{h,s,a}\frac{W^{\pi}(\textbf{1}_{h,s,a},\bar{P})\iota}{\check{N}^m_h(s,a)}\right)
\end{align}

Define $T_1 = \sum_{h,s,a}\frac{W^{\pi}(\textbf{1}_{h,s,a},\bar{P})\iota}{\check{N}^m_h(s,a)}$,  $T_2 = \sum_{h,s,a}W^{\pi}(\textbf{1}_{h,s,a},\bar{P})\cdot \mathbb{V}(\bar{P}_{h,s,a},v^{m-1}_{h+1})$ and $T_2 =\sum_{h,s,a}W^{\pi}(\textbf{1}_{h,s,a},\bar{P})\cdot \mathbb{V}(\check{P}^m_{h,s,a},v^{m-1}_{h+1})$.
	
\paragraph{Bound of $T_1$}	
By \eqref{eq:samplerange} and \eqref{eq:countlb}, we have that
\begin{align}
    T_1 & \leq 3\sum_{h,s,a} \frac{W^{\pi}(\textbf{1}_{h,s,a},\bar{P})}{\max\{ K_{m-2H}W^{\pi^m}(\textbf{1}_{h,s,a},\bar{P})-3\iota ,1\}} \nonumber
    \\ & = \frac{3K^3}{K_{m-2H}}\sum_{h,s,a}W^{\pi}(\textbf{1}_{h,s,a},\bar{P})\cdot \min\left\{  \frac{1}{K^3W^{\pi^m}(\textbf{1}_{h,s,a},\bar{P})-3K^3\iota/K_{m-2H}} ,\frac{K_{m-2H}}{K^3}\right\}\nonumber
    \\ & \leq \frac{3K^3}{K_{m-2H}}\sum_{h,s,a}W^{\pi}(\textbf{1}_{h,s,a},\bar{P})\cdot\left(\min\left\{ \frac{2}{K^3W^{\pi^m}(\textbf{1}_{h,s,a},\bar{P})},1\right\}\cdot \mathbb{I}\left[K_{m-2H}W^{\pi^m}(\textbf{1}_{h,s,a},\bar{P})\geq 6\iota\right] \right)\nonumber
    \\ &  \quad \quad \quad \quad\quad \quad \quad \quad\quad \quad \quad \quad\quad \quad \quad \quad  \quad \quad \quad + 3\sum_{h,s,a}W^{\pi}(\textbf{1}_{h,s,a},\bar{P})\mathbb{I}\left[ K_{m-2H}W^{\pi^m}(\textbf{1}_{h,s,a},\bar{P})< 6\iota\ \right]\nonumber
    \\ & \leq \frac{3K^3}{K_{m-2H}}\cdot \frac{729SAH\ln(K)}{cK^3} + \frac{18SAH\iota}{K_{m-2H}}\nonumber
    \\ & = O\left( \frac{SAH\ln(K)\iota }{K_{m-2H}}\right).\label{eq:bdt1}
\end{align}
	
\paragraph{Bound of $T_2$}
\begin{align}
    T_2 &  = \sum_{h,s,a}W^{\pi}(\textbf{1}_{h,s,a},\bar{P})\cdot \mathbb{V}(\bar{P}_{h,s,a},v^{m-1}_{h+1}) \nonumber
    \\ &  = \sum_{h,s,a}W^{\pi}(\textbf{1}_{h,s,a},\bar{P})\cdot \left( \bar{P}_{h,s,a}(v_{h+1}^{m-1})^2 - (\bar{P}_{h,s,a}v_{h+1}^{m-1})^2    \right) \nonumber
    \\ & \leq \sum_{h,s,a}W^{\pi}(\textbf{1}_{h,s,a},\bar{P})\cdot \left( (v^{m-1}_{h}(s))^2 - (\bar{P}_{h,s,a}v_{h+1}^{m-1})^2    \right) + H^2 \nonumber
    \\ & \leq H \sum_{h=1}^H \mathbb{E}_{\pi,\bar{P}}\left[ |v_{h}^{m-1}(s_h)- \bar{P}_{h,s_h,a_h}v_{h+1}^{m-1}|\right]+H^2 \nonumber
    \\ & = H \sum_{h=1}^H \mathbb{E}_{\pi,\bar{P}}\left[ v_{h}^{m-1}(s_h)- \bar{P}_{h,s_h,a_h}v_{h+1}^{m-1}\right]+H^2   \label{eq:exp351}
    \\ & \leq H\sum_{h=1}^H  \mathbb{E}_{\pi,\bar{P}}[r_{h}(s_h,a_h)]+2H^2\nonumber
    \\ & \leq 4H^2.\label{eq:boundt2}
\end{align}
Here \eqref{eq:exp351} is by the fact that $v_{h}^{m-1}$ is the optimal value function with respect to $\mathcal{P}^{m-1}$ and $\bar{P}\in \mathcal{P}^{m-1}$.

\paragraph{Bound of $T_3$}

By Lemma~\ref{lemma:bennet}, with probability $1-4S^2AH\delta$, it holds that
\begin{align}
    \left|\check{P}^m_{h,s,a,s'}-\bar{P}_{h,s,a,s'}\right| \leq 4\sqrt{\frac{\bar{P}_{h,s,a,s'}\iota}{\check{N}^{m}_{h}(s,a)}} +\frac{3\iota}{\check{N}_h^m(s,a)}\leq 2\bar{P}_{h,s,a,s'} + \frac{5\iota}{\check{N}_h^m(s,a)}.\nonumber
\end{align}
As a result, we have that 
\begin{align}
    T_3& =  \sum_{h,s,a}W^{\pi}(\textbf{1}_{h,s,a},\bar{P})\cdot \mathbb{V}(\check{P}^m_{h,s,a},v^{m-1}_{h+1})\nonumber
    \\ & \leq  \sum_{h,s,a}W^{\pi}(\textbf{1}_{h,s,a},\bar{P})\cdot \sum_{s'}\check{P}^{m}_{h,s,a,s'}\left( v_{h+1}^{m-1}(s') -  \bar{P}_{h,s,a,s'}v_{h+1}^{m-1} \right)^2 \nonumber
\\ & \leq \sum_{h,s,a}W^{\pi}(\textbf{1}_{h,s,a},\bar{P})\cdot \sum_{s'}\bar{P}_{h,s,a,s'}^{m}\left( v_{h+1}^{m-1}(s') -  \bar{P}_{h,s,a,s'}v_{h+1}^{m-1} \right)^2  + \sum_{h,s,a}W^{\pi}(\textbf{1}_{h,s,a},\bar{P})\cdot \frac{5H^2\iota}{\check{N}_h^m(s,a)}\nonumber
\\ & =4T_2+ 5H^2\iota T_1\nonumber
\\ & \leq  O\left(H^2 + \frac{SAH^2\ln(K)\iota^2}{K_{2m-H}}\right).\label{eq:boundt3}
\end{align}	
	
By \eqref{eq:bdt1}, \eqref{eq:boundt2} and \eqref{eq:boundt3}, $\mathbf{Term.1}$ is bounded by
\begin{align}
    \mathbf{Term.1}\leq O\left( \sqrt{\frac{SAH^3\ln(K)\iota^2}{K_{m-2H}}}+  \frac{SAH^2\ln(K)\iota}{K_{m-2H}} \right).\label{eq:bdterm1}
\end{align}

To bound $\mathbf{Term.2}$, by definition of $\mathcal{P}^m$ and $\mathcal{G}$, we have
\begin{align}
    \mathbf{Term.2} & = \left|\sum_{h,s,a}W^{\pi}(\textbf{1}_{h,s,a},P')(P'_{h,s,a}-\bar{P}_{h,s,a})(f_{h+1}-v^{m-1}_{h+1})\right| \nonumber
  \\ & \leq \sum_{h,s,a}W^{\pi}(\textbf{1}_{h,s,a},P')\sum_{s'}\left(10\sqrt{\frac{\bar{P}_{h,s,a,s'}\iota}{N_h^m(s,a)}}+\frac{6\iota}{N_h^m(s,a)}\right)\cdot |f_{h+1}(s')-v_{h+1}^{m-1}(s')-l| \nonumber
  \\ & \leq O\left(    \sum_{h,s,a}W^{\pi}(\textbf{1}_{h,s,a},\bar{P})\sum_{s'}\sqrt{\frac{\bar{P}_{h,s,a,s'}\iota}{N_h^m(s,a)}} |f_{h+1}(s')-v_{h+1}^{m-1}(s')-l_h(s,a)| \right) \nonumber
  \\ &\quad\quad \quad \quad \quad\quad \quad \quad\quad\quad \quad \quad\quad\quad \quad \quad\quad\quad  + O\left(    \sum_{h,s,a}W^{\pi}(\textbf{1}_{h,s,a},\bar{P})\frac{SH\iota}{N_h^m(s,a)} \right) ,\label{eq:ddcomt2}
\end{align}
where $l_h(s,a) = \bar{P}_{h,s,a}(f_{h+1}-v_{h+1}^m)$. By \eqref{eq:bdt1}, the second term in \eqref{eq:ddcomt2} is  bounded by $O\left( \frac{SAH\ln(K)\iota}{K_{m-2H}}\right)$. To bound the   the first term in \eqref{eq:ddcomt2}, by Cauchy's inequality, we have that
\begin{align}
 &    O\left(      \sum_{h,s,a}W^{\pi}(\textbf{1}_{h,s,a},\bar{P})\sqrt{\frac{S\mathbb{V}(\bar{P}_{h,s,a}, f_{h+1}-v_{h+1}^{m-1})\iota}{N_h^m(s,a)}} \right)\nonumber
 \\ & \leq O\left(\sqrt{\frac{SW^{\pi}(\textbf{1}_{h,s,a},\bar{P})\iota}{N_h^m(s,a)}}\cdot \sqrt{\sum_{h,s,a}W^{\pi}(\textbf{1}_{h,s,a},\bar{P})\mathbb{V}(\bar{P}_{h,s,a}, f_{h+1}-v_{h+1}^{m-1})} \right)\nonumber
 \\ & \leq O\left(\sqrt{\frac{S^2AH\ln(K)\iota^2}{K_{m-2H}}}\cdot \sqrt{\sum_{h,s,a}W^{\pi}(\textbf{1}_{h,s,a},\bar{P})\mathbb{V}(\bar{P}_{h,s,a}, f_{h+1}-v_{h+1}^{m-1})} \right),\nonumber
\end{align}
where the last line is by \eqref{eq:bdt1}. Continuing the computation:
\begin{align}
& \sum_{h,s,a}W^{\pi}(\textbf{1}_{h,s,a},\bar{P})\mathbb{V}(\bar{P}_{h,s,a}, f_{h+1}-v_{h+1}^{m-1}) \nonumber
\\ & = \sum_{h,s,a}W^{\pi}(\textbf{1}_{h,s,a},\bar{P})\left( \bar{P}_{h,s,a}(f_{h+1}-v_{h+1^{m-1}})^2  -(\bar{P}_{h,s,a}f_{h+1}-\bar{P}_{h,s,a}v_{h+1}^{m-1})^2\right) \nonumber
\\ & \leq \mathbb{E}_{\pi,\bar{P}}\left[ \sum_{h=1}^{H}\left((f_{h+1}(s_{h+1})- v_{h+1}^{m-1}(s_{h+1} )^2 - (\sum_{a}\pi_h(a|s)\bar{P}_{h,s,a}(f_{h+1}-v_{h+1}) )^2 \right)       \right]\label{eq:vatotal}
\\ & \leq (v_{1}^{m-1}(s_1)-f_1(s_1))^2+ H\mathbb{E}_{\pi,\bar{P}}\left[  \sum_{h=1}^{H} \left|f_{h}(s_{h})- v_{h}^{m-1}(s_{h})  - \sum_{a}\pi_h(a|s_h)\bar{P}_{h,s_h,a}(f_{h+1}-v^{m-1}_{h+1})  \right|          \right] \nonumber
\\ & \leq (v_{1}^{m-1}(s_1)-f_1(s_1))^2+ H\mathbb{E}_{\pi,\bar{P}}\left[  \sum_{h=1}^{H} v^{m-1}_{h}(s_{h})-  \sum_{a}\pi_h(a|s_h)\left(r_h(s_h,a)+\bar{P}_{h,s_h,a}v^{m-1}_{h+1})\right)       \right] \label{eq:exp454}
\\ & \leq  (v_{1}^{m-1}(s_1)-f_1(s_1))^2+H(v_1^{m-1}(s_1) - f_1(s_1))\nonumber
\\ & \leq   2 H(v_1^{m-1}(s_1) - f_1(s_1)).\nonumber
\end{align}
Here \eqref{eq:vatotal} holds by the fact that $\mathrm{Var}(X)\geq \mathbb{E}_{Y}[\mathrm{Var}(X|Y)]$ for any random variables $X$ and $Y$ (recalling that $\mathrm{Var}(X)$ denotes the variance of $X$), and \eqref{eq:exp454} is by the fact that $v^{m-1}_h(s_h)\geq \sum_{a}\pi_h(a|s_h) (r_h(s_h,a)+\bar{P}_{h,s,a}v_{h+1}^{m-1})$ and $f_h(s_h)= \sum_{a}\pi_h(a|s_h) (r_h(s_h,a)+\bar{P}_{h,s,a}f_{h+1})$ for any $1\leq h \leq H$.

Because $\pi\in \Pi(r,\mathcal{P}^{m-1})$, we learn that $v_1^{m-1}-f_1(s_1)\leq \mathrm{gap}^{m}$. By Lemma~\ref{lemma:raw++}, we have that for any $m\geq H+1$, $N_{h}^{m-1}(s,a)\geq \frac{ck_1}{27SA}\max_{\pi}W^{\pi}(\textbf{1}_{h,s,a},\bar{P})-4\iota - \frac{36C_1SAH^3\iota}{27}$. With similar analysis, and noting that $\|p'_{h,s,a}-p''_{h,s,a}\|_{1}\leq O(\sqrt{S\iota/N_h^{m-1}(s,a)})$ for any $p',p''\in \mathcal{P}^{m-1}$, we have 
\begin{align}
    \mathrm{gap}^{m}&\leq O\left(\max_{\pi'}\sum_{h,s,a}W^{\pi'}(\textbf{1}_{h,s,a},\bar{P}) \sqrt{\frac{SH^2\iota}{N_h^{m-1}(s,a)}}\right)  \nonumber
    \\ & \leq O\left(\sqrt{\frac{S^4A^3H^4\iota}{k_1}}+\frac{S^{\frac{7}{2}}A^{3}H^5\iota^{\frac{3}{2}}}{k_1} \right).
\end{align}

As a result, we have that 
\begin{align}
\mathbf{Term.2} \leq O\left( \sqrt{\frac{S^\frac{11}{2}A^4H^7\ln(K)\iota^{\frac{5}{2}}}{ K_{m-2H}k_1 }} +\sqrt{\frac{S^4A^{\frac{5}{2}}H^4\ln(K)\iota^\frac{3}{2}}{ K_{m-2H}\sqrt{k_1} }}  + \frac{S^2AH\ln(K)\iota}{K_{m-2H}} \right) .\label{eq:bdterm2}
\end{align}

Putting all together, for any $\pi\in \Pi(r,\mathcal{P}^{m-1})$ and any $P'\in\mathcal{P}^m$, we have
\begin{align}
  & | W^{\pi}(r,P') - W^{\pi}(r,\bar{P})|\nonumber
  \\ & \leq O\left(\sqrt{\frac{SAH^3\ln(K)\iota^2}{K_{m-2H}}}+  \frac{SAH^2\ln(K)\iota}{K_{m-2H}}+\sqrt{\frac{S^\frac{11}{2}A^4H^7\ln(K)\iota^{\frac{5}{2}}}{ K_{m-2H}k_1 }} +\sqrt{\frac{S^4A^{\frac{5}{2}}H^4\ln(K)\iota^\frac{3}{2}}{ K_{m-2H}\sqrt{k_1} }} \right).\nonumber
\end{align}

By definition, there exists $P',P''$ such that $U^{\pi}(\mathcal{P}^{m}) = W^{\pi}(r,P')$ and $L^{\pi}(\mathcal{P}^{m}) = W^{\pi}(r,P'')$. Therefore,  
\begin{align}
  & | U^{\pi}(\mathcal{P}^{m}) -L^{\pi}(\mathcal{P}^{m})|\nonumber\\ & \leq  O\left(\sqrt{\frac{SAH^3\ln(K)\iota^2}{K_{m-2H}}}+  \frac{SAH^2\ln(K)\iota}{K_{m-2H}}+\sqrt{\frac{S^\frac{11}{2}A^4H^7\ln(K)\iota^{\frac{5}{2}}}{ K_{m-2H}k_1 }} +\sqrt{\frac{S^4A^{\frac{5}{2}}H^4\ln(K)\iota^\frac{3}{2}}{ K_{m-2H}\sqrt{k_1} }} \right).\nonumber
\end{align}
Taking maximization over $\pi\in \Pi(r,\mathcal{P}^{m-1})$ we finish the proof.
\end{proof}

\subsubsection{Statement and Proof of Lemma~\ref{lemma:raw++}}

\begin{lemma}\label{lemma:raw++}
Given a dataset $\mathcal{D}$ and $k\geq 0$, let $\mathcal{D}'$ be the output by running Algorithm~\ref{alg:raw_exploration} with input $(r,\mathcal{D},k)$. 
Let $\{N_h(s,a,s')\}(\{N'_{h}(s,a,s')\})$ be the counts with respect to $\mathcal{D}(\mathcal{D}')$. Let $\mathcal{W} = \{(h,s,a,s')| N_h(s,a,s')\geq C_1H^2\iota\}$ and
$\mathcal{W}' = \{(h,s,a,s') | N'_{h}(s,a,s')\geq C_1H^2\iota \}$. Let $\bar{p}=\mathtt{clip}(P,\mathcal{W})$.
With probability $1-4S^2AH^2\delta$, it holds that 
\begin{align}
 &  \max_{\pi \in \Pi^*}\mathrm{Pr}_{\pi}\left[ \exists h'\in [h] , (h',s_{h'},a_{h'},s_{h'+1})\notin \mathcal{W}' \right] \leq \frac{36C_1S^2A^2H^3\iota  }{k} ,\label{eq:poo2}
\end{align}
where $\Pi^*$ is the set of optimal policies. Moreover, if $\mathcal{D}=\emptyset$ and $u=0$, with probability $1-4S^2AH^2\delta$ it holds that 
\begin{align}
N'_{h,s,a} \geq \frac{ck}{27SA}\max_{\pi}W^{\pi}(\textbf{1}_{h,s,a},P)-4\iota-\frac{36C_1SAH^3\iota}{27}\label{eq:poo1}
\end{align}
for any $1\leq h \leq H$ .

\end{lemma}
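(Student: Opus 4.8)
The plan is to derive both displayed bounds from a single coverage property of the exploration policies together with the expectation-to-empirical conversion of Lemma~\ref{lemma:ete}. Throughout I work on the good event $\mathcal{G}$, on which $\bar{P}=\mathtt{clip}(P,\mathcal{W})$ lies in the confidence region $\mathcal{P}$ recomputed at every iteration and $\mathcal{P}$ is \emph{tight} with respect to $\bar{P}$; in particular, in the second stage the optimal policies survive elimination, so every $\pi^{*}\in\Pi^{*}$ satisfies $\pi^{*}\in\Pi(r,\mathcal{P})$ (up to the $1/K^{3}$ slack already discussed for $\mathtt{Policy\,Search}$).

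First I would record the coverage guarantee of one iteration. At horizon $h$, each $\pi^{h,s,a}$ returned by $\mathtt{Policy\,Search}$ obeys $W^{\pi^{h,s,a}}(\mathbf{1}_{h,s,a},\bar{P})\ge c\max_{\pi\in\Pi(u,\mathcal{P})}W^{\pi}(\mathbf{1}_{h,s,a},\bar{P})-\epsilon$ by Lemma~\ref{lemma:cei11}; mixing these with weight $1/(SA)$ through $\mathtt{Sum}$ (Lemma~\ref{lemma:stp}) and then replacing the $h$-th action by the uniform action shows the executed policy $\pi^{h}$ covers every surviving policy, i.e.\ $W^{\pi^{h}}(\mathbf{1}_{h,s,a},\bar{P})\gtrsim\frac{1}{SA}\max_{\pi\in\Pi(u,\mathcal{P})}W^{\pi}(\mathbf{1}_{h,s,a},\bar{P})$ up to the negligible error $\epsilon$. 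Since the $k$ episodes of iteration $h$ are run under the \emph{true} model, the expected number of visits to $(h,s,a,s')$ in that iteration is exactly $kW^{\pi^{h}}(\mathbf{1}_{h,s,a},P)P_{h,s,a,s'}$, and Lemma~\ref{lemma:ete} converts these expectations into the realized counts $N_{h}(s,a,s')$ in both directions (empirical $\lesssim$ expected and expected $\lesssim$ empirical), the failure probability being controlled by a union bound over the $O(S^{2}AH)$ tuples, which produces the factor $4S^{2}AH^{2}\delta$.

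For the hit-unknown bound \eqref{eq:poo2} the crucial device is to decompose the event by the \emph{first} horizon $\tau$ at which $\pi^{*}$ traverses a tuple outside $\mathcal{W}'$. On $\{\tau=h'\}$ the trajectory reaches step $h'$ using only known transitions, so the probability of reaching $(h',s,a)$ there coincides with the clipped occupancy $W^{\pi^{*}}(\mathbf{1}_{h',s,a},\bar{P})$; this is precisely what lets me avoid a correction term and the circular, exponentially growing recursion that a naive union bound over $h'$ would create. I would then bound $W^{\pi^{*}}(\mathbf{1}_{h',s,a},\bar{P})\le\max_{\pi\in\Pi(r,\mathcal{P})}W^{\pi}(\mathbf{1}_{h',s,a},\bar{P})$ by the coverage guarantee against $W^{\pi^{h'}}(\mathbf{1}_{h',s,a},P)$, multiply by $P_{h',s,a,s'}$, and sum over the \emph{unknown} next states, using that each contributes $N_{h'}(s,a,s')<C_{1}H^{2}\iota$ together with the expected-$\lesssim$-empirical half of Lemma~\ref{lemma:ete}; summing the resulting per-horizon bound over $h'\in[H]$ gives \eqref{eq:poo2}. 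The count bound \eqref{eq:poo1} (the case $u=0$, where $\Pi(0,\mathcal{P})$ is the set of all policies) follows the same coverage-plus-concentration route: the only extra step is passing from the clipped maximum to the true maximum $\max_{\pi}W^{\pi}(\mathbf{1}_{h,s,a},P)$ on the right-hand side, which costs exactly the probability that a maximizing policy reaches $(h,s,a)$ through an unknown transition — an $O(S^{2}A^{2}H^{3}\iota/k)$ quantity by the same first-hit argument — absorbed into the additive $-4\iota-\frac{36C_{1}SAH^{3}\iota}{27}$ slack.

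The main obstacle is precisely this interaction between the clipped model used to plan and the true model under which data is collected: the guarantee of $\mathtt{Policy\,Search}$ is stated for $\bar{P}$, whereas the quantities we ultimately care about (the realized counts, and the optimal policy's visitation) live under $P$. Resolving this without an exponential-in-$H$ loss is the whole point of the first-hit decomposition, which decouples the two models by conditioning on an all-known prefix. A secondary but genuine difficulty is the bookkeeping of the $S$ and $A$ factors: one must aggregate the per-state-action coverage bound over the next states $s'$ (exploiting that $\sum_{s'}N_{h}(s,a,s')$ on the unknown set is at most $S\cdot C_{1}H^{2}\iota$) before invoking the coverage guarantee, rather than controlling each transition in isolation, and keep careful track of the mixture weight $1/(SA)$, the uniform-action factor, and the two factors of $1/3$ coming respectively from Lemma~\ref{lemma:ete} and from \emph{tightness}, so that the final constants match the stated $36C_{1}$ and $1/27$.
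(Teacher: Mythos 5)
Your proposal is correct and follows essentially the same route as the paper's own proof: a per-iteration coverage guarantee obtained from $\mathtt{Policy\,Search}$, the $\mathtt{Sum}$ mixture, and \emph{tightness} of the (recomputed) confidence region with respect to the current clipped model, combined with Lemma~\ref{lemma:ete} to pass between expected and realized counts, then the first-hit decomposition over horizons (conditioning on an all-known prefix, where the clipped and true models agree) to prove \eqref{eq:poo2}, and finally the clipped-to-true correction absorbed into the additive slack to get \eqref{eq:poo1}. The only cosmetic difference is that the paper tracks a growing family of clipped models $P^{h'}=\mathtt{clip}(P,\mathcal{W}^{h'})$ and survived sets $\Pi(r,\mathcal{P}^{h'})$ across iterations rather than a single $\bar{P}$, which your ``recomputed at every iteration'' remark already accommodates.
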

\begin{proof}


For $h'=1,2,...,H$, we denote $\mathcal{D}^{h'}$ as the value of $\mathcal{D}$ after the $h'$-th batch in Algorithm~\ref{alg:raw_exploration}. Similarly, we define $\{N^{h'}_{h}(s,a,s') \}$, $\{N^{h'}_h(s,a)\}$ and $\{\hat{p}^{h'}_{h,s,a}\}$ be respectively the value of $\{N_h(s,a,s')\}$ , $\{N_h(s,a)\}$ and $\{\hat{p}_{h,s,a}\}$ after the $h'$-th batch. Note that $\mathcal{P}^{h'}=\mathtt{CR}(\mathcal{D}^{h'})$ is the value of $\mathcal{P}$ after the $h'$-th batch. 

Define $\mathcal{W}^{h'}: = \{(h,s,a,s'): N^{h'}_h(s,a,s')\geq C_1 H^2\iota \}$ and $P^{h'}=\mathtt{clip}(P,\mathcal{W}^{h'})$.
Let $p^{h'}\in \mathcal{P}^{h'-1}$ be the transition model chosen at line 8 Algorithm~\ref{alg:raw_exploration}.

Using Lemma~\ref{lemma:bennet} and Lemma~\ref{lemma:ete}, with probability $1-4S^2AH^2\delta$, for any $(h,s,a,s')\in \mathcal{W}^{h'}$, it holds that
\begin{align}
    \left|P^{h'}_{h,s,a,s'}- \hat{p}_{h,s,a,s'}^{h'}  \right| &\leq
    \sqrt{\frac{4P^{h'}_{h,s,a,s'}\iota}{N^{h'}_h(s,a)}}+\frac{\iota}{3N^{h'}_h(s,a)}\nonumber 
	    \\ & \leq \frac{1}{3H}P^{h'}_{h,s,a,s'},\nonumber
\end{align}
where in the last inequality, we use Lemma~\ref{lemma:ete} to get that $N_h^{h'}P_{h,s,a,s'}^{h'}\geq \frac{1}{3}N_h^{h'}(s,a,s')-\iota \geq 64H^2\iota$ with probability $1-\delta$.

It then holds that $P^{h'}\in \mathcal{P}^{h'}$ for each $h'$. Moreover, noting that for any $p\in \mathcal{P}^{h'}$ and $(h,s,a,s')\in \mathcal{W}^{h'}$, with similar computation it holds that
\begin{align}
    \left|p_{h,s,a,s'}-P^{h'}_{h,s,a,s'} \right|& \leq  \left|p_{h,s,a,s'}-\hat{p}^{h'}_{h,s,a,s'} \right| +  \left|\hat{p}_{h,s,a,s'}-P^{h'}_{h,s,a,s'} \right| \nonumber
    \\ & \leq \frac{1}{3H}P^{h'}_{h,s,a,s'}+\frac{1}{3H}\hat{p}^{h'}_{h,s,a,s'}\nonumber
    \\ & \leq \left(\frac{2}{3H}+\frac{1}{9H^2}\right) P^{h'}_{h,s,a,s'}\nonumber 
\end{align}
As a result, $\mathcal{P}^{h'}$ is \emph{tight} with respect to $P^{h'}$.

Fix $h\in [H]$.
Recall that 
\begin{align}
 &     \pi^{h,s,a}= \mathtt{Policy\,Search}(\textbf{1}_{h,s,a},\mathcal{P}^{h-1});\nonumber
 \\ & \{\tilde{\pi}^h,p^{h}\} = \mathtt{Sum}\left( \left\{\frac{1}{SA},\pi_{h,s,a},p^{h} \right\}_{h,s,a}      \right).
\end{align}
Recall that, for the first $h-1$ steps  $\pi^h$ is the policy which is the same as $\tilde{\pi}^h$, and for the left $H-h+1$ steps, $\pi^h$ is the uniformly random policy. 

We first show that the $h$-th layer is well explored.
By the property of $\mathtt{Policy\,Search}$ and $\mathtt{Sum}$ (see Lemma~\ref{lemma:cei11} and \ref{lemma:stp}), there exists a constant $c>0$ such that\footnote{We omit $\epsilon$ for convenience. By setting $\epsilon = 1/(SAHK)^{10}$, it is easy to verify the error only leads to a lower order term.}
\begin{align}
   & W^{\pi^{h,s,a}}(\textbf{1}_{h,s,a},P^{h-1})\geq c\max_{\pi\in \Pi(r,\mathcal{P}^{h-1})}W^{\pi}(\textbf{1}_{h,s,a},P^{h-1}) \nonumber
   \\ & W^{\pi^h}(\textbf{1}_{h,s,a},p^{h}) = \frac{1}{SA}W^{\pi_{h,s,a}}(\textbf{1}_{h,s,a},p^{h}),\forall (s,a)\in \mathcal{S}\times \mathcal{A}.\nonumber
\end{align}

Noting that $p^{h}\in \mathcal{P}^{h-1}$ and $\mathcal{P}^{h-1}$ is \emph{tight} with respect to $P^{h-1}$, by Lemma~\ref{lemma:tight} we obtain that 
\begin{align}
    W^{\pi^h}(\textbf{1}_{h,s,a},P^{h-1})\geq \frac{c}{9SA}\max_{\pi\in \Pi(r,\mathcal{P}^{h-1})}W^{\pi}(\textbf{1}_{h,s,a},P^{h-1})\label{eq:xxx1}
\end{align}
for any $(s,a)\in \mathcal{S}\times \mathcal{A}$.

Using Lemma~\ref{lemma:ete}, with probability $1-4SA\delta$, the count of $(h,s,a)$ in the $h$-th batch is at least $\frac{ck}{27SA}\cdot \max_{\pi\in \Pi(r,\mathcal{P}^{h-1})}W^{\pi}(\textbf{1}_{h,s,a},P^{h-1}) -\iota$. As a result, we have that
\begin{align}
    N^{h}_{h}(s,a)\geq \frac{ck}{27SA}\cdot \max_{\pi\in \Pi(r,\mathcal{P}^{h-1})}W^{\pi}(\textbf{1}_{h,s,a},P^{h-1}) -4\iota \label{eq:lbnn}
\end{align}
for any $(s,a)\in\mathcal{S}\times \mathcal{A}$.

In the meantime,   if $(h,s,a,s')\notin \mathcal{W}^{h}$ , we have that $N^{h}_{h}(s,a,s')\leq C_1H^2\iota$. Using Lemma~\ref{lemma:ete}, with probability $1-\delta$, we have that
\begin{align}
    kW^{\pi^h}(\textbf{1}_{h,s,a},P^{h-1})P_{h,s,a,s'}\leq 3C_1H^2\iota + 4\iota.\label{eq:xxx2}
\end{align}
Combining \eqref{eq:xxx1} and \eqref{eq:xxx2}, we have that
\begin{align}
    \max_{\pi\in \Pi(r,\mathcal{P}^{h-1})}\mathbb{P}_{\pi,P^{h-1}}\left[ (s_{h},a_{h},s_{h+1})=(s,a,s') \right]  & =\max_{\pi\in \Pi(r,\mathcal{P}^{h-1})}W^{\pi}(\textbf{1}_{h,s,a},P^{h-1}) P_{h,s,a,s'} \nonumber
    \\ & \leq \frac{9SA}{c}W^{\pi^h}(\textbf{1}_{h'',s,a},P^{h-1})P_{h'',s,a,s'}\nonumber
    \\ & \leq \frac{9SA}{c}\cdot \frac{3C_1H^2\iota + 4\iota}{k}\nonumber
    \\ & \leq \frac{36C_1SAH^2\iota }{k}.\label{eq:boundxx1}
\end{align}
With an union bound over all $(h,s,a,s')\notin \mathcal{W}^h$, we have that
\begin{align}
     \max_{\pi\in \Pi(r,\mathcal{P}^{h-1})}\mathbb{P}_{\pi,P^{h-1}}\left[ (h,s_h,a_h,s_{h+1})\notin \mathcal{W}^h \right]  \leq \frac{36C_1S^2A^2H^2\iota}{k}.\label{eq:boundxx2}
\end{align}

Note that $\mathcal{W}^{h}$ is non-decreasing in $h$. For any  $\pi\in \cap_{h=1}^{h'} \Pi(r,\mathcal{P}^{h})$, it holds that
\begin{align}
&\mathbb{P}_{\pi,P}\left[\exists h'\leq H, (h',s_{h'},a_{h'},s_{h'+1})\notin \mathcal{W}^{H}  \right] \nonumber
\\&=\mathbb{P}_{\pi,P^{H}}\left[\exists h'\leq H, (h',s_{h'},a_{h'},s_{h'+1})\notin \mathcal{W}^{H}  \right] \nonumber
\\ & = \sum_{h'=1}^{H}\mathbb{P}_{\pi,P^{H}}\left[(h',s_{h'},a_{h'},s_{h'+1})\in \mathcal{W}^{H},\forall 1\leq h'<h, (h,s_{h},a_{h},s_{h+1})\notin \mathcal{W}^{H}  \right] \nonumber
\\ & \leq \sum_{h=1}^{H}
\max_{\pi\in \Pi(r,\mathcal{P}^{h-1})}\mathbb{P}_{\pi,P^{h-1}}\left[ (h,s_{h},a_{h},s_{h+1})\notin \mathcal{W}^{h} \right]  \nonumber
\\ & \leq \frac{36C_1S^2A^2H^3\iota }{k}.\label{eq:pfpoo1}
\end{align}

Recall that $\Pi(r,\mathcal{P}): =\{ \pi | U^{\pi}(r+\textbf{1}_{z},\mathcal{P})\geq \max_{\pi}L^{\pi}(r,\mathcal{P})         \}$. 
Because $P^h\in \mathcal{P}^h$ for any $h$, for any optimal policy $\pi^*$ and any policy $\pi'$, we have that  $U^{\pi^*}(r+\textbf{1}_{z},\mathcal{P})\geq V^*_1(s_1)\geq W^{\pi'}(r,P^h) \geq  L^{\pi'}(r,\mathcal{P})$. Therefore, $\pi^*\in \Pi(\mathcal{P}^{h})$ for any $1\leq h \leq H$. By \eqref{eq:pfpoo1}, \eqref{eq:poo2} is proven.

In the case $u=0$, we have that $\Pi(u,\mathcal{P}^{h})=\overline{\Pi}$ for $1\leq h \leq H$,
where $\overline{\Pi}$ is the set of all possible policies. By \eqref{eq:lbnn}, we have that
\begin{align}
   & N_{h}^{h}(s,a) 
    \\ & \geq \frac{ck}{27SA}\max_{\pi}W^{\pi}(\textbf{1}_{h,s,a},P^{h})-4\iota 
    \\ & \geq \frac{ck}{27SA}\max_{\pi}W^{\pi}(\textbf{1}_{h,s,a},P^)-4\iota - \frac{ck}{27SA}\max_{\pi}\mathbb{P}_{\pi,P}\left[\exists h'\in [H], (h',s_{h'},a_{h'},s_{h'+1})\notin \mathcal{W}^{H} \right]\nonumber
    \\ & \geq \frac{ck}{27SA}\max_{\pi}W^{\pi}(\textbf{1}_{h,s,a},P)-4\iota-\frac{36C_1SAH^3\iota}{27}.\nonumber
\end{align}
The proof is completed by noting that $N'_h(s,a)\geq N_h^h(s,a)$.

\end{proof}

\subsection{Statement and Proof of Lemma~\ref{lemma:tight}}

\begin{lemma}\label{lemma:tight}
Suppose $\mathcal{P}$ is \emph{tight} with respect to $p$. Then we have that
\begin{align}
3 W^{\pi}(\textbf{1}_{h,s,a},p)\geq     W^{\pi}(\textbf{1}_{h,s,a},p')\geq \frac{1}{3}W^{\pi}(\textbf{1}_{h,s,a},p) \label{eq:llrr}
\end{align}
for any $p'\in \mathcal{P}$, policy $\pi$ and $(h,s,a)$.
\end{lemma}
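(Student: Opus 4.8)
We need to prove that if $\mathcal{P}$ is \emph{tight} with respect to $p$, then for any $p' \in \mathcal{P}$, policy $\pi$, and triple $(h,s,a)$,
$$\tfrac{1}{3}W^{\pi}(\textbf{1}_{h,s,a},p) \leq W^{\pi}(\textbf{1}_{h,s,a},p') \leq 3 W^{\pi}(\textbf{1}_{h,s,a},p).$$
Recall that $W^{\pi}(\textbf{1}_{h,s,a},q) = d_h^{\pi,q}(s,a)$ is exactly the probability that $\pi$ visits $(s,a)$ at step $h$ under transition model $q$, since $\textbf{1}_{h,s,a}$ is the indicator reward. So this lemma says that the occupancy measure at layer $h$ is preserved up to a constant factor when the transition model is perturbed within a tight confidence region.

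The plan is to express the occupancy probability as a product of transition probabilities along the path and exploit the multiplicative closeness guaranteed by tightness condition~(\romannumeral2), namely $e^{-1/H}p_{h,s,a,s'} \leq p'_{h,s,a,s'} \leq e^{1/H}p_{h,s,a,s'}$. First I would write $W^{\pi}(\textbf{1}_{h,s,a},p') = \sum_{\tau} \Pr_{\pi,p'}[\tau]$, where $\tau$ ranges over partial trajectories $(s_1,a_1,\ldots,s_{h-1},a_{h-1},s_h=s,a_h=a)$ reaching $(s,a)$ at step $h$. The probability of each such trajectory factors as $\Pr_{\pi,p'}[\tau] = \left(\prod_{h'=1}^{h-1}\pi_{h'}(a_{h'}|s_{h'})\right)\left(\prod_{h'=1}^{h-1}p'_{h',s_{h'},a_{h'},s_{h'+1}}\right)\pi_h(a|s)$, since only the transition factors differ between $p$ and $p'$.

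**Main step: path-probability comparison.** For a single trajectory $\tau$, the policy factors are identical under both models, so the ratio $\Pr_{\pi,p'}[\tau]/\Pr_{\pi,p}[\tau]$ equals the product of at most $h-1 \leq H-1$ transition ratios. By tightness condition~(\romannumeral2), each ratio lies in $[e^{-1/H}, e^{1/H}]$, so the whole product lies in $[e^{-(H-1)/H}, e^{(H-1)/H}] \subseteq [e^{-1}, e]$. Therefore
$$e^{-1}\Pr_{\pi,p}[\tau] \leq \Pr_{\pi,p'}[\tau] \leq e\,\Pr_{\pi,p}[\tau]$$
holds termwise. Summing over all trajectories $\tau$ reaching $(s,a)$ at step $h$ preserves the inequalities, giving
$$e^{-1}W^{\pi}(\textbf{1}_{h,s,a},p) \leq W^{\pi}(\textbf{1}_{h,s,a},p') \leq e\,W^{\pi}(\textbf{1}_{h,s,a},p).$$
Since $e < 3$ and $e^{-1} > 1/3$, this immediately yields the claimed bound \eqref{eq:llrr}.

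**The main obstacle.** The only real subtlety is the treatment of the clipped virtual state $z$, which appears in tight confidence regions built via $\mathtt{clip}$. I would note that condition~(\romannumeral2) is stated for all coordinates $(h,s,a,s')$, and the clipping redistributes mass from infrequent tuples to $z$ consistently across $p$ and $p'$ (both are clipped with the same known set $\mathcal{W}^*$); alternatively, since reaching $z$ contributes $0$ to $W^{\pi}(\textbf{1}_{h,s,a},\cdot)$ for $(s,a) \neq (z,\cdot)$, the trajectory sum is unaffected by the $z$-coordinate behavior as long as the per-step multiplicative bounds on the relevant $s'$-coordinates hold, which tightness guarantees. Handling this bookkeeping carefully—ensuring that the product telescopes over at most $H-1$ steps and that the $z$-absorbing structure does not break the factorization—is where care is needed, but it is routine once the trajectory-decomposition viewpoint is adopted.
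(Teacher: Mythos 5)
Your proposal is correct and follows essentially the same route as the paper's proof: a trajectory-wise comparison in which the policy factors cancel and each of the at most $H$ transition factors contributes a ratio in $[e^{-1/H},e^{1/H}]$ by tightness condition~(\romannumeral2), so the path probabilities (and hence their sums) differ by at most a factor $e\leq 3$. The paper handles the virtual state $z$ exactly as you suggest, by restricting to trajectories with $s_h\neq z$, which is sufficient since any trajectory reaching $(h,s,a)$ with $s\neq z$ never passes through the absorbing state $z$.
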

\begin{proof}
For each trajectory $L=(s_1,a_1,...,s_H,a_H,s_{H+1})$ such that $s_h \neq z$ for $1\leq h \leq H+1$, we have that
\begin{align}
    \mathbb{P}_{\pi,p}[L] = \pi_{h=1}^{H}\pi_h(a_h|s_h)p_{h,s_h,a_h,s_{h+1}}\geq e^{-\frac{H}{H}}\mathbb{P}_{\pi,p}[L] = \pi_{h=1}^{H}\pi_h(a_h|s_h)p'_{h,s_h,a_h,s_{h+1}} \geq \frac{1}{3} \mathbb{P}_{\pi,p'}[L].\nonumber
\end{align}
So the left side of \eqref{eq:llrr} is proven. By reversing $p$ and $p'$ the right side follows.

\end{proof}

\section{ Other Missing Proofs}

	\subsection{Proof of Lemma~\ref{lemma:design}}\label{app:pflemma1}
	\textbf{Lemma~\ref{lemma:design} (restated)}\emph{
	Let $d>0$ be an integer. 
Let $\mathcal{X}\subset (\Delta^d)^{m}$. Then there exists a distribution $\mathcal{D}$ over $\mathcal{X}$, such that 
\begin{align}
\max_{x=\{x_i\}_{i=1}^{dm}\in \mathcal{X}}\sum_{i=1}^{dm}\frac{x_i}{y_i} =  md,\nonumber
\end{align}
where $y =\{y_i\}_{i=1}^{dm}= \mathbb{E}_{x\sim \mathcal{D}}[x]$. Moreover, if $\mathcal{X}$ has a boundary set $\partial \mathcal{X}$ with finite cardinality, we can find $\mathcal{D}$ in $\mathrm{poly}(|\partial \mathcal{X}|)$ time.
	}
\begin{proof}
Note that $\mathcal{X}$ is always bounded. Without loss of generality, we assume $\mathcal{X}$ is a discrete set
 with $\mathcal{X}=\{x^1,x^2,...,x^L\}$ where $x^i = \{x^i_n\}_{n=1}^{dm}$  For $\lambda = \{\lambda_1,\lambda_2,...,\lambda_{L}\}\in \Delta^{L}$, we define $E(\lambda)$ by 
 \begin{align}
     E(\lambda): = \Pi_{i=1}^{dm}\left( \sum_{j=1}^L\lambda_j x^j_{i} \right).\nonumber 
 \end{align}
 Then $E(\lambda)$ is bounded and $\Delta^{L}$ is compact. Consider to maximize $\ln\left(E(\lambda)\right)$ over $\lambda \in \Delta^{L}$. It's not hard to verify that $\ln\left(E(\lambda)\right)$  is concave in $\lambda$, so it is efficient to maximize it by gradient ascent algorithms. 
 Let $\lambda^*$ be the optimal solution.
 By the KKT condition, we have that for any $j',j''$ such that $\lambda^*_{j'},\lambda^*_{j''}\in (0,1)$, it holds that
 \begin{align}
    w:= \sum_{i=1}^{dm} \frac{x_i^{j'}}{\sum_{j=1}\lambda_j x_i^j} =  \sum_{i=1}^{dm} \frac{x_i^{j''}}{\sum_{j=1}\lambda_j x_i^j}.\nonumber
 \end{align}
 Therefore, if for any $\lambda^*_j\neq 1$  for any $j$, we have that
 \begin{align}
   w=  \sum_{j=1}^L \lambda_j w = \sum_{i=1}^{dm} \frac{\sum_{j=1}\lambda_j x_i^j}{ \sum_{j=1}\lambda_j x_i^j}=dm. \nonumber
 \end{align}
 Then $\lambda^*$ is the desired solution. Otherwise, suppose $\lambda^*_1=1$. Then we have that
 \begin{align}
    dm= \sum_{i=1}^{dm} \frac{x_i^{1}}{x_i^1} \geq  \sum_{i=1}^{dm} \frac{x_i^{j'}}{x_i^1}\nonumber 
 \end{align}
 for any $j'\geq 2$. Then $\lambda^*$ is also the desired solution. The proof is completed.
\end{proof}

\subsection{Proof of Lemma~\ref{lemma:stp}}\label{app:pflemma2}
\textbf{Lemma~\ref{lemma:stp}} (Restatement) \emph{
	Let $\mathcal{P} =\otimes_{(h,s,a)} \mathcal{P}_{h,s,a}$ be a set of transition models such that $\mathcal{P}_{h,s,a}\subset \Delta^{S}$ is convex for any $(h,s,a)$. Let
	$\{(\pi^i,P^i)\}_{i=1}^n$ be a sequence of policy-transition pairs such that $P^i\in \mathcal{C}$.
	For any $\{\lambda_{i}\}_{i=1}^n$ such that $\lambda_i\geq 0$ for $i\geq 1$ and $\sum_i \lambda_i = 1$, there exists a policy $\pi$ and $P\in \mathcal{P}$, satisfying that
	\begin{align}
	W^{\pi}(\textbf{1}_{h,s,a}, P)= \sum_i \lambda_i W^{\pi^i}(\textbf{1}_{h,s,a}, P^i)\label{eq:stp11}
	\end{align}
	for any $(h,s,a)\in [H]\times \mathcal{S}\times\mathcal{A}$. Furthermore, the time complexity to find  $\{\pi,P\}$ could be bounded by $O(nS^3A^2H^2)$.
}

\begin{proof}
By induction on $n$, it suffices to prove for the case $n=2$. 
	Our target is to find $(\pi,p)$ such that
	\begin{align}
	W^{\pi}(\textbf{1}_{h,s,a},P)  = \lambda_1 W^{\pi^1}(\textbf{1}_{h,s,a}, P^{1})+(1-\lambda_1) W^{\pi^2}(\textbf{1}_{h,s,a},P^2)\label{eq:tec1}
	\end{align}
	holds for any $(h,s,a)\in [H] \times\mathcal{S}\times \mathcal{A}$. We will prove this by induction on $h$. For the case $h=1$, since the initial distribution is fixed, we finish by letting 
	\begin{align}
	\pi_{1}(a|s) = \lambda_1\pi^1_{1}(a|s)+ (1-\lambda_1)\pi^2_{1}(a|s)
	\end{align}
	for all $(s,a)\in \mathcal{S}\times \mathcal{A}$
	
	Suppose \eqref{eq:tec1} holds for any $1\leq h'\leq h$ and any $(s,a)\in\mathcal{S}\times\mathcal{A}$. Then $\lambda_{h,s,a} = \frac{\lambda_1 W^{\pi^1}(\textbf{1}_{h,s,a},P^1)  }{W^{\pi}(\textbf{1}_{h,s,a},P) }$ is well-defined. We set
	\begin{align}
	P_{h,s,a} = \lambda_{h,s,a}P^1_{h,s,a}+(1-\lambda_{h,s,a})P^2_{h,s,a}\nonumber
	\end{align}
	for any $(s,a)]\in \mathcal{S}\times\mathcal{A}$. By the inductive assumption
	\begin{align}
	W^{\pi}(\textbf{1}_{h,s,a},P)  =\lambda_1 W^{\pi^1}(\textbf{1}_{h,s,a}, P^{1})+(1-\lambda_1)W^{\pi^2}(\textbf{1}_{h,s,a}, P^{2}),
	\end{align}
	we have that for any $(s,a)$
	\begin{align}
	P_{h,s,a} W^{\pi}(\textbf{1}_{h,s,a},P)=\lambda_1 W^{\pi^1}(\textbf{1}_{h,s,a}, P^{1}) P^1_{h,s,a} +(1-\lambda_1)W^{\pi^2}(\textbf{1}_{h,s,a}, P^{2}) P^2_{h,s,a}.\nonumber
	\end{align}
	
	We then have that 
	\begin{align}
	& \lambda_1\sum_{s',a'}W^{\pi^1}(\textbf{1}_{h,s',a'},P^1)P^1_{h,s',a',s}+(1-\lambda_1)\sum_{s',a'}W^{\pi^2}(\textbf{1}_{h,s',a'},P^2)P^2_{h,s,',a',s}\nonumber
	\\ & = \sum_{s',a'}\lambda_{h,s',a'}W^{\pi}(\textbf{1}_{h,s',a'},P)P^1_{h,s',a',s}+ \sum_{s',a'}(1-\lambda_{h,s',a'})W^{\pi}(\textbf{1}_{h,s',a'},P)P^2_{h,s',a',s}
	\\ & =\sum_{s',a'}W^{\pi}(\textbf{1}_{h,s',a'},P)P_{h,s',a',s},
	\end{align}	
	which implies that
	\begin{align}
	W^{\pi}(\textbf{1}_{h+1,s},P) =  \lambda_1 W^{\pi^1}(\textbf{1}_{h+1,s}P^1) +(1-\lambda_1)W^{\pi^2}(\textbf{1}_{h+1,s}P^2)
	\end{align}
	for any $s\in \mathcal{S}$, where the reward function $\textbf{1}_{h+1,s}=\sum_{a}\textbf{1}_{h+1,s,a}$.
	Let 
	\begin{align}
	\pi_{h+1}(a|s) = \frac{\lambda_1 W^{\pi^1}(\textbf{1}_{h+1,s},P^1)\pi^1_{h+1}(a|s) + (1-\lambda_1) W^{\pi^2}(\textbf{1}_{h+1,s},P^2)\pi^2_{h+1}(a|s)  }  {W^{\pi}(\textbf{1}_{h+1,s},P)}.
	\end{align}
	Then it is easy to verify that 
	\begin{align}
	W^{\pi}(\textbf{1}_{h+1,s,a},P) = W^{\pi}(\textbf{1}_{h+1,s},P)\pi_{h+1}(a|s) = \lambda_1 W^{\pi^1}(\textbf{1}_{h+1,s,a},P^1)+ (1-\lambda_1)W^{\pi^2}(\textbf{1}_{h+1,s,a},P^2).\nonumber
	\end{align} 
	Also note that the process above costs at most $O(S^3A^2H^2)$ time, so the total computational cost is bounded by $O(nS^3A^2H^2)$. 
	The proof is completed.
\end{proof}

\end{document}